\theoremstyle{plain}
\theoremstyle{remark}
\theoremstyle{plain}
\newtheorem{theorem}{Theorem}[section]
\newtheorem*{theorem*}{Theorem}
\newtheorem{definition}[theorem]{Definition}
\newtheorem{proposition}[theorem]{Proposition}
\newtheorem*{definition*}{Definition}
\newtheorem*{counterex*}{Counterexample}
\newtheorem*{example*}{Example}
\newtheorem{lemma}[theorem]{Lemma}
\newtheorem{example}{Example}
\newcommand{\R}{\mathbb{R}} 
\renewcommand{\max}{\text{max}}
\renewcommand{\min}{\text{min}}
\DeclareMathOperator{\Rank}{Rank}
\DeclareMathOperator*{\argmin}{arg\,min}
\colorlet{red}{black}
\title{Saddle-to-saddle dynamics in deep ReLU networks: Low-rank bias in the first saddle escape}
\author{
  Ioannis Bantzis \thanks{Ioannis Bantzis was supported by a scholarship for graduate studies from the Onassis Foundation.} \\
  EPFL\\
  Lausanne, Switzerland \\
  \texttt{ioannis.bantzis@alumni.epfl.ch} \\
  \And
  James B. Simon \\ 
  UC Berkeley and Imbue \\
  Berkeley and San Francisco, USA \\
  \texttt{james.simon@berkeley.edu} \\
  \And
  Arthur Jacot \\
  Courant Institute, NYU\\
  New York, USA \\
  \texttt{arthur.jacot@nyu.edu} \\
}
\providecommand{\remarkname}{Remark}
\providecommand{\theoremname}{Theorem}
\begin{document}

\maketitle

\begin{abstract}
    When a deep ReLU network is initialized with small weights, gradient descent (GD) is at first dominated by the saddle at the origin in parameter space. We study the so-called escape directions along which GD leaves the origin, which play a similar role as the eigenvectors of the Hessian for strict saddles. We show that the optimal escape direction features a \textit{low-rank bias} in its deeper layers: the first singular value of the $\ell$-th layer weight matrix is at least $\ell^{\frac{1}{4}}$ larger than any other singular value. We also prove a number of related results about these escape directions. We suggest that deep ReLU networks exhibit saddle-to-saddle dynamics, with GD visiting a sequence of saddles with increasing bottleneck rank \citep{jacot_2022_BN_rank}.
\end{abstract}

\section{Introduction}

In spite of the groundbreaking success of {\color{red} Deep Neural Networks (DNNs)}, the training dynamics of GD in these models remain ill-understood, especially when the number of hidden layers is large.
A significant step in our understanding is the (relatively recent) realization that there exist multiple regimes of training in large neural networks: a kernel or lazy regime, where DNNs simply implement kernel methods w.r.t. the {\color{red} Neural Tangent Kernel (NTK)} \citep{jacot2018neural,Du2019,Allen-Zhu2018}, and an active (or rich) regime characterized by the presence of feature learning \citep{Chizat2018,Rotskoff2018,chizat2018note} and some form of sparsity such as a low-rank bias \citep{li2020towards, gunasekar_2017_low_rank,arora_2019_matrix_factorization}.

The kernel regime is significantly simpler than the active one because the dynamics can be linearized around the initialization \citep{jacot2018neural,lee2019wide}, and the loss is approximately quadratic/convex in the region traversed by GD \citep{jacot2019asymptotic} (it also satisfies the PL inequality \citep{liu2020_NTK-PL-inequ}). This makes it possible to prove strong convergence guarantees \citep{Du2019,Allen-Zhu2018} and apply generalization bounds from the kernel methods literature almost directly \citep{exact_arora2019,bordelon-2020}.
Our understanding of the kernel regime is essentially complete, but some functions are unlearnable in the kernel regime yet learnable in the active regime \citep{bach2017_F1_norm,ghorbani2020_shallow_net_vs_kernel}.

There are arguably many active regimes corresponding to different ways to leave the kernel regime, including small weight initialization \citep{woodworth_2020_diagnet_bias}, large learning rate \citep{lewkowycz_2020_large_lr,damian2022_edge_stability_simple_model}, large noise in training \citep{smith_2021_SGD_bias,Pesme_2021_SGD_bias_diag_nets,vivien_2022_label_noise_sparsity,wang_2023_bias_SGD_L2}, late training with the cross-entropy loss \citep{Ji_2018_directional,chizat_2020_implicit_bias}, and weight decay \citep{weinan_2019_barron,Ongie_2020_repres_bounded_norm_shallow_ReLU_net,jacot_2022_BN_rank}.

We will focus on the effect of initialization scale, where a phase change from kernel regime to active regime occurs as the variance of the initial weights decays towards zero. Here again we can distinguish two active regimes \citep{luo_2021_condensed}: the mean-field regime which lies right at transition between regimes \citep{Chizat2018,Rotskoff2018,Mei2018}, and the saddle-to-saddle regime \citep{saxe_2014_exact,jacot-2021-DLN-Saddle,Pesme_2023_Saddle_diag_nets, boursier_2022_orthogonal_inputs_saddle} for even smaller initialization.

The mean-field limit was first described for shallow networks \citep{Chizat2018,Rotskoff2018,Mei2018}, and has more recently been extended to the deep case \citep{araujo2019meandeep,bordelon2022deep_MF}. A limitation of these approaches is that the limiting dynamics remain complex, especially in the deep case where they are described by algorithms that are not only very costly in the worst case \citep{bordelon2022deep_MF,yang2020feature}, but also difficult to interpret and reason about. This high complexity could be explained by the fact that the mean-field limit is critical, i.e. it lies exactly at the transition between kernel and active, and therefore it must capture the complexity of both of those regimes, as well as of the whole spectrum of intermediate dynamics. 

\subsection{Saddle-to-Saddle dynamics}
This motivates the study of the saddle-to-saddle regime for even smaller initializations. As the name suggests, this regime is characterized by GD visiting a number of saddles before reaching a global minimizer. Roughly speaking, because of the small initializations, GD starts in the vicinity of the saddle which lies at the origin in parameter space and remains stuck there for a number of steps until it finds an escape direction, leading to a sudden drop in the loss. This escape direction exhibits a form of approximate sparsity (amongst other properties) that is preserved by GD. At this point, the level of sparsity can either be enough to fit the training data in which case the loss will drop to zero and training will stop, but if the network is `too sparse' to fit the data, GD will approach another saddle at a lower cost (which is locally optimal given the sparsity) before escaping along a less sparse escape direction. GD can visit a sequence of saddles before reaching a final network that fits the data while being as sparse as possible. This has been described as performing a greedy algorithm \citep{li2020towards} where one tries to find the best data-fit with a sparsity constraint that is gradually weakened until the training data can be fitted.

Such incremental learning dynamics were first observed in diagonal linear networks \citep{saxe_2014_exact,Saxe_2019_independent_diag,gidel_2019_independent_diag} (and by extension to linear {\color{red} Convolutional Neural Networks}, which are diagonal nets in Fourier space), before being extended to linear fully-connected networks \citep{arora_2019_matrix_factorization,li2020towards,jacot-2021-DLN-Saddle,tu_2024_mixed_dynamics_DLN,kunin_2024_linear_net_unbalanced}. These result in coordinate sparsity of the learned vector for diagonal networks and rank sparsity of the learned matrix for fully-connected networks.

For nonlinear networks, the focus has been mainly on shallow networks, where a condensation effect is observed, wherein groups of neurons end up having the same activations (up to scaling). Roughly speaking, in the first escape direction, a first group of hidden neurons comes out first, all with the same activation (up to scaling), in the subsequent saddles, new groups can emerge or an existing group can split in two \citep{Chizat2018} (though sometimes they may fail to split leading to problems \citep{boursier_2024_condensation_issue}). This condensation effect leads to a form of sparsity, since each group then behaves as a single neuron, thus reducing the effective number of neurons \citep{luo_2021_condensed,simsek-2021-symmetries-loss}. These dynamics could be understood as implicitly implementing a Frank-Wolfe algorithm \citep{bach2017_F1_norm}: alternating between finding new neurons to 'add' to the mix, and then tuning the mixing weights to get the best possible fit \citep{kunin-2025_alternating_GF}.

To our knowledge, all prior theoretical analysis of saddle-to-saddle dynamics in deep nonlinear networks rely on an equivalence to deep linear networks, which can arises with differentiable nonlinearities (e.g. arctan) allowing for a Taylor approximation of the origin \citep{bai_2022_depth_embedding}, or in specific settings where the ReLUs do not change signs \citep{zhang_2025_bias_free_ReLU_linear_eq}. This leads to a low-rank bias, where all layers are rank-one in the first escape direction.
Saddle-to-saddle dynamics with multiple {\color{red} plateaus/saddles} have been observed empirically in deep $\mathrm{ReLU}$ networks trained on supervised \citep{atanasov:2024-ultrarich} and self-supervised \citep{simon:2023-stepwise-ssl} tasks, and these empirics motivate our present theoretical study.

\subsection{Bottleneck Rank  Incremental learning}
Surprisingly, we show a more complex rank sparsity structure in deep ReLU networks: the majority of layers are rank-one (or approximately so), with possibly a few high-rank layers at the beginning of the network, in contrast to linear nets, shallow ReLU networks, and deep nets with differentiable nonlinearity where all layers are rank-one in the first escape.

This fits into the bottleneck structure and related bottleneck rank (BN-rank) observed in large depth ReLU networks trained with weight decay \citep{jacot_2022_BN_rank,jacot_2023_bottleneck2,wen_2024_BN_CNN,jacot_2024_hamiltonian_feature_ResNet}, where almost all layers share the same low rank, with a few higher rank layers located close to the input and output layers. Additionally, in the middle low-rank layers ("inside the bottleneck"), the preactivations are approximately non-negative, so that the ReLU approximates the identity.

The bottleneck rank $\mathrm{Rank}_{BN}(f)$ is a notion of rank for finite piecewise linear functions $f$, defined as the minimal integer $k^*$ such that $f$ can be decomposed $f = h \circ g$ with intermediate dimension $k^*$ \citep{jacot_2022_BN_rank}. For large depths, it is optimal in the sense of minimizing the parameter norm to represent $f$ with a bottleneck structure, where the first few high-dim. layers represent $g$, followed by many rank $k^*$ layers representing the identity on the dimension $k^*$ intermediate representation, before using the last few layers to represent $h$.

Our results imply that the first escape direction of deep ReLU networks has BN-rank one, because almost all layers are approximately rank-one except a few high rank layers in the beginning. This is a "half" bottleneck structure, since it lacks high dimensional layers before the outputs, but it still fits within the BN-rank theory, suggesting that the BN-rank is the correct notion of sparsity in deep ReLU networks (rather than the traditional notion of rank).

We conjecture that deep ReLU networks exhibit similar saddle-to-saddle dynamics as linear networks, with the distinction that it is the BN-rank that gradually increases rather than the traditional rank.

\subsection{Contributions}
In this paper, we give a description of the saddle at the origin in deep ReLU networks, and the possible escape directions that GD could take as it escapes this first saddle. As in \citep{jacot-2021-DLN-Saddle}, each escape direction can be assigned an escape speed, and we show that the optimal escape speed is non-decreasing in depth (Proposition \ref{prop:opt_escape_increasing}).

We then prove in Theorem \ref{thm:approximate_rank_at_opt_escape} that the optimal escape directions feature a low-rank bias that gets stronger in deeper layers (i.e. layers closer to the output layer). More precisely the weight matrix $W_\ell$ and activations $Z_\ell^\sigma$ over the training set for $\ell=1,\dots,L$ are $\ell^{-\frac{1}{4}}$-approximately rank $1$ in the sense that their second singular value is $O(\ell^{-\frac{1}{4}})$ times smaller than the first. Furthermore, deeper layers are also more linear, i.e. the effect of the ReLU becomes weaker.

Finally, we provide in Section \ref{sec:counterexample} an example of a simple dataset whose optimal escape direction has the following structure: the first weight matrix is rank two, followed by rank-one matrices. This shows that the structure of our first result where the first layers are not approximately rank-one but the deeper layers are is not an artifact of our proof technique and reflects real examples. This contrasts with previous saddle-to-saddle dynamics, where all layers are approximately rank-one in the first escape direction.

\section{Saddle at the Origin}
We represent the training dataset $x_1,\dots,x_N\in\R^{d_{in}}$ as a $d_{in}\times N$ matrix $X$. We consider a fully-connected neural network of depth $L$  with widths $n_0=d_{in}, n_1,\dots,n_L=d_{out}$ and ReLU nonlinearity $\sigma(x)=\max\{x,0\}$. The $n_\ell \times N$  dimensional matrices of activation $Z^{\sigma}_\ell$ and preactivation $Z_\ell$ at the $\ell$-th layer are then defined recursively as
\begin{align*}
    Z^{\sigma}_0 &= X \\
    Z_\ell &= W_\ell Z^\sigma_{\ell-1} \\
    Z^\sigma_\ell &= \sigma(Z_\ell),
\end{align*}
for the $n_\ell\times n_{\ell-1}$ weight matrix $W_\ell$ , $\ell=1,\dots,L$. The weight matrices $W_1,\dots,W_L$  are the parameters of the network, and we concatenate them into a single vector of parameters $\theta$ of dimension $P=\sum_\ell n_\ell n_{\ell-1}$. The outputs of the network are the last layer’s preactivations $Y_\theta = Z_L$.

We consider a general cost $C:\R^{d_{out}\times N}\to \R$ that takes the network outputs $Y_\theta$ and returns the loss $\mathcal{L}(\theta)=C(Y_\theta)$. The parameters $\theta(t)$ are then trained with gradient flow (GF) on the loss $\mathcal{L}$ 
\[
\partial_t\theta(t) = -\nabla \mathcal{L}(\theta(t))
\]
starting from a random initialization $\theta_0 \sim \mathcal{N}(0,\sigma_0^2)$ for a small $\sigma_0$.

One can easily check that the origin $\theta =0$ is a critical point of the loss. Our analysis will focus on the neighborhood of this saddle, and for such small parameters the outputs $Y_\theta$ will be small, we can therefore approximate the loss as
\begin{align}\label{eq:Taylor_approx_origin}
\mathcal{L}(\theta) = C(Y_\theta) = C(0) + \mathrm{Tr}\left[\nabla C(0)^T Y_\theta\right] + O(\|Y_\theta\|_F^2),
\end{align}
where $\nabla C(0)$ is an $n_L \times N$ matrix. Since we only care about the dynamics of gradient flow, the first term can be dropped. We will therefore mainly focus on the \emph{localized loss} $\mathcal{L}_0 (\theta)= \mathrm{Tr}[G^T Y_\theta]$, writing $G=\nabla C(0)$ for simplicity.

The localized loss $\mathcal{L}_0$ can be thought of as resulting from zooming into origin. It captures the loss in the neighborhood of the origin. Note that since the ReLU is not differentiable, neither is the loss at the origin, so that we cannot use the traditional strategy of approximating $\mathcal{L}_0$  by a polynomial. However, this loss has the  advantage of being homogeneous with degree $L$, i.e. $\mathcal{L}_0(\lambda \theta) = \lambda^L \mathcal{L}_0(\theta)$, which will be key in our analysis. {\color{red}A derivation of equation \ref{eq:Taylor_approx_origin} and of this homogeneity can be found in Section \ref{subsec:small_derivations} of the Appendix.}

\subsection{Gradient Flow on  Homogeneous Losses}
On a homogeneous loss, the GF dynamics decompose into dynamics of the norm $\|\theta\|$ and of the normalized parameters $\bar{\theta}=\nicefrac{\theta}{\|\theta\|}$ {\color{red}(see Proposition \ref{prop:decomposition_GF} for a detailed derivation)}:
\begin{align*}
    \partial_t \|\theta(t)\| &= -\bar{\theta}(t)^T \nabla\mathcal{L}_0(\theta(t)) = - L \|\theta(t)\|^{L-1}  \mathcal{L}_0(\bar{\theta}(t))\\
    \partial_t \bar{\theta}(t) &= -\|\theta(t)\|^{L-2} \Big ( I - \bar{\theta}(t) \bar{\theta}(t)^T \Big ) \nabla \mathcal{L}_0(\bar{\theta}(t)).
\end{align*}
where we used Euler’s homogeneous function theorem: $\theta^T \nabla \mathcal{L}_0(\theta) = L \mathcal{L}_0(\theta)$. {\color{red}In our setting the parameter norm is very small $\|\theta\| \ll 1$, which implies that the dynamics of $\bar{\theta}$ will be much faster than that of the norm $\|\theta\|$ because $\|\theta(t)\|^{L-2} \gg \|\theta(t)\|^{L-1}$.}

Notice that $(I-\bar{\theta}\bar{\theta}^T)$ is the projection to the tangent space of the sphere at $\bar{\theta}$, which implies that the normalized parameters are doing projected GF over the unit sphere on the $\mathcal{L}_0$ loss (up to a prefactor of $\|\theta\|^{L-2}$ which can be interpreted as a speed up of the dynamics for larger norms).

Therefore, we may reparametrize time $t(q)$, such that $q(t) = \int_0^q \|\theta(q_1)\|^{L-2} dq_1$, which correspond to switching to a time-dependent learning rate $\eta_q = \|\theta(q)\|^{2-L}$, we obtain the dynamics:
\begin{align*}
    \partial_q \|\theta(q)\| &= - L \|\theta(q)\|  \mathcal{L}_0(\bar{\theta}(q))\\
    \partial_q \bar{\theta}(q) &= -\Big ( I - \bar{\theta}(q) \bar{\theta}(q)^T \Big ) \nabla \mathcal{L}_0(\bar{\theta}(q)).
\end{align*}
We can therefore solve for $\bar{\theta}(q)$ on its own, and the norm $\|\theta(q)\|$ then takes the form
\[
\|\theta(q)\|=\|\theta(0)\|\exp\left( -L\int_0^q\mathcal{L}_0(\bar{\theta}(q_1)) dq_1 \right).
\]
If needed, these solutions can then be translated back in $t$-time, using the formula
\[
t(q) = \int_0^q \|\theta(q_1)\|^{2-L} dq_1 = \|\theta(0)\|^{2-L} \int_0^q \exp\left( L(L-2)\int_0^{q_1}\mathcal{L}_0(\bar{\theta}(q_2)) dq_2 \right) dq_1.
\]

\subsection{Escape Directions and their Speeds}
\label{subsec:escape_dirs}
Assuming convergence of the projected gradient flow $\bar{\theta}(q)$, for all initializations $x_0$ there will be a time $q_1$ where $\bar{\theta}(q_1)$ will be close to a critical point of $\mathcal{L}_0$ restricted to the sphere, i.e. a point $\bar{\theta}^*$ such that $\Big ( I - \bar{\theta}^* \bar{\theta}^{*T} \Big ) \nabla \mathcal{L}_0(\bar{\theta}^*)=0$. We call these escape directions (assuming $\mathcal{L}_0(\bar{\theta}^*)<0$), because once it is reached, $\bar{\theta}(q)$ remains approximately constant while the parameter norm grows fast.

\begin{definition}
    An \textbf{escape direction} is a vector on the sphere \( \rho \in L^{1/2} \mathbb{S}^{P-1} \) such that \( \nabla \mathcal{L}_0(\rho) = -s \rho \) for some \( s \in \mathbb{R}_+ \), which we call the \textbf{escape speed} associated with \( \rho \). We switch from the unit sphere to the radius $\sqrt{L}$ sphere as it will lead to cleaner formulas.

An \textbf{optimal escape direction} \( \rho^* \in L^{1/2} \mathbb{S}^{P-1} \) is an escape direction with the largest speed \( s^* > 0 \). It is a minimizer of \( \mathcal{L}_0 \) restricted to \(L^{1/2} \mathbb{S}^{P-1} \):

\[
\rho^* \in \operatorname{arg\,min}_{\rho \in L^{1/2} \mathbb{S}^{P-1}} \mathcal{L}_0(\rho).
\]
\end{definition}

If the parameters start aligned with an escape direction $\theta_0 \propto \rho$, then GF on the localized loss will diverge towards infinity in a straight line with rate determined by the depth $L$ and the escape speed $s$:
\begin{proposition} \label{prop:escape_speed}
Considering gradient flow on the localized loss $\mathcal{L}_0$, if at some time \( t_0 \) the parameter satisfies
\[
\theta(t_0) = \rho \quad \text{with} \quad \rho \in L^{1/2}\mathbb{S}^{P-1} \quad \text{and} \quad \nabla \mathcal{L}_0(\rho) = - s\, \rho,
\]
then for all \( t \ge t_0 \) the normalized direction remains constant, and the norm \(\|\theta(t)\|\) satisfies
\[
\|\theta(t)\| =
\begin{cases}
\displaystyle \left( \|\theta(t_0)\|^{\,2-L} + (2-L)L\, s\, (t-t_0) \right)^{\frac{1}{2-L}}, & \text{if } L\neq2,\\[1mm]
\|\theta(t_0)\|\,\exp\Bigl(2\, s\,(t-t_0)\Bigr), & \text{if } L=2.
\end{cases}
\]
\end{proposition}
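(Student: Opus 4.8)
The plan is to use the homogeneity of $\mathcal{L}_0$ to collapse the gradient flow onto the ray spanned by $\rho$, where it becomes a scalar ODE. Since $\mathcal{L}_0$ is homogeneous of degree $L$, its gradient $\nabla\mathcal{L}_0$ is homogeneous of degree $L-1$, so $\nabla\mathcal{L}_0(\mu\rho)=\mu^{L-1}\nabla\mathcal{L}_0(\rho)=-s\,\mu^{L-1}\rho$ for every $\mu>0$. First I would substitute the ansatz $\theta(t)=\mu(t)\rho$ with $\mu(t_0)=1$ into $\partial_t\theta=-\nabla\mathcal{L}_0(\theta)$: one has
\[
\partial_t\theta(t)=\dot\mu(t)\,\rho\qquad\text{and}\qquad -\nabla\mathcal{L}_0(\theta(t))=-\mu(t)^{L-1}\nabla\mathcal{L}_0(\rho)=s\,\mu(t)^{L-1}\rho,
\]
so the flow equation is satisfied along this ray precisely when $\dot\mu=s\,\mu^{L-1}$. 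This already yields the first claim for free, since $\bar{\theta}(t)=\rho/\|\rho\|$ is then constant in $t$.

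Next I would solve the scalar ODE $\dot\mu=s\,\mu^{L-1}$ with $\mu(t_0)=1$ by separation of variables: for $L=2$ it is linear, $\dot\mu=s\mu$, so $\mu(t)=e^{s(t-t_0)}$, while for $L\neq 2$ it integrates to $\mu(t)^{2-L}=1+(2-L)s(t-t_0)$. Translating back through $\|\theta(t)\|=\|\rho\|\,\mu(t)=\sqrt{L}\,\mu(t)$ and using $\|\theta(t_0)\|=\sqrt{L}$ then gives the two displayed closed forms after routine rearrangement, the two cases being consistent since the exponential is the $L\to 2$ limit of the power. I would also record the interval of validity: for $L\le 2$ the solution is global in forward time, whereas for $L\ge 3$ one has $2-L<0$ and $\|\theta(t)\|$ diverges at the time where the base of the power vanishes, so the formula holds on the maximal forward interval of existence.

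The step I expect to be the main obstacle — indeed the only genuinely non-routine point — is \emph{uniqueness} of the gradient-flow trajectory through $\rho$, since the ReLU loss is non-smooth and $\nabla\mathcal{L}_0$ is only piecewise defined, so producing one solution of the right form does not by itself finish the proof. The argument I have in mind: $\rho$ lies in the interior of its activation cell (no preactivation $Z_\ell$ evaluated at $\rho$ has a vanishing entry — something implicit already in writing $\nabla\mathcal{L}_0(\rho)$), and on that cell $\mathcal{L}_0$ agrees with a fixed multilinear polynomial in the entries of $W_1,\dots,W_L$, hence has a locally Lipschitz gradient, so Picard–Lindelöf gives local uniqueness; and since multiplying $\rho$ by a positive scalar does not change the sign of any preactivation, the ray solution never leaves this cell, so local uniqueness propagates along the whole curve. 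If one wants to allow $\rho$ on a cell boundary, I would instead read the flow as a Clarke subgradient (Filippov) inclusion, for which $\mu(t)\rho$ is still the selected solution — this is the place where I would be most careful.
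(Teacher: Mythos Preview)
Your approach is essentially the paper's: both exploit the $L$-homogeneity of $\mathcal{L}_0$ to collapse the flow onto a scalar ODE and then separate variables, the paper working through the norm/direction decomposition of the preceding proposition while you take the more direct ansatz $\theta(t)=\mu(t)\rho$. Your discussion of uniqueness (the activation-cell argument plus Picard--Lindel\"of) is actually more careful than the paper's, which simply asserts $\bar\theta(t)=\bar\theta(t_0)$ without further comment.

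One caution on the step you labelled ``routine rearrangement'': if you actually carry it out from $\dot\mu=s\,\mu^{L-1}$ and $\|\theta(t)\|=\sqrt{L}\,\mu(t)$ you will obtain
\[
\|\theta(t)\|^{2-L}=\|\theta(t_0)\|^{2-L}+(2-L)\,s\,L^{(2-L)/2}\,(t-t_0),
\]
and exponent $s(t-t_0)$ in the $L=2$ case, \emph{not} the constants displayed in the statement. The paper's proof reaches its stated formula by writing $\mathcal{L}_0(\bar\theta(t_0))=-s$, but Euler's identity applied to $\nabla\mathcal{L}_0(\rho)=-s\rho$ with $\|\rho\|^2=L$ gives $\mathcal{L}_0(\rho)=-s$ and hence $\mathcal{L}_0(\bar\theta(t_0))=-s\,L^{-L/2}$; your ODE is the correct one, so do not be surprised when your constants differ from the displayed ones by a factor of $L^{L/2}$ in the time variable.
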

{\color{red} Of course, the localized loss $\mathcal{L}_0$ is only a good approximation as long as the outputs $Y_\theta$ are small. This will apply up to some escape time $t_1(r)$ which is the first time the outputs satisfy $\|Y_\theta\|_F=r$. Since $\|Y_{\theta(t)}\|_F\leq \|W_L\|_{op}\cdots\|W_1\|_{op}\|X\|_F\leq \left(\frac{\|\theta\|}{\sqrt{L}}\right)^L\|X\|_F$ we know that 
\[
t_1(r) - t_0 \geq
\begin{cases}
\displaystyle \frac{1}{(L-2)Ls}\left[\|\theta(t_0)\|^{\,2-L} - L^{\frac{2-L}{2}}r^{\frac{2-L}{L}}\right], & \text{if } L\neq2,\\[1mm]
\frac{1}{2s} (\log \sqrt{L}r^{\frac{1}{L}} - \log\|\theta(t_0)\|), & \text{if } L=2.
\end{cases}
\]
}After this escape time, we expect the localized GF to diverge from the true GF: the localized GF diverges towards infinity (in finite time when $L>2$), while the true GF typically slows down as it approaches another saddle or a minima. This paper focuses on the dynamics before the escape time. 

In general, we do not start aligned with an escape direction, but since the normalized parameters $\bar{\theta}(q)$ follow GF restricted to the sphere , they will converge to an escape direction, at which point a similar explosion of the norm will take place.

Note that the dynamics of $\bar{\theta}(q)$ (in reparametrized $q$-time) are unaffected by multiplying the initialization $\theta_0$ by a factor $\alpha>0$. Therefore the time $q_1$ of convergence to an escape direction is independent of $\alpha$, and at the time $q_1$, the parameter norm will depend linearly on $\alpha$: $\|\theta(q_1)\| = C \alpha$ for some $C>0$. We can therefore always choose a small enough $\alpha$ so that the Taylor approximation (Equation \ref{eq:Taylor_approx_origin}) remains valid up to the time of convergence $q_1$.

{\color{red}The convergence of GF to escape directions is essentially the same as the convergence to Karush-Kuhn-Tucker (KKT) points as GD diverges towards infinity with the cross-entropy loss \cite{Ji_2018_directional,  chizat_2020_implicit_bias}. The techniques used in these papers could be used to extend the present GF analysis to a GD convergence.
}
\section{Low Rank Bias and Approx. Linearity of the Escape Directions}

The main result of this paper is that at the optimal escape directions, the deeper layers (i.e. for large $\ell$) are approximately low-rank and have almost no nonlinearity effect:

\begin{theorem}\label{thm:approximate_rank_at_opt_escape}
    Consider an optimal escape direction
    
    $$ \theta ^\star = \argmin_{\|\theta\|^2=L} \Tr[G^\top Y_\theta]$$ 

    with optimal speed $s^* = \min_{\|\theta\|^2=L} \Tr[G^\top Y_\theta]$, then for all layers $\ell$ with $\ell > c^2$ we have:

$$\frac{\sum_{i\geq 2}s^2_i(W_\ell)}{\sum_{i\geq1} s_i^2(W_\ell)},\frac{\sum_{i\geq 2}s^2_i(Z^\sigma_\ell)}{\sum_{i\geq1} s_i^2(Z^\sigma_\ell)},\frac{\|Z_{\ell}^\sigma - Z_{\ell}\|^2_F}{\|Z_{\ell}\|^2_F} \leq 8\frac{c}{1-c\ell^{-\frac{1}{2}}}\ell^{-\frac{1}{2}}$$

where $s_i(A)$ is the $i$-th largest singular value of $A$ and $c = \frac{\|X\|_F \|G\|_F}{s^*}\sqrt{2\log \frac{\|X\|_F \|G\|_F}{s^*}}$.

\end{theorem}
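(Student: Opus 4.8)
My plan is to exploit the homogeneity and the critical-point condition together with a carefully chosen perturbation of the optimal escape direction. The key idea is that at the optimal escape direction, the value $\mathcal{L}_0(\theta^\star) = s^*$ is as negative as possible on the sphere $\|\theta\|^2=L$; any feature of $\theta^\star$ that fails to be "efficient" — a second singular value in some $W_\ell$, or a nonlinearity gap $\|Z^\sigma_\ell - Z_\ell\|_F$ — represents wasted norm budget that could instead be spent making the output larger in the $-G$ direction. So I would set up a competitor direction $\tilde\theta$ obtained from $\theta^\star$ by (i) truncating $W_\ell$ to its rank-one part (or more generally rescaling to kill the tail singular values), and (ii) rebalancing the norms across layers so that $\|\tilde\theta\|^2 = L$ is preserved, then argue $\mathcal{L}_0(\tilde\theta) \le s^*$ forces the discarded quantities to be small.

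Concretely, first I would use the standard balancing fact for homogeneous networks: along gradient flow (or at a critical point of $\mathcal{L}_0$ on the sphere) the layer norms $\|W_\ell\|_F^2$ are equal across $\ell$, equal to $1$ after the $\sqrt{L}$-sphere normalization. This, combined with $\|Z^\sigma_\ell\|_F \le \|W_\ell\|_{op}\cdots\|W_1\|_{op}\|X\|_F \le \|X\|_F$ and $|s^*| = |\mathrm{Tr}[G^\top Z_L]| \le \|G\|_F \|Z_L\|_F$, pins down the scale of everything in terms of $c$. Next, for a fixed layer $\ell$, I would bound how much $\|Z_L\|$ (hence $\mathcal{L}_0$) changes when I replace $Z^\sigma_\ell$ by its best rank-one approximation and propagate forward: the forward map from layer $\ell$ activations to $Z_L$ is Lipschitz with constant $\prod_{j>\ell}\|W_j\|_{op} \le 1$, so the perturbation of the output is controlled by $(\sum_{i\ge 2} s_i^2(Z^\sigma_\ell))^{1/2}$, the rank-one tail. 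Similarly the nonlinearity gap propagates: replacing the true forward pass from layer $\ell$ on by the pass where layer $\ell$'s ReLU is omitted changes the output by at most $\|Z^\sigma_\ell - Z_\ell\|_F \prod_{j>\ell}\|W_j\|_{op}$. The subtlety is that I must then \emph{reuse the freed norm budget}: a direction that is genuinely rank-one in layer $\ell$ and linear there can be rescaled in that layer to have strictly larger operator norm while keeping $\|W_\ell\|_F = 1$, and this rescaling multiplies the output by a factor $> 1$. Quantifying this gain against the optimality of $s^*$ and summing the telescoping inequality over all $\ell' \le \ell$ (each deeper layer inherits the slack of all shallower ones, which is where the $\ell^{-1/2}$ rate and the factor $8$ come from) yields the claimed bound, with the logarithmic factor in $c$ entering from optimizing the rescaling parameter.

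The main obstacle I anticipate is handling the ReLU nonlinearity rigorously in the perturbation argument: truncating a singular value of $Z^\sigma_\ell$ does not correspond to a clean change in the parameters $\theta$, because $Z^\sigma_\ell = \sigma(W_\ell Z^\sigma_{\ell-1})$ is constrained to be a nonnegative matrix of a special form. I would get around this by perturbing $W_\ell$ directly (truncating \emph{its} tail singular values), tracking separately how $Z^\sigma_\ell$ moves, and using that $\sigma$ is $1$-Lipschitz so the activation perturbation is bounded by the preactivation perturbation. Closing the loop between "small $\sum_{i\ge2}s_i^2(W_\ell)$" and "small $\sum_{i\ge2}s_i^2(Z^\sigma_\ell)$" requires a two-sided argument — a rank-one $W_\ell$ forces $Z_\ell$ hence $Z^\sigma_\ell$ (a pointwise function of a rank-one matrix) to have controlled rank tail — and conversely; getting the constants to line up across this equivalence, while simultaneously bounding the nonlinearity gap $\|Z^\sigma_\ell - Z_\ell\|_F$ (which is small precisely when $Z_\ell$ is close to entrywise nonnegative, a consequence of the output-alignment with $G$ propagating backward), is the delicate bookkeeping that the constant $8$ absorbs.
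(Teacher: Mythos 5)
Your proposal takes a genuinely different route from the paper, and I think the route has a gap in its core mechanism.

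The paper's proof has a two-stage structure that your competitor-direction argument does not replicate. First (Proposition~\ref{prop:almost_all_rank_1}, the ``weak control'') it writes the telescoping identity
\[
\prod_{\ell=0}^{L-1}\frac{\|Z_\ell^\sigma\|_F^2}{\|Z_\ell^\sigma\|_{op}^2}
\le \frac{\|X\|_F^2\,\|G\|_F^2}{\Tr[G^\top Z_L]^2},
\]
takes logs, and uses a pigeonhole argument: since the sum of the per-layer log-ratios is bounded by a constant, at most a fraction $p$ of the layers can have a large ratio, hence some layer $\ell_0\le pL$ is already $O(1/(pL))$-approximately rank one. Second, it invokes the Perron--Frobenius lemma to extract \emph{entrywise nonnegative} top singular vectors $u,v\ge 0$ of $Z^\sigma_{\ell_0}$ (this nonnegativity is what lets $\sigma$ act trivially on $uv^\top$), writes $Z^\sigma_{\ell_0}=uv^\top+X'$ with $\|X'\|_F$ small, and applies Proposition~\ref{prop:approx_rank_1_inputs} to the tail sub-network $f_{L:\ell_0+1}$, using that $\theta^\star$ restricted to the later layers is itself optimal for the sub-problem with input $Z^\sigma_{\ell_0}$. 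The $\ell^{-1/2}$ rate comes from $\epsilon \propto \|X\|_F\sqrt{2\log(c')/\ell}$ after taking $pL=\ell$ and a square root; the log factor is the ratio $\|X\|_F\|G\|_F/s^*$ from the weak control, not from ``optimizing a rescaling parameter.''

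Your plan instead tries to argue directly, layer by layer, that the tail singular values must be small because truncating them to rank one and rebalancing the Frobenius budget would increase the escape speed, contradicting optimality. The step I don't believe is the claim that truncation-plus-rescaling produces a faster escape direction. The tail of $W_\ell$ contributes to the rank-$\ge2$ part of $Z_\ell$, and nothing in the argument prevents that part from being \emph{essential} to the alignment $\Tr[G^\top Z_L]$; truncating it could decrease $|\mathcal{L}_0|$ even after rescaling. The paper's alignment argument is one-sided in the opposite direction: it shows that achieving near-maximal alignment \emph{forces} all the multiplicative factors to be near one, without ever needing to exhibit a strictly better competitor. Your claim that ``each deeper layer inherits the slack of all shallower ones'' also runs in the wrong direction: accumulating slack over $\ell$ layers would give a bound that \emph{worsens} with $\ell$, whereas the theorem's bound improves as $\ell^{-1/2}$ precisely because of the pigeonhole (a layer $\ell_0\le\ell$ must already be good, and goodness then propagates forward). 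You also do not use anywhere the nonnegativity of the intermediate activations, which is the crucial ingredient that makes the approximately-rank-one hidden representation behave like a rank-one input $uv^\top$ (with $u,v\ge 0$) to the remaining ReLU sub-network; without it the forward-propagation step breaks. So while the high-level intuition (optimality penalizes wasted norm) overlaps with the paper, the proposed mechanism does not close, and the key ideas — the log-telescoping pigeonhole, the Perron--Frobenius nonnegativity lemma, and the reduction to the sub-network with near-rank-one input — are missing.
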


In the rest of the section, we will prove a result that shows that the optimal escape speed $s^*$ is increasing in depth, thus controlling the constant $c$ in depth. This guarantees that the condition $\ell > c^2$ holds for all but finitely many of the initial layers of the network. We then present a sketch of proof for the Theorem, stating a few intermediate results that are of independent interest.

\subsection{Optimal Speed is Increasing in Depth}
The bounds of Theorem \ref{thm:approximate_rank_at_opt_escape} are strongest when the optimal escape direction $s^*$ is large. Thankfully, the optimal escape speed is increasing in $L$:

\begin{proposition}\label{prop:opt_escape_increasing}

Given a depth $L$ network with $\mathcal{L}_0(\theta) = -s_0 $ for $s_0 > 0$ and $\|\theta\|^2 = L$,
we can construct a network of depth $L + k$ for any $k \geq 1$ with parameters $\theta'$ that satisfies $\|\theta'\|^2 = L+k$ and  $\mathcal{L}_0(\theta')\leq \mathcal{L}_0(\theta)$. Therefore, the optimal escape speed $s^*(L)$ is a non-decreasing function.

Furthermore, in the deeper network, we have $\Rank(Z_{L'}) = \Rank W_{L'} = 1$ for all $L' \geq L $ and $Z_{L'}=Z^\sigma_{L'}$ for all $L'>L$.
\end{proposition}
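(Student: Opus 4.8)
The plan is to build $\theta'$ from scratch, using only the trivial bound $s_0 \le s^*(L)$, which holds because $\mathcal{L}_0(\theta)=-s_0$ is at least $\min_{\|\theta\|^2=L}\mathcal{L}_0(\theta)=-s^*(L)$. It then suffices to exhibit, for each $k\ge 1$, a depth-$(L+k)$ network $\theta'$ with $\|\theta'\|^2=L+k$, with $\mathcal{L}_0(\theta')=-s^*(L)$, and with the stated rank-one structure on the layers $\ell\ge L$: this gives $\mathcal{L}_0(\theta')=-s^*(L)\le -s_0=\mathcal{L}_0(\theta)$, and since $\theta'$ witnesses that $\min_{\|\theta'\|^2=L+k}\mathcal{L}_0\le -s^*(L)$ we conclude $s^*(L+k)\ge s^*(L)$. (I take $L\ge 2$, as in the rest of the paper, so that $Z^\sigma_{L-1}$ is a genuine ReLU output and the origin is a saddle.)

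First I would unfold $s^*(L)$. By positive homogeneity of $\sigma$ one may rescale any point of the sphere to one with all $\|W_\ell\|_F=1$ without increasing $\mathcal{L}_0$, so $-s^*(L)=\min\{\langle G,\,W_L Z^\sigma_{L-1}\rangle:\|W_\ell\|_F=1\ \forall\ell\}$; minimizing over $W_L$ alone yields $-\|G(Z^\sigma_{L-1})^\top\|_F$, hence $s^*(L)=\max_{A\in\mathcal{A}_{L-1}}\|GA^\top\|_F$, where $\mathcal{A}_{L-1}$ is the compact set of activations $Z^\sigma_{L-1}$ reachable with unit-norm weights. Writing $A=\sigma(W_{L-1}C)$ with $C\in\mathcal{A}_{L-2}$ and expanding $W_{L-1}$ into its rows $w_1,w_2,\dots$ (with $\sum_i\|w_i\|^2=1$), one computes $\|G\sigma(W_{L-1}C)^\top\|_F^2=\sum_i\|G\sigma(C^\top w_i)\|^2=\sum_i\|w_i\|^2\,\|G\sigma(C^\top\hat w_i)\|^2$, which is maximized by placing all the mass on a single row. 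Hence
\[
 s^*(L)\;=\;\max_{C\in\mathcal{A}_{L-2}}\ \max_{\|w\|=1}\ \|G\,\sigma(C^\top w)\|,
\]
attained by a $W_{L-1}$ with a single non-zero unit row $w^*$, so that the extremal $Z^\sigma_{L-1}$ equals $e_1\,p^\top$ with $p:=\sigma(C^{*\top}w^*)\ge 0$; that is, it is already rank one, and with a non-negative right singular direction. This identity is the crux.

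Given it, the construction is direct. Let $\theta'$ have depth $L+k$, with its first $L-2$ layers forming the unit-norm subnetwork realizing the maximizing $C^*$ (none if $L=2$, where $C^*=X$), its $(L-1)$-th layer equal to $e_1 w^{*\top}$ (so $Z^\sigma_{L-1}=e_1 p^\top$), and its last $k+1$ layers threading this signal through a one-dimensional non-negative channel and then projecting onto $-Gp$: $\widetilde W_L=v_0 e_1^\top$, $\widetilde W_{L+j}=v_j v_{j-1}^\top$ for $1\le j\le k-1$, and $\widetilde W_{L+k}=\phi\,v_{k-1}^\top$, where $v_0,\dots,v_{k-1}\ge 0$ are unit vectors and $\phi=-Gp/\|Gp\|$. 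Each new matrix has unit Frobenius norm, so $\|\theta'\|^2=(L-2)+1+(k+1)=L+k$; a short computation gives $Z_{L'}=v_{L'-L}\,p^\top\ge 0$ for $L\le L'\le L+k-1$ and $Z_{L+k}=\phi\,p^\top$, so every $W_{L'}$ and $Z_{L'}$ with $L'\ge L$ is rank one and the ReLU acts trivially on the intermediate new layers (the Furthermore part), while $\mathcal{L}_0(\theta')=\langle G,\phi p^\top\rangle=-\|Gp\|=-s^*(L)$.

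The main obstacle, and the reason a naive attempt fails, is that one cannot simply keep $\theta$'s first $L-1$ layers and replace $W_L$ by its best rank-one approximant: if $G(Z^\sigma_{L-1})^\top$ is genuinely high rank and $W_L$ is spread to match all of it, then $s_0=\|G(Z^\sigma_{L-1})^\top\|_F$ strictly exceeds $\|G(Z^\sigma_{L-1})^\top\|_{\mathrm{op}}$, which is the most any rank-one tail can extract from that fixed activation. The displayed identity removes this difficulty precisely because it also optimizes the second-to-last layer and then shows that the extremal configuration is automatically rank one in the direction that matters: a single non-negative row, which a rank-one channel transmits with no loss. The remaining pieces (the row-by-row expansion, compactness of $\mathcal{A}_{L-2}$ for existence of the maximizer, and the norm bookkeeping) are routine.
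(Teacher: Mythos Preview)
Your proof is correct. The core mechanism is the same as the paper's: a neuron-by-neuron decomposition of the last two layers, observing that $\mathrm{Tr}[G^\top W_L\sigma(W_{L-1}C)]$ splits as a weighted average over the rows $w_i$ of $W_{L-1}$ (paired with the columns of $W_L$), so a single neuron does at least as well as the average. The paper writes this as picking the index $i^*$ maximizing $-\mathrm{Tr}[G^\top \bar W_{L,\cdot i}\sigma(\bar W_{L-1,i\cdot}Z_{L-2})]$; you write it as a convex combination $\sum_i\|w_i\|^2\|G\sigma(C^\top\hat w_i)\|^2$ maximized by concentrating mass.

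The organizational difference is that the paper applies this decomposition directly to the \emph{given} $\theta$: it keeps $W_1,\dots,W_{L-2}$, replaces $W_{L-1}$ by the single row $\bar W_{L-1,i^*\cdot}$ (suitably scaled), inserts $k$ copies of $e_1e_1^\top$, and caps with the single column $\bar W_{L,\cdot i^*}$, obtaining $\mathcal{L}_0(\theta')\le\mathcal{L}_0(\theta)$ and $\|\theta'\|^2\le L+k$ via AM--GM on the two modified layer norms. You instead first invoke $s_0\le s^*(L)$, then apply the decomposition at the depth-$L$ optimum to conclude that $s^*(L)$ is already realized with a rank-one $W_{L-1}$, and build $\theta'$ from that optimal configuration. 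The paper's route is more directly constructive (the deeper network is built out of the given one) and avoids the detour through compactness/existence of the maximizer; your route has the pleasant by-product of proving that the depth-$L$ optimal escape direction itself can be taken with a single-row $(L-1)$-th layer, and it lands exactly on $\|\theta'\|^2=L+k$ without the AM--GM step.
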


To construct the deeper network, we first transform the last weights $W_L$ to be rank-one (this is possible without increasing $\mathcal{L}_0$), and we then add rank-one weights in the additional layers. Some very similar structures have been used in previous work \citep{jacot_2022_BN_rank,bai_2022_depth_embedding}.

\subsection{Sketch of proof}
To prove Theorem \ref{thm:approximate_rank_at_opt_escape}, we first show that if the inputs are approximately rank-one, then the optimal escape will be approximately rank-one in all layers:

\begin{proposition}\label{prop:approx_rank_1_inputs} Consider the minimizer $ \theta^*= \arg\min_{\|\theta\|^2 \leq L} \Tr[G^\top Y_\theta(uv^\top  + X)]$ where $u,v \in \mathbb{R}^{n}$,  $u,v \geq 0$ entry wise and $\|X\|_F \leq \epsilon$ for some $\epsilon > 0$. Then for all $\ell$ we have:
\begin{align*}
    \frac{\sum_{i\geq 2}s^2_i(W_\ell)}{\sum_{i\geq1} s_i^2(W_\ell)},\frac{\sum_{i\geq 2}s^2_i(Z^\sigma_\ell)}{\sum_{i\geq1} s_i^2(Z^\sigma_\ell)},\frac{\|Z_{\ell}^\sigma - Z_{\ell}\|^2_F}{\|Z_{\ell}\|^2_F} &\leq  8\frac{\|G\|_{F}}{s^* - \|G\|_{F}\epsilon} \epsilon.
\end{align*}
\end{proposition}

This also implies that if a hidden representation is approximately rank-one in one layer $\ell_0$, then it must also be approximately rank-one in all subsequent layers $\ell\geq\ell_0$. We can prove the existence of many such low-rank layers assuming the escape speed is large enough:
\begin{proposition}\label{prop:almost_all_rank_1}
Assuming $Tr[G^\top Z_L] \leq -s_0$ for some constant $s_0 > 0$ and $\|\theta\|^2 \leq L$ then for any ratio $p \in (0,1)$ there are at least $(1-p)L$ layers that are approximately rank-one in the sense that
  $$
    \frac{\sum_{i\geq 2}s^2_i(Z_\ell^\sigma)}{\sum_{i\geq1} s_i^2(Z_\ell^\sigma)} \leq 2\log\left(\frac{\|X\|_F\|G\|_F}{s_0}\right)\frac{1}{pL}$$

\end{proposition}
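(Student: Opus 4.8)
The plan is to recast the statement as an upper bound on the \emph{total} rank deficiency of the activations and then read off the count by a Markov/pigeonhole argument. For $\ell=0,1,\dots,L-1$ set
$q_\ell := \frac{\sum_{i\ge 2}s_i^2(Z^\sigma_\ell)}{\sum_i s_i^2(Z^\sigma_\ell)} = 1-s_1(Z^\sigma_\ell)^2/\|Z^\sigma_\ell\|_F^2$ (with $Z^\sigma_0=X$), so that the displayed quantity in the proposition is exactly $q_\ell$. First I would note that the hypothesis $\Tr[G^\top Z_L]\le -s_0<0$ forces $X\neq 0$, $W_\ell\neq 0$ for every $\ell$, and $Z^\sigma_\ell\neq 0$ for every $\ell$ (otherwise $Z_L=0$), so each $q_\ell\in[0,1)$ and every logarithm below is finite; the goal then reduces to proving $\sum_\ell q_\ell\le 2\log(\|X\|_F\|G\|_F/s_0)$.

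The key step is the elementary inequality $\|W_\ell Z^\sigma_{\ell-1}\|_F\le \|W_\ell\|_F\,\|Z^\sigma_{\ell-1}\|_{op}=\|W_\ell\|_F\,s_1(Z^\sigma_{\ell-1})$ (sub-multiplicativity of the Frobenius norm against the operator norm), which after squaring reads $\|Z_\ell\|_F^2\le \|W_\ell\|_F^2(1-q_{\ell-1})\|Z^\sigma_{\ell-1}\|_F^2$. Since the ReLU is entrywise non-expansive, $\|Z^\sigma_\ell\|_F\le\|Z_\ell\|_F=\|W_\ell Z^\sigma_{\ell-1}\|_F$, so chaining the bound from $\ell=L$ down to $\ell=1$ gives
\[
\|Z_L\|_F^2\;\le\;\Big(\textstyle\prod_{\ell=1}^{L}\|W_\ell\|_F^2\Big)\Big(\textstyle\prod_{\ell=0}^{L-1}(1-q_\ell)\Big)\,\|X\|_F^2 .
\]
Now I would use the norm budget twice. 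On one hand $\sum_\ell\|W_\ell\|_F^2=\|\theta\|^2\le L$, so AM--GM gives $\prod_\ell\|W_\ell\|_F^2\le 1$; on the other hand $s_0\le|\Tr[G^\top Z_L]|\le\|G\|_F\|Z_L\|_F$. Combining, $\prod_{\ell=0}^{L-1}(1-q_\ell)\ge s_0^2/(\|X\|_F^2\|G\|_F^2)$, hence $\sum_{\ell=0}^{L-1}q_\ell\le\sum_\ell -\log(1-q_\ell)\le 2\log(\|X\|_F\|G\|_F/s_0)$. Finally, Markov's inequality shows that at most $pL$ of the $L$ layers can have $q_\ell>\tfrac{2\log(\|X\|_F\|G\|_F/s_0)}{pL}$, so at least $(1-p)L$ layers satisfy the asserted bound.

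The only non-routine ingredient is the choice made in the key step: bounding $\|W_\ell Z^\sigma_{\ell-1}\|_F$ by $\|W_\ell\|_F\,s_1(Z^\sigma_{\ell-1})$ rather than by the naive $\|W_\ell\|_{op}\,\|Z^\sigma_{\ell-1}\|_F$. This is precisely what makes the estimate productive: it charges the \emph{Frobenius} norm of $W_\ell$, so that the AM--GM step against $\|\theta\|^2\le L$ is tight exactly when every layer is rank one, while it sees only the \emph{leading} singular value of the incoming activations, so that each rank deficiency $q_{\ell-1}$ enters as an explicit multiplicative loss factor. Everything afterwards — the telescoping, AM--GM, the inequality $q\le -\log(1-q)$ for $q\in[0,1)$, and Markov — is bookkeeping, and the only point one must not overlook is the degeneracy check that keeps all the logarithms legitimate.
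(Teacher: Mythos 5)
Your proof is correct and follows essentially the same approach as the paper's: the central inequality $\|W_\ell Z^\sigma_{\ell-1}\|_F \le \|W_\ell\|_F\,\|Z^\sigma_{\ell-1}\|_{op}$, the AM--GM step against $\|\theta\|^2\le L$, Cauchy--Schwarz to bound $|\Tr[G^\top Z_L]|$, the inequality $q\le -\log(1-q)$, and the final pigeonhole count all coincide with the paper's argument. Your presentation via the budget $\sum_\ell q_\ell$ is a slightly cleaner bookkeeping of the same chain of estimates that the paper organizes as a telescoping product identity, and you also make explicit the non-degeneracy checks that the paper leaves implicit.
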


The proof of Theorem \ref{thm:approximate_rank_at_opt_escape} therefore goes as follows: for any $\ell=pL$, Proposition \ref{prop:almost_all_rank_1} implies that there are at least $(1-p)L=L-\ell$ layers that are approximately rank-one. The earliest such layer $\ell_0$ must satisfy $\ell_0\leq\ell$. Proposition \ref{prop:approx_rank_1_inputs} implies that all layers after $\ell_0$ must be approximately rank-one, including the $\ell$-th layer.

The two propositions are also of independent interest. Proposition \ref{prop:approx_rank_1_inputs} gives an example of inputs where all layers are low-rank, not just the deeper layers. Proposition \ref{prop:almost_all_rank_1} applies to any parameter with fast enough escape speed, not just to the optimal escape direction, and guarantees a similar low rank structure. Interestingly, in contrast to the other results, it does not say anything about where those low-rank layers are.

\subsection{Empirical Results on MNIST}

\begin{figure}[t]
  \centering
  \begin{subfigure}[b]{0.32\textwidth}
    \centering
    \includegraphics[width=\textwidth]{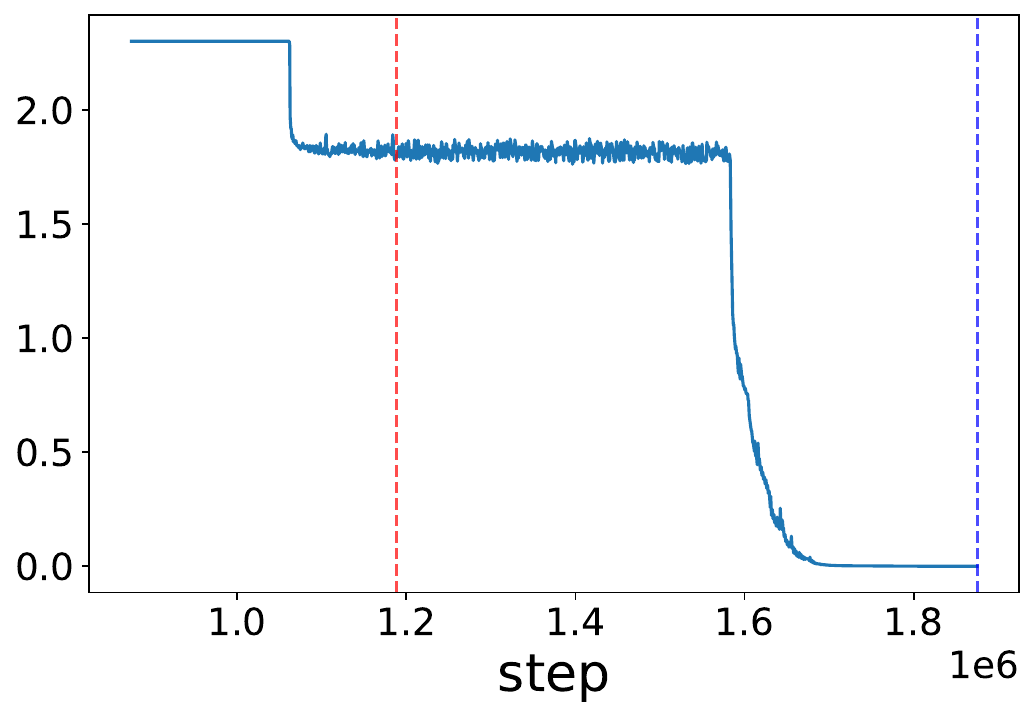}
    \caption{Training Loss}
    \label{fig:sv-before}
  \end{subfigure}%
  \hfill
  \begin{subfigure}[b]{0.32\textwidth}
    \centering
    \includegraphics[width=\textwidth]{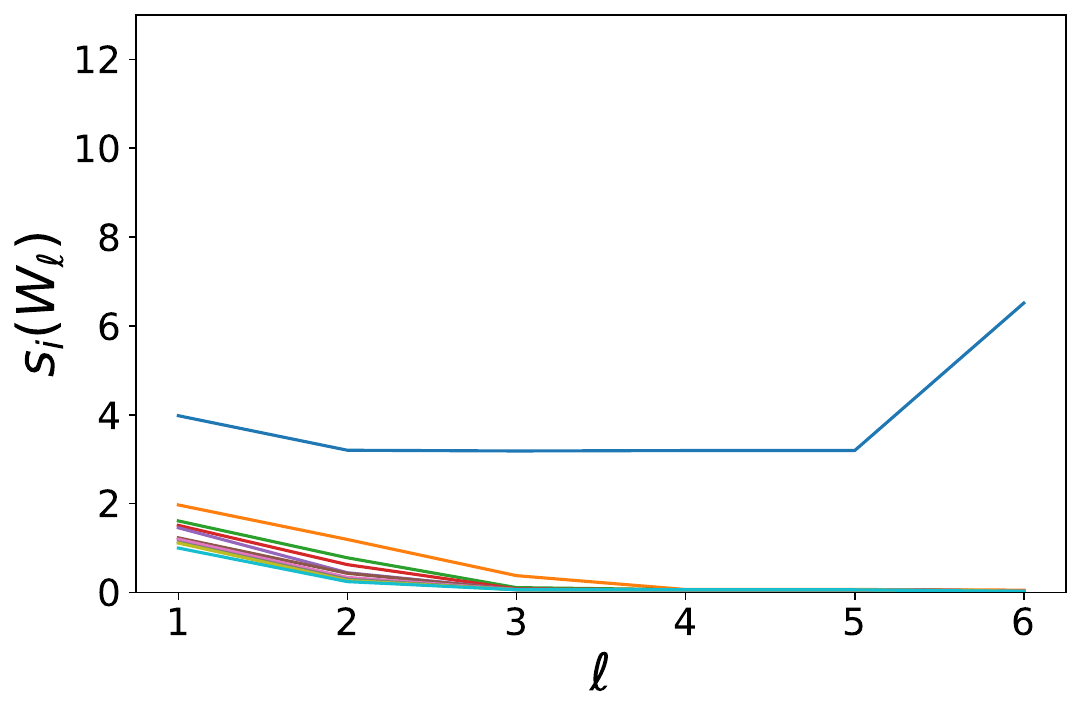}
    \caption{After the first saddle escape}
    \label{fig:sv-after}
  \end{subfigure}%
  \hfill
  \begin{subfigure}[b]{0.32\textwidth}
    \centering
    \includegraphics[width=\textwidth]{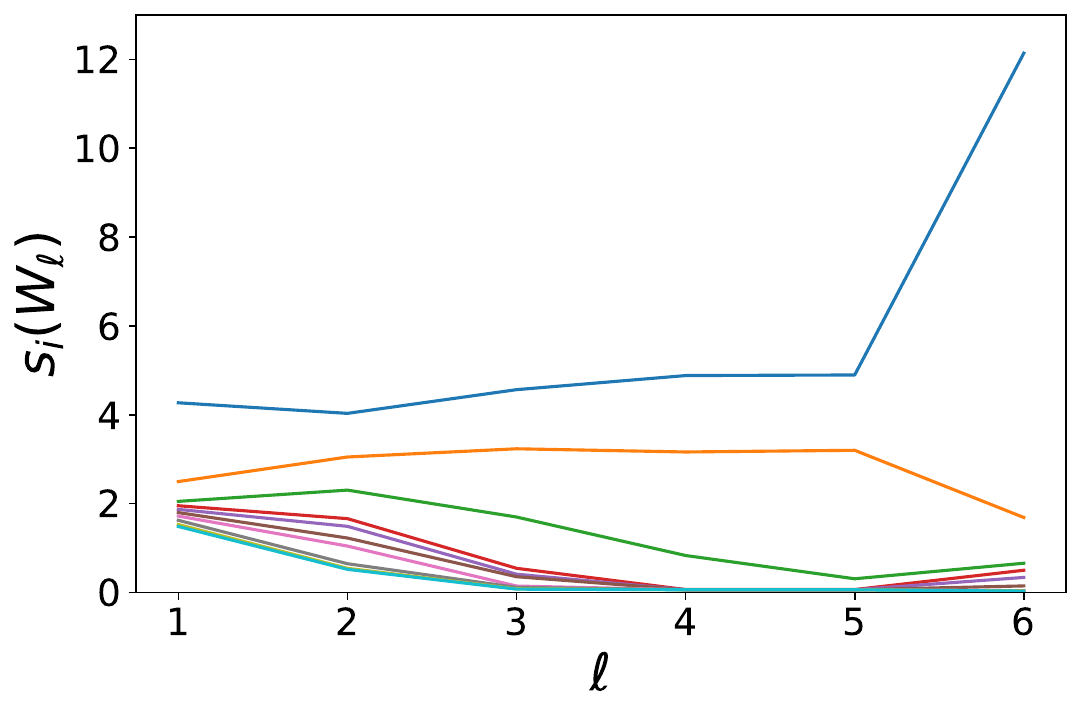}
    \caption{Final Iteration}
    \label{fig:sv-last}
  \end{subfigure}

  \caption{\label{fig:mnist_bottleneck}\textbf{Deeper layers show a stronger bias toward low-rank structure than earlier layers on MNIST.}
    \textbf{Left:} Training loss over training time. Vertical lines indicate the specific iterations at which singular values are extracted. 
    \textbf{Center and Right:} Top 10 singular values of the weight matrices per layer $\ell$ for layers 1–6 including input and output layer.}
  
\end{figure}

We empirically confirm the presence of low-rank structure in networks trained on the MNIST dataset. Specifically, we train a 6-layer fully connected network without bias and with small initialization.

Figure \ref{fig:mnist_bottleneck} highlights two distinct saddle points during training. After escaping the first saddle, we observe the emergence of a single dominant singular value in every layer, with this effect being particularly pronounced in the deeper layers (layers 4–6). While our theoretical analysis explains the behavior after the first saddle escape, our experiments reveal that, towards the end of training, a second dominant singular value appears. This suggests that the rank of the weight matrices increases following subsequent saddle escapes. A detailed visualization of the singular value evolution in each layer is provided in {\color{red}Figure~\ref{fig:layer_sv_and_loss_MNIST} in the Appendix}.

\section{The optimal escape direction is not always exactly rank one}
\label{sec:counterexample}

Our discussion has thus far consisted of results which paint the picture that \textit{deep \textrm{ReLU} networks trained from small initialization first escape the origin in a direction which is approximately rank one in each weight matrix.}
Much of our labor has been in identifying suitable notions of ``approximately rank one.''
Before concluding, it is worth asking: \textit{do we actually need such notions?}
In fact, if one performs straightforward numerical experiments on simple datasets, one will often find that the first escape direction is \textit{exactly} rank one in each layer.
Might we hope that the optimal escape direction is in fact \textit{always exactly rank one?}

In this section, we provide a simple counterexample in which the optimal escape direction is rank \textit{two} in the first layer.
We then give numerical experiments which show that (projected) gradient descent actually finds this rank-two solution.

\newtheorem{proofsketch}{Proof sketch}

\begin{example}[Rank-two optimal escape direction] \label{example:counterexample}
    Consider the unit circle dataset $(x_j)_{j=1}^N = \left(\sin(\frac{2 \pi * j}{N}), \cos(\frac{2 \pi * j}{N})\right)_{j=1}^N$ with alternating loss gradients $G = ((-1)^j)_{j=1}^N$.\footnote{Such alternating loss gradients can result straightforwardly from, for example, targets $Y = ((-1)^j)_{j=1}^N$ and the usual squared loss.}
    Let $N = 8$.
    Consider training a depth-three bias-free ReLU MLP with hidden width at least four from small initialization on this dataset.
    Then the optimal rank-one escape direction has speed $s_1 = \sqrt{2} - 1 \approx 0.414$, but there exists a better rank-two escape direction with speed $s_2 = \frac{1}{2}$.
\end{example}

\textbf{Proof.}
Our network has weight matrices $W_1, W_2, W_3$ which parameterize the network function as $Y_\theta = W_3 \sigma \circ W_2 \sigma \circ W_1 X$.
As discussed in Subsection \ref{subsec:escape_dirs}, we wish to minimize the escape speed $s = -Tr[G^\top Y_\theta]$ such that $\sum_\ell \norm{W_\ell}_F^2 = 3$.
We know from homogeneity that the minimizer will have $\norm{W_\ell}_F = 1$ for all $\ell$.

If we additionally constrain all three weight matrices to be rank-one, then a width-one ReLU network can achieve the same maximal escape speed (a network with only rank $1$ layers can only represent `one neuron functions': $Y_\theta=u\sigma(v^Tx)$ for some vectors $u,v$, independently of depth). Taking into account the positivity of $\mathrm{ReLU}$, we need only study a width-one network with $W_1 = [\cos(\phi), \sin(\phi)]$ for some $\phi \in [0,2\pi)$, $W_2 = [1]$, and $W_3 = [\pm 1]$.
The only degree of freedom remaining is the angle $\phi$ to which $W_1$ is attuned.
We solve this 1D optimization problem in Appendix \ref{subsec:1d_opt}, finding that the optima fall at $\phi = \frac{\pi j}{4}$ for $j \in \mathbb{Z}$, giving speed $s_1 = \sqrt{2} - 1 \approx 0.414$.

Without such a rank-one constraint, we can improve this speed.
We use only four neurons in the first hidden layer and one neuron in the second hidden layer (setting all incoming and outgoing weights to other neurons to zero) and choose the following weights for the active neurons:
\begin{equation}
    W_1 = \tfrac{1}{2} \begin{bmatrix} 1 & 0 \\ 0 & 1 \\ -1 & 0 \\ 0 & -1 \end{bmatrix}, \quad
    W_2 = \tfrac{1}{2} \begin{bmatrix} 1 & -1 & 1 & -1 \end{bmatrix}, \quad
    W_3 = \begin{bmatrix} 1 \end{bmatrix}.
\end{equation}
This gives a speed $s_2 = \frac{1}{2}$. \pushQED{}\popQED

This counterexample shows that the optimal escape direction may in fact be non-rank-one, and thus it is reasonable to search for a sense in which, for a sufficiently deep network, the optimal escape direction is \textit{approximately} rank one.%
\footnote{
It is worth noting that there may exist an even faster escape direction than the rank-two solution we identify (though we doubt it; see numerical experiments), but in any case we may be assured that the fastest escape direction is \textit{not} rank one.
}

\subsection{Numerical experiments: wide networks find the optimal escape direction}

Of course, the existence of such a non-rank-one optimal escape direction is only interesting if gradient descent actually finds it.
In this case, it does.
We train networks of varying width with projected gradient descent to minimize the loss on the sphere $\norm{\theta}^2 = 3$.
As shown in Figure \ref{fig:counterexample}, wider networks are more likely to converge to the faster, rank-two escape direction.

\begin{figure}[t]
    \centering
    \includegraphics[width=\textwidth]{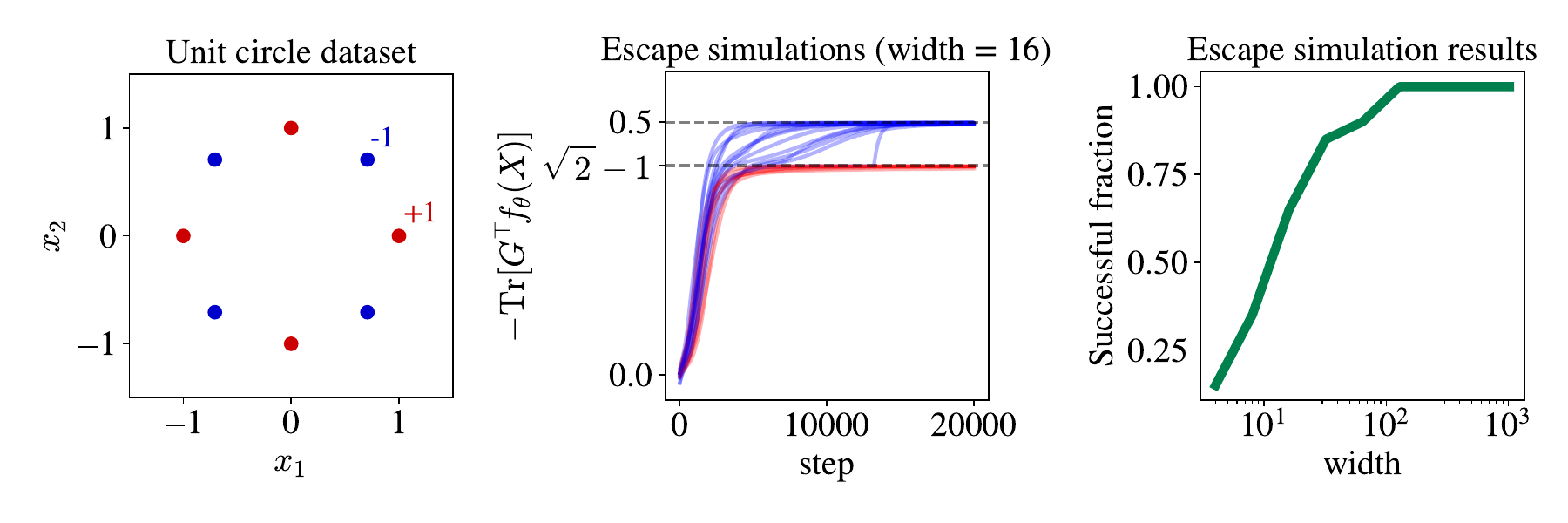}
    \caption{\textbf{Depth-3 neural networks find rank-two escape directions on a toy dataset.}
    \textbf{Left: } visualization of the dataset.
    Red and blue points have loss gradient values $G = 1$ and $G = -1$, respectively.
    \textbf{Center: } several training runs of projected gradient descent on the first-order loss objective under the parameter norm constraint $\norm{\theta}^2 = L$.
    Runs whose objective exceeds $\sqrt{2} - 1$, the best achievable value for rank-one weights, are colored blue and deemed successful.
    \textbf{Right: } as width increases, the fraction of successful runs increases.
    See Figure \ref{fig:all_counterexample_runs} for a visualization of the training runs at all widths.
    }
    \label{fig:counterexample}
    \vspace{-4mm}
\end{figure}

\section{Discussion: Saddle-to-Saddle dynamics}
The results of this paper only describe the very first step of a much more complex training path. They describe the escape from the first saddle at origin, but it is likely that the full dynamics might visit the neighborhood of multiple saddles, as is the case for linear networks \citep{jacot-2021-DLN-Saddle,li2020towards} or shallow ReLU networks \citep{abbe_2021_staircase,abbe_2022_staircase}. We now state a few conjectures/hypotheses, which should be viewed as possible next steps towards the goal of describing the complete Saddle-to-Saddle dynamics:

\textbf{(1) Large width GD finds the optimal escape direction:} Our numerical experiments suggest that wider networks are able to find the optimal escape direction with GD, even when this optimal escape direction has some higher rank layers. The intuition is that the more neurons, the more likely it is that a subset of neurons implement a `circuit' that is similar to an optimal escape direction, and that this group will out-compete the other neurons and end up dominating. Note that even in shallow networks, finding this optimal escape direction is known to be NP-hard \citep{bach2017_F1_norm}, which implies that an exponential number of neurons might be required in the worst case.

\textbf{(2) Exact rank-one at most layers:} Inspired by our illustrating example, we believe that it is likely that the optimal escape directions might only have a finite number of high-rank layers at the beginning, followed by rank-one identity layers until the outputs.

Note that if we assume that the optimal speed direction $s^*(L)$, plateaus after a certain $L_0$, i.e. $s^*(L)=s^*(L_0)$ for all $L\geq L_0$, then Proposition \ref{prop:opt_escape_increasing} already implies that there is an optimal escape direction where all layers $\ell \geq L_0$  are rank $1$. Conversely, if there is an optimal escape directions with only rank-one layers after the $L_0$-th layer, then $s^*(L)=s^*(L_0)$ for all $L\geq  L_0$.

\textbf{(3) rank-one layers remain rank-one until the next saddle:}

Assuming that GD does find the optimal escape direction, it will have approximately rank-one layers as it escapes the saddle. The next step is to show that these layers remain approximately rank-one until reaching a second saddle.

In linear networks, this follows from the fact that the optimal escape direction is rank-one and balanced (i.e. $W_\ell^T W_\ell = W_{\ell-1} W_{\ell-1}^T$ for all layers $\ell$), and that the space of rank-one and balanced network is an invariant space under GF. 

The ReLU case is more difficult because we have only approximately rank-one layers. More precisely to guarantee that there is a layer that is $\epsilon$-approximately rank-one, we need both a small initialization and a large depth, in contrast to linear networks where a small enough initialization is sufficient. Our second conjecture would help with this aspect.

The next difficulty is to show that the approximate rank-one layers remain so for a sufficient amount of time. The key tool to prove this in linear networks is balancedness. ReLU networks only satisfy weak balancedness in general , i.e. $diag(W_\ell^T W_\ell) = diag(W_{\ell-1} W_{\ell-1}^T)$, however the stronger balancedness applies at layers where the pre-activations have non-negative entries: $Z_\ell \geq 0$.

\textbf{(4) BN-rank incremental learning}
{\color{red} The final goal is to prove that these Saddle-to-Saddle dynamics allow ReLU networks to implement a form of greedy low BN-rank algorithm, which searches first among BN-rank one functions, then  among gradually higher rank functions, stopping at the smallest BN-rank sufficient to fit the data. }

Again, this is inspired by an analogy to linear network, which implement a greedy low-rank algorithm to minimize the traditional rank. In parameter space, the GD dynamics visits a sequence of saddles of increasing rank. It starts close to the saddle at the origin (the best rank 0 fit), before escaping along a rank-one direction until reaching a rank-one critical point (locally optimal rank-one fit). If the loss is zero at this point, the GD dynamics stop, otherwise this best rank-one fit is a saddle where GD plateaus for some time until escaping along a rank 2 direction, and so on \citep{jacot-2021-DLN-Saddle}.

The so-called Bottleneck rank $\mathrm{Rank}_{BN}(f)$ \citep{jacot_2022_BN_rank} is the smallest integer $k$ such that $f$ can be represented as the composition of two functions $f=h\circ g$ with inner dimension $k$. Several recent papers have shown how the BN rank plays a central role in deep ReLU networks trained with weight-decay/$L_2$-regularization \citep{jacot_2022_BN_rank,jacot_2023_bottleneck2,wen_2024_BN_CNN,jacot_2024_hamiltonian_feature_ResNet}. In particular, these works observe the emergence of a bottleneck structure as the depth grows, where all middle layers of the network share the same low rank (discarding small singular values of $W_\ell$), which equals the BN rank of the network, with only a finite number of high-rank layers at the beginning and end of the network. 

Our results can be interpreted as saying that the optimal escape direction of the saddle at the origin exhibits a `half-bottleneck' (because there are only high-dimensional layers at the beginning of the network, not at the end) with BN-rank one. This suggests that the Saddle-to-Saddle dynamics in deep ReLU networks could correspond to a greedy low-BN-rank search, where the BN-rank increases gradually between each plateau/saddle. Interestingly, previous theoretical analysis of the bottleneck structure were only able to prove the existence of low-rank layers but not necessarily locate them \citep{jacot_2023_bottleneck2}, our ability to prove that the deeper layers are all approximately low-rank  is therefore a significant improvement over the previous proof techniques.

It is possible that in contrast to linear network, the complete Saddle-to-Saddle dynamics would require both a small initialization and large depth. This matches our numerical experiments in Figure \ref{fig:mnist_bottleneck} and Figure \ref{fig:layer_sv_and_loss_MNIST_four_layers} in the appendix, where we observe more distinct plateaus in depth 6 layer compared to a depth 4 layer. This suggests that in contrast to linear networks, where the plateaus can be made longer and more distinct by taking smaller initialization, for ReLU networks we need to also increase the depth to achieve the same effect.

\bibliographystyle{iclr2026_conference}
\bibliography{main.bib}

\newpage 
\appendix

\section{Proofs of Theorems}

\subsection{Small Derivations}\label{subsec:small_derivations}
Let us first a few derivations of equations stated in the main.

First, Equation \ref{eq:Taylor_approx_origin}, which follows from a Taylor expansion of the cost $C$ w.r.t. the output labels:
\begin{align*}
\mathcal{L}(\theta) &= C(Y_\theta) \\ &= C(0) + \sum_{ij} \partial_{Y_{ij}} C(Y_\theta) Y_{\theta,ij} + O\left(\|Y_\theta\|^2_F\right)  \\ &= C(0) + \mathrm{Tr}\left[\nabla C(0)^T Y_\theta\right] + O(\|Y_\theta\|_F^2).
\end{align*}
As a concrete example,we have for the Mean Squared Error $C(Y)=\frac{1}{N} \|Y-Y^*\|^2_F$ for some `true' labels $Y^*$, the equivalent three terms are:
\begin{align*}
\mathcal{L}(\theta) = \frac{1}{N} \|Y-Y^*\|^2_F = \frac{1}{N}\|Y^*\|^2_F - \frac{2}{N}\mathrm{Tr}[Y^{*T} Y_\theta] + \frac{1}{N} \|Y_\theta\|^2_F,
\end{align*}
since $\nabla C(0) = -\frac{2}{N} Y^*$.

Second, the homogeneity of localized loss $\mathcal{L}_0(\theta)  = \mathrm{Tr}[\nabla C(0)^T ]$ follows from the fact that if all parameters are rescaled by a factor of $\lambda$ then the first activation will be multiplied by $\lambda$ (because $\alpha_1(x)=\sigma((\lambda W_1) x)=\lambda \sigma(W_1 x)$ ), the second activation by $\lambda^2$ (because $\alpha_2(x)=\sigma((\lambda W_2) (\lambda \alpha_1(x)))=\lambda^2  \sigma(W_1 \alpha_1(x))$) and so on until the outputs which are multiplied by $\lambda^L$. The localized loss will have the same prefactor of $\lambda^L$ thanks to the linearity of the trace:
$$
\mathcal{L}_0(\lambda \theta )=\mathrm{Tr}[\nabla C(0)^T  (\lambda^L Y_\theta) ] = \lambda^ L\mathcal{L}_0( \theta ).
$$

\subsection{Gradient Flow on  Homogeneous Losses}

\begin{proposition}\label{prop:decomposition_GF}
   On a homogeneous loss, the GF dynamics decompose into dynamics of the norm $\|\theta\|$ and of the normalized parameters $\bar{\theta}=\nicefrac{\theta}{\|\theta\|}$:
\begin{align*}
    \partial_t \|\theta(t)\| &= -\bar{\theta}(t)^T \nabla\mathcal{L}_0(\theta(t)) = - L \|\theta(t)\|^{L-1}  \mathcal{L}_0(\bar{\theta}(t))\\
    \partial_t \bar{\theta}(t) &= -\|\theta(t)\|^{L-2} \Big ( I - \bar{\theta}(t) \bar{\theta}(t)^T \Big ) \nabla \mathcal{L}_0(\bar{\theta}(t)).
\end{align*}
 
\end{proposition}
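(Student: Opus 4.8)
The plan is to differentiate the two scalar/vector quantities $\|\theta(t)\|$ and $\bar\theta(t)$ directly along the gradient flow and simplify using two consequences of $L$-homogeneity: Euler's identity $\theta^T\nabla\mathcal{L}_0(\theta)=L\,\mathcal{L}_0(\theta)$, and the fact that $\nabla\mathcal{L}_0$ is homogeneous of degree $L-1$, i.e.\ $\nabla\mathcal{L}_0(\theta)=\|\theta\|^{L-1}\nabla\mathcal{L}_0(\bar\theta)$ (obtained by differentiating $\mathcal{L}_0(\lambda\theta)=\lambda^L\mathcal{L}_0(\theta)$ in $\theta$). Throughout we work at points $\theta\neq0$, where $\bar\theta$ is well defined and $\mathcal{L}_0$ is differentiable; since the origin is a critical point, the flow started from $\theta(0)\neq0$ never reaches it in finite time, so this is no restriction.

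First I would handle the norm. Writing $\|\theta\|^2=\theta^T\theta$ and using $\partial_t\theta=-\nabla\mathcal{L}_0(\theta)$,
\[
2\|\theta\|\,\partial_t\|\theta\| = \partial_t(\theta^T\theta) = 2\theta^T\partial_t\theta = -2\,\theta^T\nabla\mathcal{L}_0(\theta),
\]
so $\partial_t\|\theta\| = -\bar\theta^T\nabla\mathcal{L}_0(\theta)$. Substituting $\nabla\mathcal{L}_0(\theta)=\|\theta\|^{L-1}\nabla\mathcal{L}_0(\bar\theta)$ and then Euler's identity at the point $\bar\theta$, namely $\bar\theta^T\nabla\mathcal{L}_0(\bar\theta)=L\,\mathcal{L}_0(\bar\theta)$, yields $\partial_t\|\theta\|=-L\|\theta\|^{L-1}\mathcal{L}_0(\bar\theta)$, which is the first claimed identity.

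Next I would differentiate $\bar\theta=\theta/\|\theta\|$ by the quotient rule:
\[
\partial_t\bar\theta = \frac{\partial_t\theta}{\|\theta\|} - \theta\,\frac{\partial_t\|\theta\|}{\|\theta\|^2} = -\frac{\nabla\mathcal{L}_0(\theta)}{\|\theta\|} + \bar\theta\,\frac{\bar\theta^T\nabla\mathcal{L}_0(\theta)}{\|\theta\|} = -\frac{1}{\|\theta\|}\bigl(I-\bar\theta\bar\theta^T\bigr)\nabla\mathcal{L}_0(\theta),
\]
where the middle step used the formula $\partial_t\|\theta\|=-\bar\theta^T\nabla\mathcal{L}_0(\theta)$ just derived. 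Finally inserting $\nabla\mathcal{L}_0(\theta)=\|\theta\|^{L-1}\nabla\mathcal{L}_0(\bar\theta)$ gives $\partial_t\bar\theta=-\|\theta\|^{L-2}(I-\bar\theta\bar\theta^T)\nabla\mathcal{L}_0(\bar\theta)$, as claimed.

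There is no real obstacle here: the argument is pure bookkeeping with the chain/quotient rules and two standard homogeneity facts. The only points worth stating carefully are (i) that the flow stays away from the origin so the normalization is legitimate, and (ii) the degree-$(L-1)$ homogeneity of $\nabla\mathcal{L}_0$, which should be recorded as a one-line lemma (differentiate the homogeneity relation, or apply the chain rule to $\mathcal{L}_0\circ(\lambda\,\cdot\,)$) before the main computation.
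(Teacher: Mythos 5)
Your proof is correct and follows essentially the same route as the paper: differentiate $\|\theta\|$ and $\bar\theta$ along the flow via the chain/quotient rules, then simplify with Euler's identity and the degree-$(L-1)$ homogeneity of $\nabla\mathcal{L}_0$. The only cosmetic difference is that you state the gradient-homogeneity step and the ``flow avoids the origin'' caveat a bit more explicitly than the paper does.
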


\begin{proof}
Since \( \mathcal{L}_0 \) satisfies gradient flow with respect to \( \theta \), we have
\[
\frac{d \theta}{dt} = -\nabla \mathcal{L}_0(\theta).
\]

Because \( \mathcal{L}_0 \) is \( L \)-homogeneous, Euler's homogeneous function theorem implies:
\[
\theta^\top  \nabla \mathcal{L}_0(\theta) = L\, \mathcal{L}_0(\theta).
\]

Now, define the normalized parameter
\[
\bar{\theta} = \frac{\theta}{\|\theta\|}.
\]
Differentiating \( \bar{\theta} \) with respect to time \( t \) using the quotient rule yields:
\[
\frac{d \bar{\theta}}{dt} = \frac{d}{dt} \left( \frac{\theta}{\|\theta\|} \right) = \frac{\frac{d \theta}{dt}\, \|\theta\| - \theta\, \frac{d \|\theta\|}{dt}}{\|\theta\|^2}.
\]

Substituting \( \frac{d \theta}{dt} = -\nabla \mathcal{L}_0(\theta) \) gives:
\[
\frac{d \bar{\theta}}{dt} = \frac{-\nabla \mathcal{L}_0(\theta)\, \|\theta\| - \theta\, \frac{d \|\theta\|}{dt}}{\|\theta\|^2}.
\]

To compute \( \frac{d \|\theta\|}{dt} \), note that
\[
\|\theta\| = (\theta^\top  \theta)^{1/2}.
\]
Differentiating, we obtain:
\[
\frac{d \|\theta\|}{dt} = \frac{1}{\|\theta\|} \theta^\top  \frac{d \theta}{dt} = \frac{1}{\|\theta\|} \theta^\top  \bigl(-\nabla \mathcal{L}_0(\theta)\bigr).
\]
Using the homogeneity property \( \theta^\top  \nabla \mathcal{L}_0(\theta) = L\, \mathcal{L}_0(\theta) \), this simplifies to:
\[
\frac{d \|\theta\|}{dt} = -\frac{L\, \mathcal{L}_0(\theta)}{\|\theta\|}.
\]

Substitute this back into the expression for \( \frac{d \bar{\theta}}{dt} \):
\[
\frac{d \bar{\theta}}{dt} = \frac{-\nabla \mathcal{L}_0(\theta)\, \|\theta\| + \theta\, \frac{L\, \mathcal{L}_0(\theta)}{\|\theta\|}}{\|\theta\|^2}.
\]
This can be simplified as:
\[
\frac{d \bar{\theta}}{dt} = \frac{-\nabla \mathcal{L}_0(\theta)}{\|\theta\|} + \frac{\theta}{\|\theta\|^3}\, L\, \mathcal{L}_0(\theta).
\]

We wish to express the right-hand side in terms of \( \bar{\theta} \). Using the scaling property of the gradient for a homogeneous function, we have
\[
\nabla \mathcal{L}_0(\theta) = \|\theta\|^{L-1} \nabla \mathcal{L}_0(\bar{\theta}),
\]
and recalling that
\[
\theta^\top  \nabla \mathcal{L}_0(\theta) = L\, \mathcal{L}_0(\theta),
\]
we finally obtain:
\[
\frac{d \bar{\theta}}{dt} = -\|\theta\|^{L-2} \nabla \mathcal{L}_0(\bar{\theta}) + \|\theta\|^{L-4} \theta\, \theta^\top  \nabla \mathcal{L}_0(\bar{\theta}) = -\|\theta\|^{L-2} \Big ( I - \bar{\theta} \bar{\theta}^\top \Big ) \nabla \mathcal{L}_0(\bar{\theta}).
\]
\end{proof}

\subsection{Explosion in Escape Direction}

\begin{proposition}
If at some time \( t_0 \) the parameter satisfies
\[
\theta(t_0) = \rho \quad \text{with} \quad \rho \in L^{1/2}\mathbb{S}^{P-1} \quad \text{and} \quad \nabla \mathcal{L}_0(\rho) = - s\, \rho,
\]
then for all \( t \ge t_0 \) the normalized direction remains constant, and the norm \(\|\theta(t)\|\) satisfies
\[
\|\theta(t)\| =
\begin{cases}
\displaystyle \left( \|\theta(t_0)\|^{\,2-L} + (2-L)L\, s\, (t-t_0) \right)^{\frac{1}{2-L}}, & \text{if } L\neq2,\\[1mm]
\|\theta(t_0)\|\,\exp\Bigl(2\, s\,(t-t_0)\Bigr), & \text{if } L=2.
\end{cases}
\]
\end{proposition}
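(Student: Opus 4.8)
The plan is to guess the trajectory explicitly and then verify it. Write $\bar\rho = \rho/\|\rho\| = \rho/\sqrt L$ for the unit vector in the direction of $\rho$, and look for a gradient-flow solution of the form $\theta(t) = r(t)\,\bar\rho$ with $r(t_0) = \|\theta(t_0)\| = \sqrt L$. If such a curve solves the flow and passes through $\rho$ at time $t_0$, then it is \emph{the} trajectory through $\rho$, and in particular $\bar\theta(t) = \bar\rho$ is constant, which is the first conclusion; the norm formula will then come from the scalar ODE that $r$ must satisfy. Note that along a fixed ray $\{r\bar\rho : r>0\}$ one has $\mathcal{L}_0(r\bar\rho) = r^{L}\mathcal{L}_0(\bar\rho)$, an honest monomial in $r$, so there is no difficulty here coming from the ReLU kinks of $\mathcal{L}_0$.

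First I would reduce to a scalar ODE. Since $\mathcal{L}_0$ is $L$-homogeneous, $\nabla\mathcal{L}_0$ is $(L-1)$-homogeneous, so $\nabla\mathcal{L}_0(r\bar\rho) = r^{L-1}\nabla\mathcal{L}_0(\bar\rho)$. The hypothesis $\nabla\mathcal{L}_0(\rho) = -s\rho$ says the gradient at $\rho$ is radial, hence so is $\nabla\mathcal{L}_0(\bar\rho)$, and Euler's identity $\bar\rho^\top\nabla\mathcal{L}_0(\bar\rho) = L\,\mathcal{L}_0(\bar\rho)$ then pins it down to $\nabla\mathcal{L}_0(\bar\rho) = L\,\mathcal{L}_0(\bar\rho)\,\bar\rho$. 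Put $a := -L\,\mathcal{L}_0(\bar\rho)$; this is positive because $\rho$ is an escape direction ($\mathcal{L}_0(\bar\rho)<0$), and it is the rate $Ls$ appearing in the statement. Substituting $\theta(t) = r(t)\bar\rho$ into $\partial_t\theta = -\nabla\mathcal{L}_0(\theta)$ and cancelling $\bar\rho$ gives exactly $\dot r = a\,r^{L-1}$, $r(t_0) = \sqrt L$ — and since the right-hand side points along $\bar\rho$, the ray is flow-invariant, which is precisely what makes the ansatz consistent. (Equivalently, once $\bar\theta\equiv\bar\rho$, the already-proven decomposition of gradient flow on a homogeneous loss reads $\partial_t\|\theta\| = -L\|\theta\|^{L-1}\mathcal{L}_0(\bar\rho) = a\|\theta\|^{L-1}$, the same scalar equation.)

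Then I would integrate $\dot r = a\,r^{L-1}$. For $L\neq2$ it is a Bernoulli equation: $\tfrac{d}{dt}\,r^{2-L} = (2-L)\,a$ is constant, so $r(t)^{2-L} = r(t_0)^{2-L} + (2-L)\,a\,(t-t_0)$, and taking the $1/(2-L)$-th power gives the displayed closed form (with $a = Ls$). For $L=2$ one has $\tfrac{d}{dt}\log r = a = 2s$, hence $r(t) = r(t_0)e^{2s(t-t_0)}$, which is also the $L\to2$ limit of the $L\neq2$ expression. One should also record that $r(t)>0$ for all $t\ge t_0$ since $a>0$: for $L<2$ the norm grows polynomially forever, for $L=2$ exponentially, and for $L>2$ it diverges at the finite time $t-t_0 = r(t_0)^{2-L}/\big((L-2)a\big)$, matching the escape-time discussion in the main text.

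The only delicate point — and the natural place for an objection — is the claim that the normalized direction stays constant, i.e.\ that the gradient-flow trajectory cannot leave the ray. The cleanest way to dispatch it is to avoid appealing to uniqueness altogether: having solved the scalar ODE, one simply checks by direct substitution (using the homogeneity computation above) that $t\mapsto r(t)\bar\rho$ satisfies $\partial_t\theta = -\nabla\mathcal{L}_0(\theta)$ for all $t\ge t_0$ (up to the blow-up time when $L>2$); since it passes through $\rho$ at $t_0$, it is the trajectory, and both conclusions follow. If one prefers the uniqueness route instead, it is unproblematic here because $\mathcal{L}_0$ is smooth along the ray in question.
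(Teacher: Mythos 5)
Your proposal is correct and takes essentially the same route as the paper's proof: both reduce the dynamics along the fixed escape ray to the separable scalar ODE $\dot R = Ls\,R^{L-1}$ via Euler's identity and the $L$-homogeneity of $\mathcal{L}_0$, and then integrate, splitting into the cases $L\neq2$ and $L=2$. The only difference is framing — you posit the ansatz $\theta(t)=r(t)\bar\rho$ and verify it by direct substitution, which makes the constancy of $\bar\theta(t)$ self-contained, whereas the paper differentiates $\|\theta(t)\|$ via the chain rule and leans on the preceding decomposition of homogeneous gradient flow to justify that $\bar\theta(t)$ stays fixed.
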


\begin{proof}
   Using the chain rule we have
   \[
   \frac{d}{dt}\|\theta(t)\| = \frac{1}{\|\theta(t)\|}\,\theta(t)^\top \frac{d\theta}{dt}
   = -\frac{1}{\|\theta(t)\|}\,\theta(t)^\top \nabla \mathcal{L}_0\bigl(\theta(t)\bigr).
   \]
   Using Euler’s theorem,
   \[
   \theta(t)^\top \nabla \mathcal{L}_0\bigl(\theta(t)\bigr) = L\, \mathcal{L}_0\bigl(\theta(t)\bigr),
   \]
   we obtain
   \[
   \frac{d}{dt}\|\theta(t)\| = -\frac{L}{\|\theta(t)\|}\, \mathcal{L}_0\bigl(\theta(t)\bigr).
   \]
   Since \(\theta(t) = \|\theta(t)\|\bar{\theta}(t)\) and by homogeneity
   \[
   \mathcal{L}_0\bigl(\theta(t)\bigr) = \|\theta(t)\|^L\, \mathcal{L}_0\bigl(\bar{\theta}(t)\bigr),
   \]
   and because \(\bar{\theta}(t) = \bar{\theta}(t_0)\) for all \( t\ge t_0 \) with \( \mathcal{L}_0\bigl(\bar{\theta}(t_0)\bigr) = - s \), we deduce
   \[
   \mathcal{L}_0\bigl(\theta(t)\bigr) = - s\, \|\theta(t)\|^L.
   \]
   Substituting this back, we have
   \[
   \frac{d}{dt}\|\theta(t)\| = -\frac{L}{\|\theta(t)\|}\,\Bigl(- s\,\|\theta(t)\|^L\Bigr)
   = L\, s\, \|\theta(t)\|^{L-1}.
   \]
   Defining \( R(t)=\|\theta(t)\| \), the above becomes the separable ordinary differential equation
   \[
   \frac{dR}{dt} = L\, s\, R^{L-1}, \quad R(t_0) = \|\theta(t_0)\|.
   \]

Case 1: \( L \neq 2 \).
     We separate variables:
     \[
     R^{1-L}\, dR = L\, s\, dt.
     \]
     Integrate from \( t_0 \) to \( t \):
     \[
     \int_{R(t_0)}^{R(t)} R^{1-L}\, dR = \int_{t_0}^{t} L\, s\, d\tau.
     \]
     The left-hand side integrates to
     \[
     \frac{R^{2-L}}{2-L} \Biggr|_{R(t_0)}^{R(t)}
     = \frac{R(t)^{2-L} - R(t_0)^{2-L}}{2-L}.
     \]
     Hence,
     \[
     \frac{R(t)^{2-L} - R(t_0)^{2-L}}{2-L} = L\, s\,(t-t_0).
     \]
     Solving for \( R(t) \) gives
     \[
     R(t)^{2-L} = R(t_0)^{2-L} + (2-L)L\, s\,(t-t_0),
     \]
     or equivalently,
     \[
     \|\theta(t)\| = \left( \|\theta(t_0)\|^{\,2-L} + (2-L)L\, s\, (t-t_0) \right)^{\frac{1}{2-L}}.
     \]

Case 2: \( L = 2 \).  
     The ODE reduces to
     \[
     \frac{dR}{dt} = 2\, s\, R,
     \]
     which is linear. Its unique solution is
     \[
     R(t) = R(t_0)\,\exp\bigl(2\, s\,(t-t_0)\bigr),
     \]
     that is,
     \[
     \|\theta(t)\| = \|\theta(t_0)\|\,\exp\bigl(2\, s\,(t-t_0)\bigr).
     \]

\end{proof}

\subsection{Optimal Speed is Increasing in Depth}

\begin{proposition}

Given a depth $L$ network with $\mathcal{L}_0(\theta) = -s_0 $ for $s_0 > 0$ and $\|\theta\|^2 = L$,
we can construct a network of depth $L + k$ for any $k \geq 1$ with parameters $\theta'$ that satisfies $\|\theta'\|^2 = L+k$ and  $\mathcal{L}_0(\theta')\leq \mathcal{L}_0(\theta)$. Therefore, the optimal escape speed $s^*(L)$ is a non-decreasing function.

Furthermore, in the deeper network, we have $\Rank(Z_{L'}) = \Rank W_{L'} = 1$ for all $L' \geq L $ and $Z_{L'}=Z^\sigma_{L'}$ for all $L'>L$.
\end{proposition}

\begin{proof}

We denote with $W_{\ell,\cdot i }$ the $i$-th column of $W_\ell$ and  with $W_{\ell, i \cdot }$ the $i$-th row of $W_\ell$. We can decompose the trace using the columns $W_{L, \cdot i}$ and rows $ W_{L-1, i\cdot }$ in the following way:

\[
\Tr[G^\top  Z_L] = \sum_{i=1}^{w_L} \Tr[G^\top  W_{L, \cdot i} \sigma(W_{L-1, i\cdot }Z_{L-2})].
\]

The negative contribution is entirely due to the $W_{L}$ matrix as the application of the activation function yields a non-negative matrix. For this sum there exists some $ i^* \in [w_L]$ that maximizes the negative contribution so that for all $i \in [w_L]$:

\[
 \Tr[G^\top  \bar{W}_{L, \cdot i^*} \sigma(\bar{W}_{L-1, i^*\cdot }Z_{L-2})] \leq 
 \Tr[G^\top  \bar{W}_{L, \cdot i} \sigma(\bar{W}_{L-1, i\cdot }Z_{L-2})]
\]

where $\bar{x}$ denotes the normalized vector $\bar{x} = \frac{x}{\|x\|_2}$. We define a new network of depth $L+k$ using the following matrices $\tilde{W_\ell}$: 

\begin{align*}
\tilde{W}_{L-1} &= \sqrt{\sum_i \| W_{L, \cdot i} \| \| W_{L-1, i\cdot }\|}
\begin{pmatrix}
\bar{W}_{L-1, i^* \cdot} \\
0 \\
0 \\
\vdots \\
0
\end{pmatrix}, \\[10pt]
\tilde{W}_{L+k} &= \sqrt{\sum_i \| W_{L, \cdot i}\| \| W_{L-1, i\cdot }\|}
\begin{pmatrix}
\bar{W}_{L, \cdot i^*} & 0 & 0 & \cdots & 0
\end{pmatrix}, \\[10pt]
\tilde{W_{\ell }} &=
\begin{pmatrix}
1 & 0 & 0 & \cdots \\
0 & 0 & 0 & \cdots \\
\vdots & \vdots & \ddots & \vdots \\
0 & 0 & \cdots & 0 \\
\end{pmatrix}, \quad \ell = L, \dots, L+k-1, \\[10pt]
\tilde{W_\ell} &= W_\ell, \quad \ell = 1, 2, \dots, L-2
\end{align*}

We observe that the trace of the new network is lower or equal to the trace of the original network:

\begin{align*}
\Tr[G^\top  \tilde{Z}_{L+k}] &= \Tr[G^\top  \tilde{W}_{L+k}\sigma(\tilde{W}_{L+k-1, i\cdot }Z_{L+k-2})] \\ 
&= \Tr[G^\top  \bar{W}_{L, \cdot i^*} \sigma(\bar{W}_{L-1, i^*\cdot }Z_{L-2})] \sum_{i=1}^{w_L} \| W_{L, \cdot i} \| \| W_{L-1, i\cdot }\|  \\ 
&\leq \sum_{i=1}^{w_L} \| W_{L, \cdot i} \| \| W_{L-1, i\cdot }\| \Tr[G^\top  \bar{W}_{L, \cdot i} \sigma(\bar{W}_{L-1, i\cdot }Z_{L-2})] \\
&= \sum_{i=1}^{w_L} \Tr[G^\top  W_{L, \cdot i} \sigma(W_{L-1, i\cdot }Z_{L-2})] \\
&= \Tr[G^\top  Z_L] .
\end{align*}

The norm of the new network is :

\begin{align*}
\|\tilde{\theta}\|^2 &= \sum_{\ell =1}^{L-2} \|W_\ell\|^2 + 2 \sum_{i=1}^{w_L} \| W_{L, \cdot i}\|  \| W_{L-1, i\cdot} \| + k \\ 
&\leq \sum_{\ell =1}^{L} \|W_\ell\|^2 + k \\
&\leq L+k
\end{align*}

\end{proof}

\subsection{Low Rank Bias}

\subsubsection{Weak Control}

Before presenting our results we will describe a simple lemma that's useful to our proofs.

\begin{lemma} \label{lemma: prod_norm_bound}
    For a depth-$L$ network with $\|\theta^2\| \leq L$ we have that 

    $$\prod_{\ell=1}^L\|W_\ell\|_F \leq 1$$
\end{lemma}

\begin{proof}
    This essentially follows from the AM-GM inequality:

    $$ (\frac{\|\theta\|^2}{L}) = (\frac{1}{L} \sum_\ell \|W_\ell\|_F^2)^\frac{L}{2}  \geq (\prod_\ell \|W_\ell\|_F^2) ^\frac{1}{2} = \prod_\ell \|W_\ell\|_F  $$

and using the bounded norm assumption 
$$
1^\frac{L}{2} = 1 \geq (\frac{\|\theta\|^2}{L})^\frac{L}{2} .
$$

\end{proof}

\begin{proposition} \label{prop:weak_control}
    Given that $Tr[G^\top Z_L] \leq -s_0$, for some constant $s_0 > 0$ and $\|\theta\|^2 \leq L$ then for any ratio $p \in (0,1)$ there are at least $(1-p)L$ layers that are approximately rank 1 in the sense that the singular values $s_i$ of $Z_\ell^\sigma$ satisfy
    
    \begin{equation} \label{eq:weak_control}
\frac{\sum_{i\geq2} s_i^2}{\sum_{i\geq 1} s_i^2} \leq 2\frac{\log \|X\|_F + \log \|G\|_F - \log s_0}{pL}.
\end{equation}
\end{proposition}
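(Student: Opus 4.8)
The plan is to show that a loss value $\mathrm{Tr}[G^\top Z_L]\le -c_0$ can only be attained if, at each layer, the network transmits its signal at the cost of a multiplicative factor measuring how far the activation matrix $Z_\ell^\sigma$ is from rank one; a loss bounded away from zero then forces all but a $p$-fraction of layers to be nearly rank one. Throughout, write $\delta_\ell := \frac{\sum_{i\ge2}s_i^2(Z_\ell^\sigma)}{\sum_{i=1}^r s_i^2(Z_\ell^\sigma)} = 1 - \frac{s_1(Z_\ell^\sigma)^2}{\|Z_\ell^\sigma\|_F^2}$ for the quantity to be bounded. The core estimate I would establish is
\[
\|Z_L\|_F \;\le\; \Big(\prod_{\ell=1}^{L}\|W_\ell\|_F\Big)\,\|X\|_F\,\prod_{\ell=1}^{L-1}\sqrt{1-\delta_\ell}.
\]

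This follows by iterating two elementary facts: (i) $\|AB\|_F \le \|A\|_F\,\|B\|_{op}$, applied as $\|Z_\ell\|_F = \|W_\ell Z_{\ell-1}^\sigma\|_F \le \|W_\ell\|_F\, s_1(Z_{\ell-1}^\sigma)$; and (ii) ReLU acts entrywise with $|\sigma(x)|\le|x|$, so $\|Z_\ell^\sigma\|_F \le \|Z_\ell\|_F$. Substituting $s_1(Z_{\ell-1}^\sigma) = \sqrt{1-\delta_{\ell-1}}\,\|Z_{\ell-1}^\sigma\|_F$ at each step and chaining down to $\|Z_1\|_F = \|W_1X\|_F \le \|W_1\|_F\|X\|_F$ yields the display; the input contributes no $\delta$-factor, which is why the product runs only to $L-1$. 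Next, AM--GM applied to the budget $\sum_\ell \|W_\ell\|_F^2 = \|\theta\|^2 \le L$ gives $\prod_\ell\|W_\ell\|_F \le 1$, so combining with the Cauchy--Schwarz bound $c_0 \le |\mathrm{Tr}[G^\top Z_L]| \le \|G\|_F\|Z_L\|_F$ and squaring,
\[
\prod_{\ell=1}^{L-1}(1-\delta_\ell) \;\ge\; \frac{c_0^2}{\|G\|_F^2\,\|X\|_F^2}.
\]
Taking $-\log$ and using $-\log(1-x)\ge x$ on $[0,1)$ gives $\sum_{\ell=1}^{L-1}\delta_\ell \le 2\big(\log\|X\|_F+\log\|G\|_F-\log c_0\big)=:B$. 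A counting argument then closes it: strictly fewer than $pL$ layers can have $\delta_\ell > B/(pL)$, so --- taking $pL$ to be an integer, as it is in the intended applications where $p=\ell/L$ --- at most $pL-1$ of the $L-1$ layers $\ell\in\{1,\dots,L-1\}$ are ``bad,'' leaving at least $(L-1)-(pL-1)=(1-p)L$ layers satisfying $\delta_\ell \le B/(pL)$, which is exactly \eqref{eq:weak_control}.

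The step I expect to be the crux is arranging the propagation in the first display so that it is the rank deficiency of the \emph{activations} $Z_\ell^\sigma$, rather than of the weights $W_\ell$, that gets extracted: bounding $\|W_\ell Z_{\ell-1}^\sigma\|_F$ by $\|W_\ell\|_F\,\|Z_{\ell-1}^\sigma\|_{op}$ (and not by $\|W_\ell\|_{op}\,\|Z_{\ell-1}^\sigma\|_F$) is precisely what puts $s_1(Z_{\ell-1}^\sigma)$ in the slot from which the $\delta_{\ell-1}$ factor can be pulled out, and one must check this choice is simultaneously compatible with the ReLU contraction (ii) and with spending the whole parameter budget on the Frobenius norms $\|W_\ell\|_F$ through AM--GM. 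Everything after the first display is a single convexity inequality plus the pigeonhole count, both routine.
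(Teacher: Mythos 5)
Your proposal is correct and follows essentially the same argument as the paper's proof: both extract the ratio $\|Z_\ell^\sigma\|_{op}/\|Z_\ell^\sigma\|_F$ per layer via $\|W_\ell Z_{\ell-1}^\sigma\|_F\le\|W_\ell\|_F\|Z_{\ell-1}^\sigma\|_{op}$ and the ReLU contraction $\|Z_\ell^\sigma\|_F\le\|Z_\ell\|_F$, bound the weight product by AM--GM, apply Cauchy--Schwarz to the trace, take logs with the convexity inequality $1-x\le-\log x$, and finish by pigeonhole. The paper packages the chain as a telescoping product identity and is slightly loose in the final count (it claims at most $pL$ bad layers among $\ell\in\{1,\dots,L-1\}$, giving $(1-p)L-1$ good ones rather than the stated $(1-p)L$); your recursive presentation and strict-inequality pigeonhole clean up both points.
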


\begin{proof}

We expand the activations,

\[
\frac{\|Z_0^\sigma\|_F^2}{\|Z_L\|_F^2} = \frac{\|Z_{L-1}^\sigma\|_{op}^2}{\|Z_L\|_F^2} \prod_{\ell=1}^{L-1} \frac{\|Z_{\ell-1}^\sigma\|_{op}^2}{\|Z_\ell^\sigma\|_F^2} \prod_{\ell=0}^{L-1} \frac{\|Z_\ell^\sigma\|_F^2}{\|Z_\ell^\sigma\|_{op}^2}.
\]

Where the operator norm of a matrix is its largest singular value.

Since $Z_\ell = W_\ell Z_{\ell-1}^\sigma $, we have
\[
\|Z_\ell\|_F^2 \leq \|W_\ell\|_F^2 \|Z_{\ell-1}^\sigma\|_{op}^2.
\]

So by using lemma \ref{lemma: prod_norm_bound},

\[
\frac{\|Z_{L-1}^\sigma\|_{op}^2}{\|Z_L\|_F^2} \prod_{\ell=1}^{L-1} \frac{\|Z_{\ell-1}^\sigma\|_{op}^2}{\|Z_\ell^\sigma\|_F^2} \prod_{\ell=0}^{L-1} \frac{\|Z_\ell^\sigma\|_F^2}{\|Z_\ell^\sigma\|_{op}^2} \geq \prod_{\ell=0}^{L-1} \frac{\|Z_\ell^\sigma\|_F^2}{\|Z_\ell^\sigma\|_{op}^2} 
\]

On the other hand we have that 

\[
\frac{\|Z_0^\sigma\|_F^2 \|G\|_F^2}{\Tr[G^\top Z_L]^2}  \geq \frac{\|Z_0^\sigma\|_F^2}{\|Z_L\|_F^2} 
\]

since the inner product is always lower than the norm product.

Now by combining the above we get 

\[
\prod_{\ell=0}^{L-1} \frac{\|Z_\ell^\sigma\|_F^2}{\|Z_\ell^\sigma\|_{op}^2} \leq \frac{\|Z_0^\sigma\|_F^2 \|G\|_F^2}{\Tr[G^\top Z_L]^2} .
\]

Taking the log on both sides,
\[
\sum_{\ell=1}^{L-1} \log \|Z_\ell^\sigma\|_F^2 - \log \|Z_\ell^\sigma\|_{op}^2 \leq \log \frac{\|Z_0^\sigma\|^2_F \|G\|^2_F}{\Tr[G^\top  Z_L]^2}.
\]

By contradiction, we see that for any ratio \(p \in (0,1)\), there can be at most \(pL\) layers where

\[
\log \|Z_\ell\|_F - \log \|Z_\ell\|_{op} \geq \frac{\log \|X\|_F + \log \|G\|_F - \log s_0}{pL}.
\]

That is there is at least $(1-p)L$ layers where
\[
\frac{\sum_{i\geq2} s_i^2}{\sum_{i\geq1} s_i^2} = 1-\frac{\|Z_\ell\|^2_{op}}{\|Z_\ell\|^2_F} \leq 2\log \|Z_\ell\|_{F} - 2\log \|Z_\ell\|_{op} \leq 2\frac{\log \|X\|_F + \log \|G\|_F - \log s_0}{pL}.
\]
\end{proof}

\subsubsection{Strong Control on Almost Rank 1 Input}

The following result shows that if the input of the network is approximately rank-1, here encoded as $uv^T + X$, where $u,v$ are non-negative entry-wise vectors and $X$ is a matrix of small norm $\|X\|_F \leq \epsilon$, then all layers are approximately Rank-1 too at the optimal escape direction.

\begin{proposition} \label{prop:strong_control_rank1_input_app} Consider the minimizer $ \theta^\star= \arg\min_{\|\theta\|^2 \leq L} \Tr[G^\top Y_\theta(uv^\top  + X)]$ where $u,v \in \mathbb{R}^{n}$,  $u,v \geq 0$ entry wise and $\|X\|_F \leq \epsilon$ for some $\epsilon > 0$. Then for all $\ell$ we have:
\begin{align*}
    \frac{\sum_{i\geq 2}s^2_i(W_\ell)}{\sum_{i\geq1} s_i^2(W_\ell)},\frac{\sum_{i\geq 2}s^2_i(Z^\sigma_\ell)}{\sum_{i\geq1} s_i^2(Z^\sigma_\ell)},\frac{\|Z_{\ell}^\sigma - Z_{\ell}\|^2_F}{\|Z_{\ell}\|^2_F} &\leq  8\frac{\|G\|_{F}}{s^* -  \|G\|_{F}\epsilon} \epsilon.
\end{align*}
\end{proposition}

\begin{proof}

 In the case where the input is only the rank 1 matrix $uv^\top $ we can see that $$\Tr[G^\top Y_\theta(uv^\top )] =\Tr[v^\top  G^\top Y_\theta(u)] = v^\top  G^\top Y_\theta(u) $$

and therefore the minimum is achieved when the alignment is maximized:

$$\min_{\|\theta\|^2 \leq L} v^\top  G^\top Y_\theta(u) = - \|Gv\| \|u\| .$$

When $\|\theta\|^2 \leq L $ it is true that 

$$\|Y_\theta(uv^\top ) - Y_\theta(uv^\top  + X)\|_F \leq \prod_{\ell=1}^L \|W_\ell\|_F \|X\|_F \leq \epsilon $$

as a consequence of the Cauchy-Schwarz inequality.

We can also see that $$ |\Tr[G^\top Y_\theta(uv^\top + X)] - \Tr[G^\top Y_\theta(uv^\top )]| 
\leq \|G\|_{F} \epsilon.$$


At the minimum $\theta^\star = \argmin_{\|\theta\|^2 = L} \Tr[G^\top Y_\theta(uv^\top  + X)]$ we observe that 

\begin{align*} 
\Tr[G^\top Y_{\theta^\star}(uv^\top  + X)]  \leq \Tr[G^\top Y_{\hat{\theta}}(uv^\top  + X)]
\leq  - \|Gv\| \|u\| + \|G\|_{F} \epsilon
\end{align*} 

where $\hat{\theta} = \argmin \Tr[G^\top Y_\theta(uv^\top )]$.

On the other direction we get

\begin{align}
    \Tr[G^\top Y_{\theta^\star}(uv^\top  + X)) ]& \geq - \Tr[G^\top Y_{\theta^\star}(uv^\top )] - \|G\|_{F} \epsilon \nonumber\\
&\geq  - \|Gv\| \|Y_{\theta^\star}(u)\| - \|G\|_{F} \epsilon \label{rank_1_input_minimum}
\end{align}

where we used the Cauchy-Schwarz inequality in the last line.

Combining the two, we get

$$ \frac{\|Y_{\theta^\star}(u)\|}{\|u\|} \geq 1- \frac{2\|G\|_{F}}{\|Gv\| \|u\| } \epsilon$$ 

and since $\|\theta\|^2 \leq L$ it is also true that the LHS of the above inequality is upper bounded by 1.

We can also see that 
$$ \frac{\| Y_\theta^\star(uv^\top +X) \|}{\|uv^\top  +X\|} \geq \frac{\|Y_\theta^\star(uv^\top )\| - \epsilon}{\|uv^\top \| + \epsilon} = \frac{\frac{\|Y_\theta^\star(u)\|}{\|u\|} - \frac{\epsilon}{\|u\|\|v\|}}{1+ \frac{\epsilon}{\|u\|\|v\|}}$$

and by using the above inequality we get

$$ \frac{\| Y_\theta^\star(uv^\top +X) \|}{\|uv^\top  +X\|} \geq 1 - \frac{\frac{2\|G\|_{F}}{\|Gv\|\|u\|} - 2 \frac{1}{\|u\|\|v\|}}{1+ \frac{\epsilon}{\|u\|\|v\|}}\epsilon \geq 1 - 2 ( \frac{\|G\|_{F}}{\|Gv\|\|u\|}  + \frac{1}{\|u\|\|v\|} ) \epsilon.$$

Now we can expand the activations,
$$\frac{\| Y_\theta^\star(uv^\top +X) \|}{\|uv^\top  +X\|} = \prod_{\ell=1}^L \frac{\|Z_\ell\|_F}{\|Z_{\ell-1}^\sigma\|_F}\frac{\|Z_{\ell-1}^\sigma\|_F}{\|Z_{\ell-1}\|_F}$$

and using the fact that $\prod_\ell \|W_\ell\|_F \leq 1$

\begin{equation}
    \prod_{\ell=1}^L \frac{\|W_\ell Z_{\ell-1}\|_F}{\|W_\ell\|_F \|Z_{\ell-1}^\sigma\|_F}\frac{\|Z_{\ell-1}^\sigma\|_F}{\|Z_{\ell-1}\|_F} \geq \prod_{\ell=1}^L \frac{\|W_\ell Z_{\ell-1}\|_F}{\|Z_{\ell-1}^\sigma\|_F}\frac{\|Z_{\ell-1}^\sigma\|_F}{\|Z_{\ell-1}\|_F}.
\end{equation} \label{align_cons_layer}

We split the norm of the activations using the inequalities

$$
\frac{\|W_\ell Z_{\ell-1}^\sigma\|_F}{\|W_\ell\|_F \|Z_{\ell-1}^\sigma\|_F} \leq \frac{\|W_\ell\|_{op} \|Z_{\ell-1}^\sigma\|_F}{\|W_\ell\|_F \|Z_{\ell-1}^\sigma\|_F} = \frac{\|W_\ell\|_{op}}{\|W_\ell\|_F}
$$

and 

$$
\frac{\|W_\ell Z_{\ell-1}^\sigma\|_F}{\|W_\ell\|_F \|Z_{\ell-1}^\sigma\|_F} \leq \frac{\|W_\ell\|_F \|Z_{\ell-1}^\sigma\|_{op}}{\|W_\ell\|_F \|Z_{\ell-1}^\sigma\|_F} =
\frac{\|Z_{\ell-1}^\sigma\|_{op}}{\|Z_{\ell-1}^\sigma\|_F} $$

so we get 
$$\prod_{\ell=1}^L  \min \{\frac{\|W_\ell\|_{op}}{\|W_\ell\|_F}, \frac{\|Z_{\ell-1}^\sigma\|_{op}}{\|Z_{\ell-1}^\sigma\|_F} \} \frac{\|Z_{\ell-1}^\sigma\|_F}{\|Z_{\ell-1}\|_F} \geq  1 - 2 ( \frac{\|G\|_{F}}{\|Gv\|\|u\|}  + \frac{1}{\|u\|\|v\|} ) \epsilon.$$

By squaring and rearranging the terms we get for the first ratio:

$$\frac{\sum_{i\geq 2}s^2_i(W_\ell)}{\sum_{i \geq 1} s_i^2(W_\ell)} \leq 4( \frac{\|G\|_{F}}{\|Gv\|\|u\|}  + \frac{1}{\|u\|\|v\|}) \epsilon$$

which further simplifies to

$$\frac{\sum_{i\geq 2}s^2_i(W_\ell)}{\sum_{i \geq 1} s_i^2(W_\ell)} \leq 8\frac{\|G\|_{F}}{\|Gv\|\|u\|} \epsilon. $$

Using inequality \ref{rank_1_input_minimum} we observe that:

$$\|Gv\|\|u\| \geq s^\star - \|G\|_{F}\epsilon$$

and hence

$$\frac{\sum_{i\geq 2}s^2_i(W_\ell)}{\sum_{i \geq 1} s_i^2(W_\ell)} \leq 8\frac{\|G\|_{F}}{s^\star -  \|G\|_{F}\epsilon} \epsilon. $$

We proceed similarly for the singular values of $Z_\ell$.

For the matrices $Z_\ell$ and $Z_\ell^\sigma$ we note that their Frobenius inner product is zero, so 

$$\|Z_{\ell}\|_{F}^{2}
=\|Z_{\ell}^{\sigma}\|_{F}^{2}+\|Z_{\ell}-Z_{\ell}^{\sigma}\|_{F}^{2}$$

Re-arranging gives the inequality

$$\frac{\|Z_{\ell}^\sigma - Z_{\ell}\|^2_F}{\|Z_{\ell}\|^2_F} \leq  8\frac{\|G\|_{F}}{s^\star-\|G\|_{F}\epsilon} \epsilon.$$

\end{proof}

\subsubsection{Strong Control}

We can combine the above two statements to show that at the maximum escape speed, the final layers will be almost rank-1. To prove that we first apply proposition \ref{prop:weak_control} to find a layer $\ell_0$ that is almost rank-1. We need an additional lemma that ensures that we can select non-negative singular vectors for the largest singular value of the activation $Z_{\ell _1}$. Then we can apply proposition \ref{prop:strong_control_rank1_input_app} to conclude that all layers $\ell \geq \ell_0$ will be approximately rank-1.

\vspace{\baselineskip}

\begin{lemma} \label{non_neg_sing_vec}

    For $A \in R^{m\times n}$ with non-negative entries and $s_1$ its largest singular value we can find right and left singular values $u_1,v_1$ for $s_1$ which are non-negative entry-wise. 
    
\end{lemma}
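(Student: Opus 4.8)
The statement is that a nonnegative matrix $A$ has a pair of nonnegative singular vectors for its top singular value. My plan is to reduce this to the Perron–Frobenius theorem applied to the symmetric nonnegative matrix $A^\top A$ (or $A A^\top$). First I would recall that the squared singular values of $A$ are the eigenvalues of $M := A^\top A$, and right singular vectors for $s_1$ are eigenvectors of $M$ for the top eigenvalue $\lambda_1 = s_1^2$. Since $A$ has nonnegative entries, $M$ has nonnegative entries as well. The top eigenvalue of a nonnegative symmetric matrix equals its spectral radius, so I may invoke the (non-strict) Perron–Frobenius theorem: a nonnegative square matrix has an eigenvector with nonnegative entries associated with its spectral radius. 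This yields a nonnegative right singular vector $v_1 \geq 0$ with $\|v_1\| = 1$.

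Next I would produce the matching left singular vector by setting $u_1 := A v_1 / \|A v_1\|$ (assuming $s_1 > 0$, i.e. $A \neq 0$; the $A = 0$ case is trivial since any vectors work). Since $A \geq 0$ entrywise and $v_1 \geq 0$ entrywise, the product $A v_1$ is nonnegative entrywise, hence so is $u_1$. One then checks the singular-vector relations: $A v_1 = s_1 u_1$ holds by construction once we verify $\|A v_1\| = s_1$, which follows from $\|A v_1\|^2 = v_1^\top A^\top A v_1 = \lambda_1 = s_1^2$; and $A^\top u_1 = s_1 v_1$ follows from $A^\top u_1 = A^\top A v_1 / s_1 = \lambda_1 v_1 / s_1 = s_1 v_1$. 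So $(u_1, v_1)$ is a genuine top singular pair, both nonnegative.

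One subtlety worth flagging: if $s_1$ has multiplicity greater than one, not every top singular vector will be nonnegative — the claim is only that one such nonnegative pair exists, which is exactly what Perron–Frobenius guarantees. I would also note that we do not need irreducibility or strict positivity of $A$; the weak form of Perron–Frobenius (spectral radius is attained by a nonnegative eigenvector, provable by a compactness/limiting argument from the strictly positive case, or directly via $\max_{x \geq 0, \|x\|=1} x^\top M x$ being attained on the nonnegative orthant) suffices.

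**The main obstacle.** There is no serious obstacle here; the only care needed is in citing the correct (non-strict) version of Perron–Frobenius and handling the degenerate $A = 0$ case separately. An alternative self-contained route, avoiding any external theorem, is to observe that the variational problem $\max \{ x^\top M x : \|x\| = 1\}$ can be restricted to $x \geq 0$ without loss: replacing $x$ by its entrywise absolute value $|x|$ does not change $\|x\|$ and cannot decrease $x^\top M x = \sum_{i,j} M_{ij} x_i x_j$ since $M_{ij} \geq 0$ (so each term only grows or stays the same under $x_i x_j \mapsto |x_i||x_j|$). A maximizer therefore exists in the nonnegative orthant by compactness, and by Lagrange/Rayleigh-quotient stationarity it is a top eigenvector of $M$. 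This gives $v_1 \geq 0$ directly, and then $u_1 = Av_1/\|Av_1\|$ as above; I would likely present this self-contained version.
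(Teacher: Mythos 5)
Your main argument is the same as the paper's: apply the (non-strict) Perron--Frobenius theorem to the nonnegative matrix $A^\top A$ to obtain a nonnegative top eigenvector, then push it forward through $A$ to get the matching nonnegative left singular vector. You fill in details the paper leaves implicit (the degenerate case $A=0$, the verification that $\|Av_1\| = s_1$ and that $A^\top u_1 = s_1 v_1$), and your self-contained variational alternative --- noting that $x \mapsto |x|$ cannot decrease $x^\top A^\top A x$ when $A^\top A \geq 0$ entrywise, so the Rayleigh quotient maximizer can be taken in the nonnegative orthant --- is a nice elementary substitute for invoking Perron--Frobenius; either presentation is correct and equivalent in substance to the paper's.
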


\begin{proof}
    The right singular vector for $s_1$ satisfies 

    $$A^\top A u_1 = s_1 u_1$$

    and since $A$ is non-negative $A^\top A$ is also non-negative. We can now apply an extended version of the Perron-Frobenius theorem for non-negative matrices to select the eigenvector $u_1 \geq 0$ entry-wise. Now we select 
    $$
    v_1 = \frac{Au_1}{s_1}$$ 

    which is a left singular vector as it satisfies $Av_1 = s_1 u_1 $ and since $A,u$ are non-negative, $v$ is also non-negative.
\end{proof}

\begin{theorem}[Theorem \ref{thm:approximate_rank_at_opt_escape} in the main]
    Consider an optimal escape direction
    
    $$ \theta ^\star = \argmin_{\|\theta\|^2=L} \Tr[G^\top Y_\theta]$$ 

    with optimal speed $s^* = \min_{\|\theta\|^2=L} \Tr[G^\top Y_\theta]$, then for all layers $\ell$ we have:

$$\frac{\sum_{i\geq 2}s^2_i(W_\ell)}{\sum_{i \geq 1} s_i^2(W_\ell)},\frac{\sum_{i\geq 2}s^2_i(Z^\sigma_\ell)}{\sum_{i \geq 1} s_i^2(Z^\sigma_\ell)},\frac{\|Z_{\ell}^\sigma - Z_{\ell}\|^2_F}{\|Z_{\ell}\|^2_F} \leq 8\frac{c}{s^*-c\ell^{-\frac{1}{2}}}\ell^{-\frac{1}{2}}$$

where $c = \sqrt{2}\|X\|_F \|G\|_F\sqrt{\log \|X\|_F + \log \|G\|_F - \log s^*}$.
\end{theorem}

\begin{proof}
We denote $Y_{\ell _2:\ell_1}(X) = W_{\ell _2}\sigma(W_{\ell _2 -1}...\sigma(W_{\ell _1}X)...)$ the network when only the layers from $\ell_1$ to $\ell_2$ are applied.

Using proposition \ref{prop:weak_control} we can find layers $\ell_0<\ell_1<...<\ell_n\leq L$ that satisfy \ref{eq:weak_control} and are approximately rank-1. We can select the minimum of those, $\ell_0$.

Because the argument is valid for at least $(1-p)L$ of the $L$ total layers, the earliest layer $\ell_0$ must occur on or before the $pL$-th layer.

It is true that $Z_{\ell_0}^\sigma$ is non-negative entry-wise and so we can apply lemma \ref{non_neg_sing_vec} to find non-negative singular vectors  $u_1,v_1$ that additionally satisfy 

\begin{align*}
    \|Z_{\ell_0}^\sigma - s_1(Z_{\ell_0}^\sigma) u_1 v_1^\top \|_F^2 = \sum_{i=2}^r s_i^2 &\leq  2 \|Z_{\ell_0}^\sigma\|_F^2 \frac{\log \|X\|_F + \log \|G\|_F - \log s^*}{pL} \\
    & \leq 2\|X\|_F^2 \frac{\log \|X\|_F + \log \|G\|_F - \log s^*}{pL}.
\end{align*}

We now use the fact that for the layers $\ell \geq \ell_0 $ we are at the optimal escape direction $\theta^*$.

$$ \theta^* = \argmin_{\|\theta_{L:\ell_0+1}\|^2=L-\ell_0} \Tr[G^\top Y_{L:\ell_0+1}(Z_{\ell_0}^\sigma)]. $$

We can do that since: 

$$\min_{\|\theta\|^2=L} \Tr[G^\top Y_\theta]  \leq \min_{\|\theta_{L:\ell_0+1}\|^2=L-\ell_0} \Tr[G^\top Y_{L:\ell_0+1}(Z_{\ell_0}^\sigma)] $$

We can now apply proposition \ref{prop:strong_control_rank1_input_app} on the sub-network $Y_{L:\ell_0+1}$. For all layers $\ell \geq \ell_0 $ we have that:

$$\frac{\sum_{i\geq 2}s^2_i(W_\ell)}{\sum_{i\geq1} s_i^2(W_\ell)},\frac{\sum_{i\geq 2}s^2_i(Z^\sigma_\ell)}{\sum_{i\geq1} s_i^2(Z^\sigma_\ell)},\frac{\|Z_{\ell}^\sigma - Z_{\ell}\|^2_F}{\|Z_{\ell}\|^2_F} \leq 8\frac{c}{1-\frac{c}{\sqrt{pL}} }\frac{1}{\sqrt{pL}}$$

where $c = \frac{\|X\|_F \|G\|_F}{s^*}\sqrt{2\log \frac{\|X\|_F \|G\|_F}{s^*}}$.

We see that, since $p\in(0,1)$ was chosen arbitrarily, we can choose $p = \frac{\ell}{L}$ for any $\ell$. Then since $\ell_0 \leq pL = \ell $ the above inequality will hold for any $\ell$-th layer with $\ell > c ^2$.

\end{proof}

\newpage
\section{MNIST Training Details}

We train a 6-layer fully connected neural network (multilayer perceptron, MLP) without biases on the MNIST dataset, using the cross-entropy loss. The network comprises one input layer, four hidden layers, and one output layer. Each hidden layer contains 1000 neurons. The weight matrices have the following dimensions:

\begin{itemize}
    \item \textbf{Input layer:} $W_1 \in \mathbb{R}^{784 \times 1000}$
    \item \textbf{Hidden layers:} $W_i \in \mathbb{R}^{1000 \times 1000}$ for $i = 2, 3, 4, 5$
    \item \textbf{Output layer:} $W_6 \in \mathbb{R}^{1000 \times 10}$
\end{itemize}

The weights are initialized from a normal distribution with mean 0 and standard deviation $1 / 1000$.

We train the model for 1000 epochs using a batch size of 32. The learning rate at each step is adjusted dynamically according to:

$$
\text{lr}(t) = \frac{10}{\|\theta(t)\|^4}
$$

where

$$
\|\theta(t)\|^2 = \sum_{i=1}^{6} \|W_i(t)\|_F^2
$$

and $\|\cdot\|_F$ denotes the Frobenius norm.

Each MNIST image $x$ is normalized using the dataset-wide mean $\mu$ and standard deviation $\sigma$ of the pixel values:

$$
x \mapsto \frac{x / 255 - \mu}{\sigma}
$$

This standardization ensures that the input distribution has approximately zero mean and unit variance, which helps stabilize training.

A more complete picture of how the singular values of the weight matrices evolve during training is presented in \ref{fig:layer_sv_and_loss_MNIST}.  

We repeated the same experiment with depth-4 fully connected network and we report our findings in figure \ref{fig:layer_sv_and_loss_MNIST_four_layers}.

\begin{figure}[t]
    \centering
    \begin{subfigure}[t]{0.32\textwidth}
        \includegraphics[width=\linewidth]{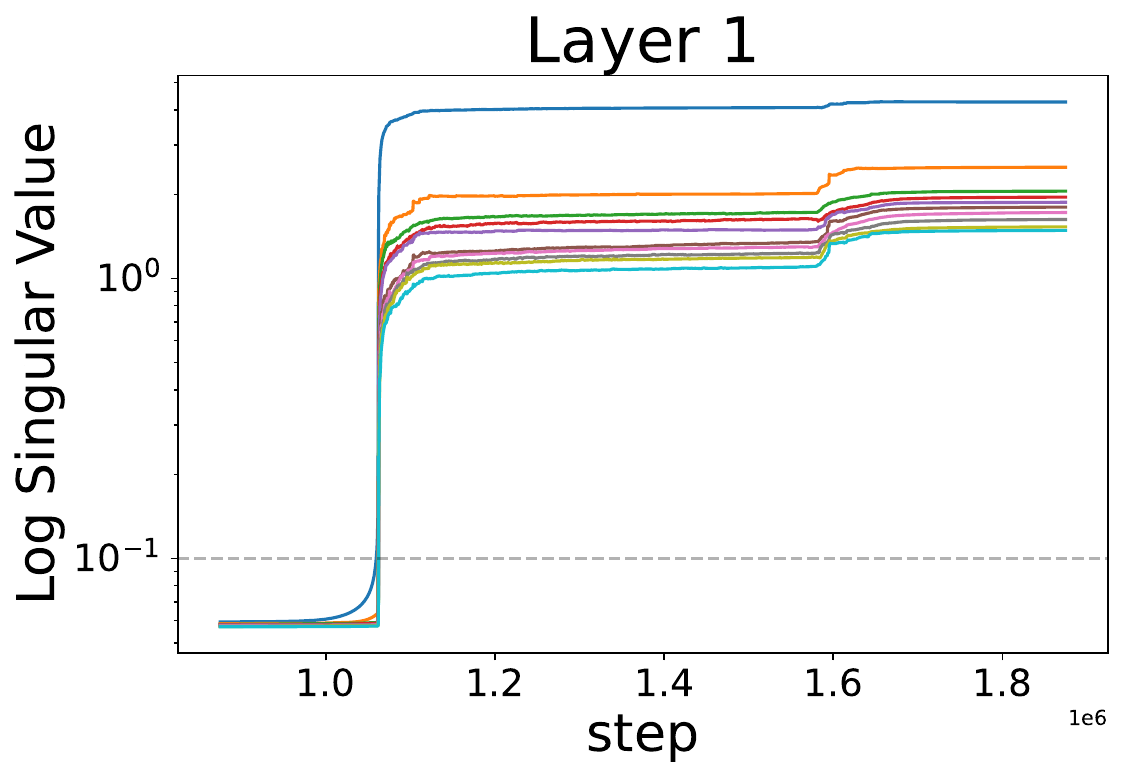}
    \end{subfigure}\hfill
    \begin{subfigure}[t]{0.32\textwidth}
        \includegraphics[width=\linewidth]{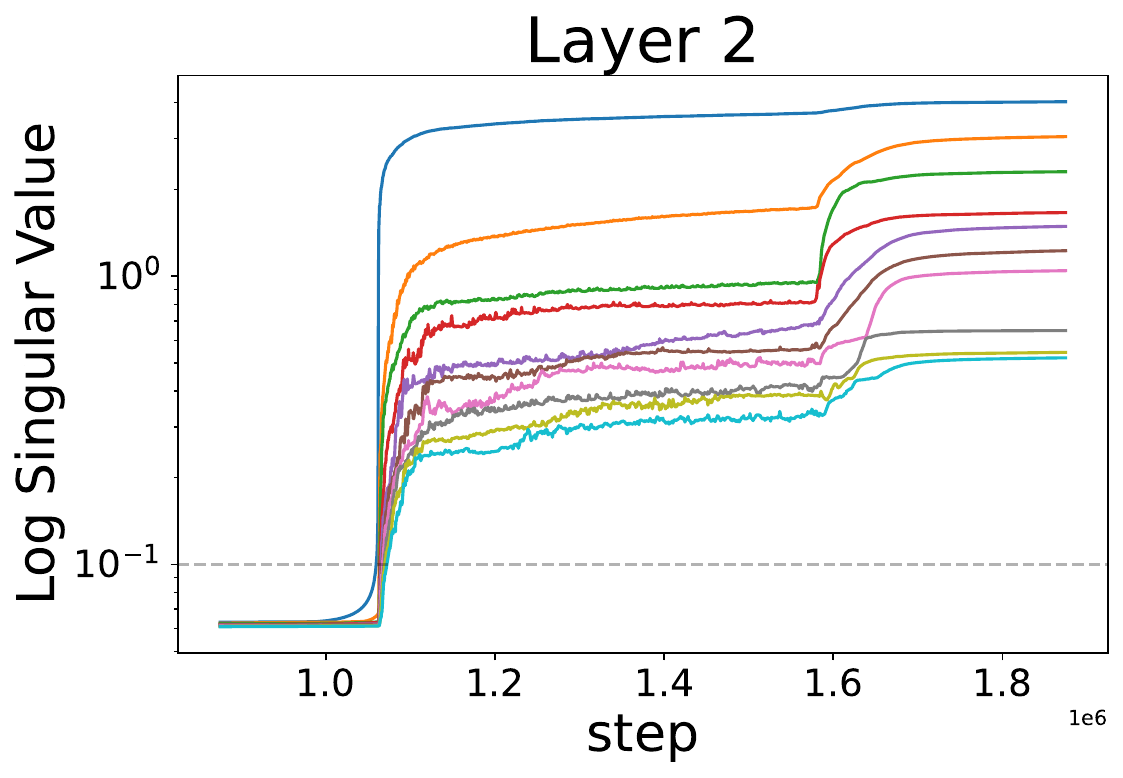}
    \end{subfigure}\hfill
    \begin{subfigure}[t]{0.32\textwidth}
        \includegraphics[width=\linewidth]{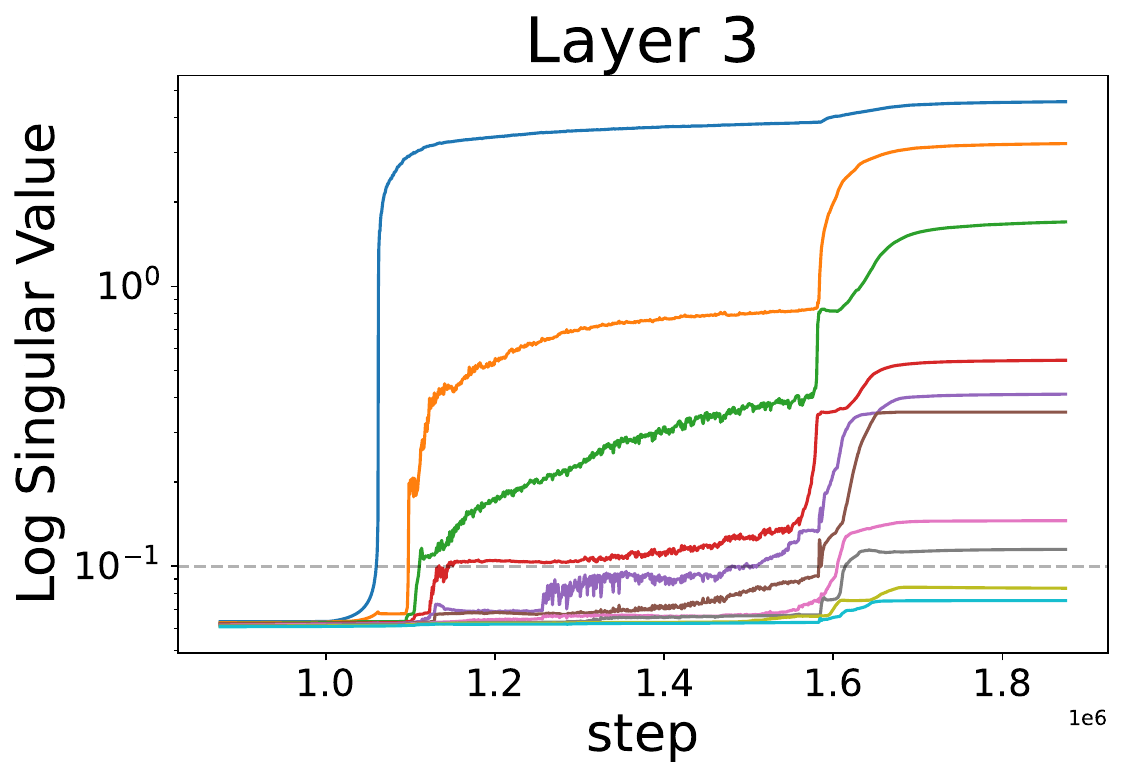}
    \end{subfigure}

    \vspace{2mm} 

    \begin{subfigure}[t]{0.32\textwidth}
        \includegraphics[width=\linewidth]{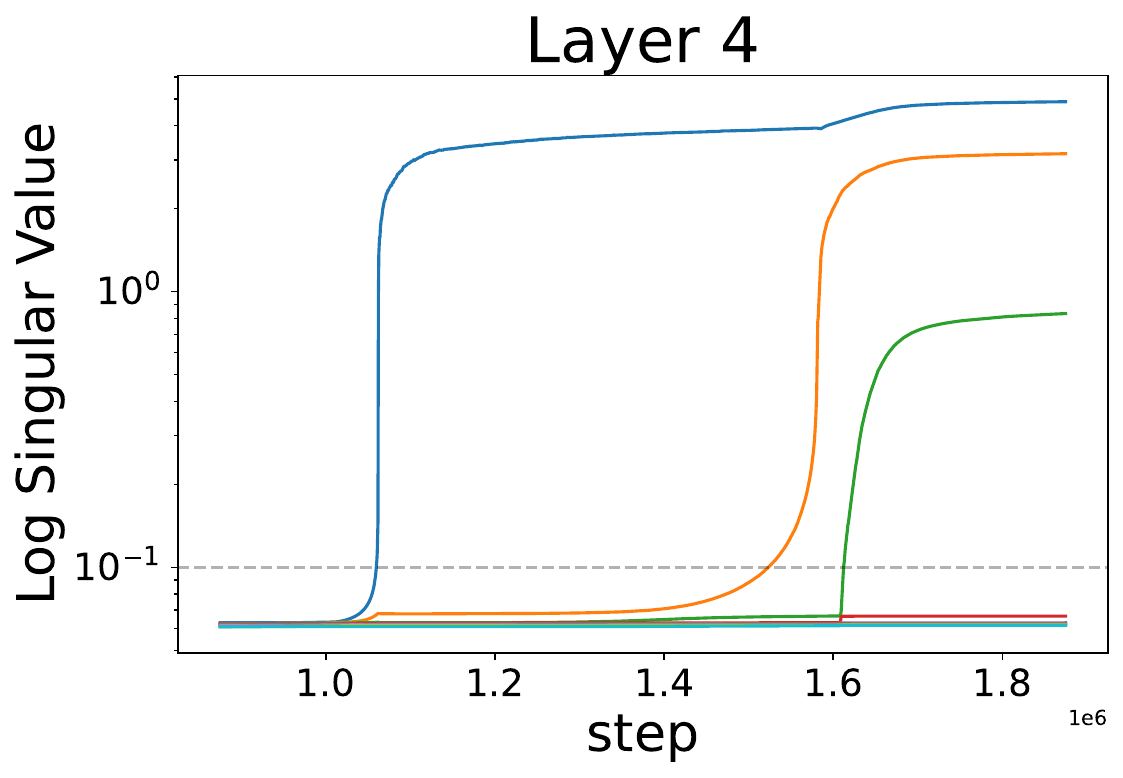}
    \end{subfigure}\hfill
    \begin{subfigure}[t]{0.32\textwidth}
        \includegraphics[width=\linewidth]{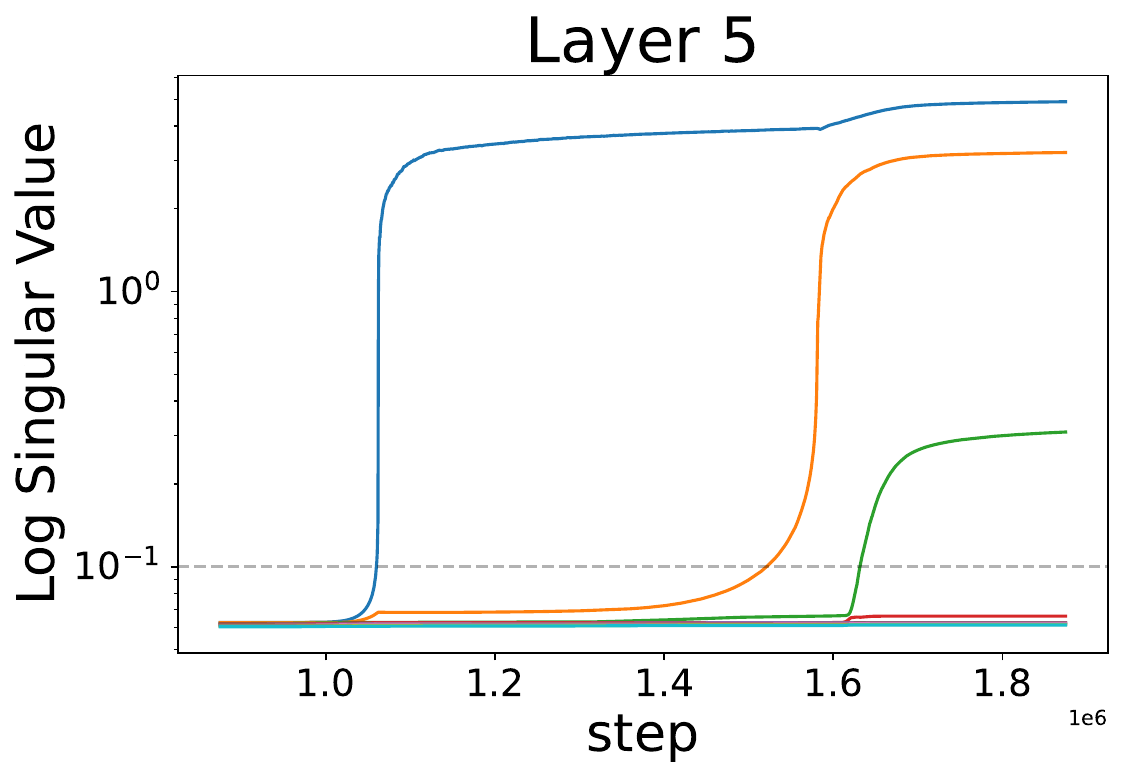}
    \end{subfigure}\hfill
    \begin{subfigure}[t]{0.32\textwidth}
        \includegraphics[width=\linewidth]{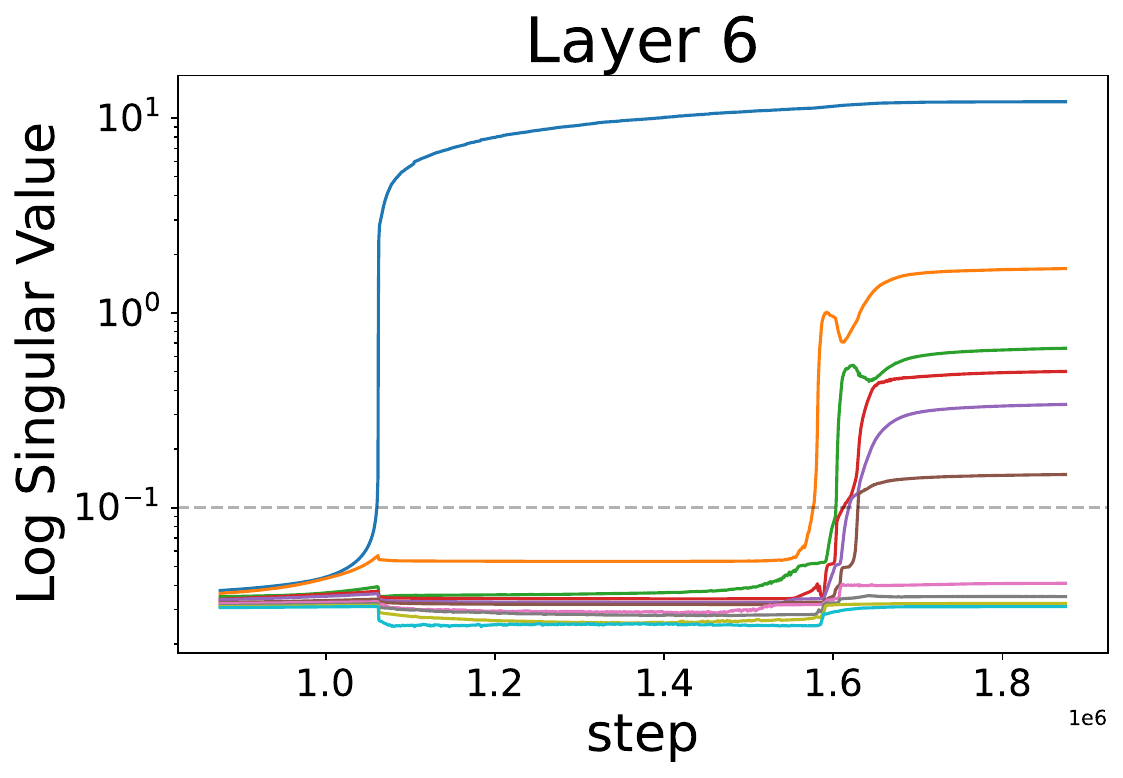}
    \end{subfigure}

    \vspace{2mm}

    \begin{subfigure}[t]{0.32\textwidth}
        \centering
        \includegraphics[width=\linewidth]{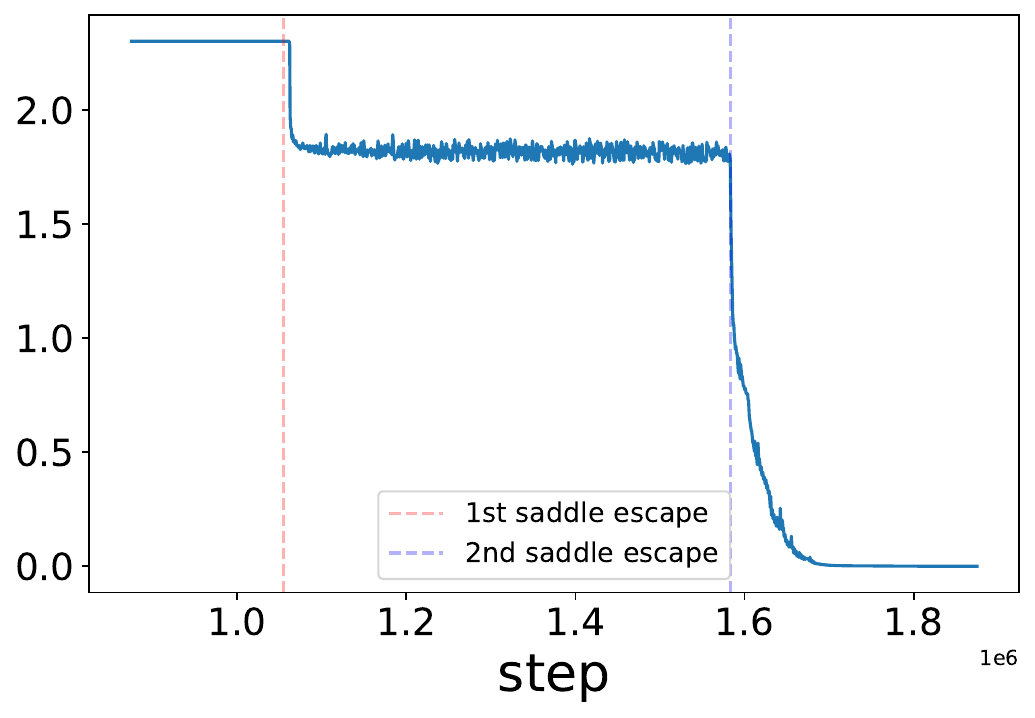}
    \end{subfigure}

    \caption{\textbf{Deeper layers show a stronger bias toward low-rank structure than earlier layers on MNIST.}
    \textbf{Top two rows:} Top 10 singular values of the weight matrices for layers 1–6 including input and output layer over training time.
    \textbf{Bottom:} Training loss trajectory on MNIST.}
    \label{fig:layer_sv_and_loss_MNIST}
    \vspace{-4mm}
\end{figure}

\begin{figure}[t]
    \centering
    \begin{subfigure}[t]{0.49\textwidth}
        \includegraphics[width=\linewidth]{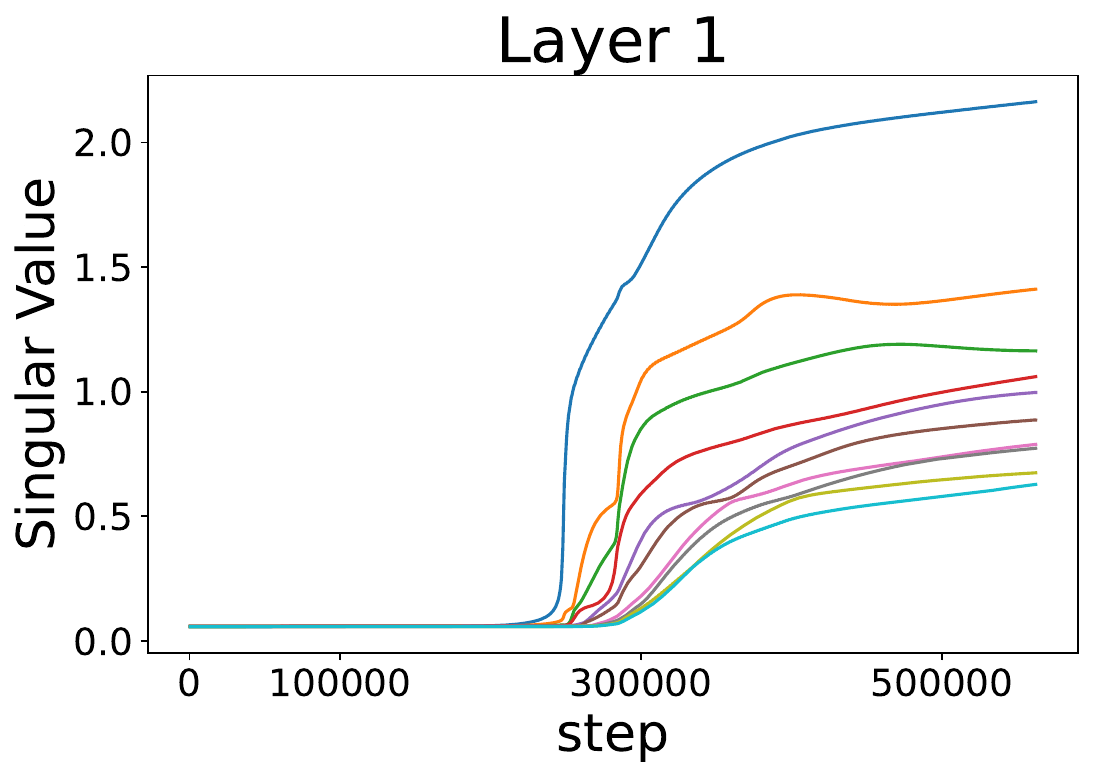}
    \end{subfigure}\hfill
    \begin{subfigure}[t]{0.49\textwidth}
        \includegraphics[width=\linewidth]{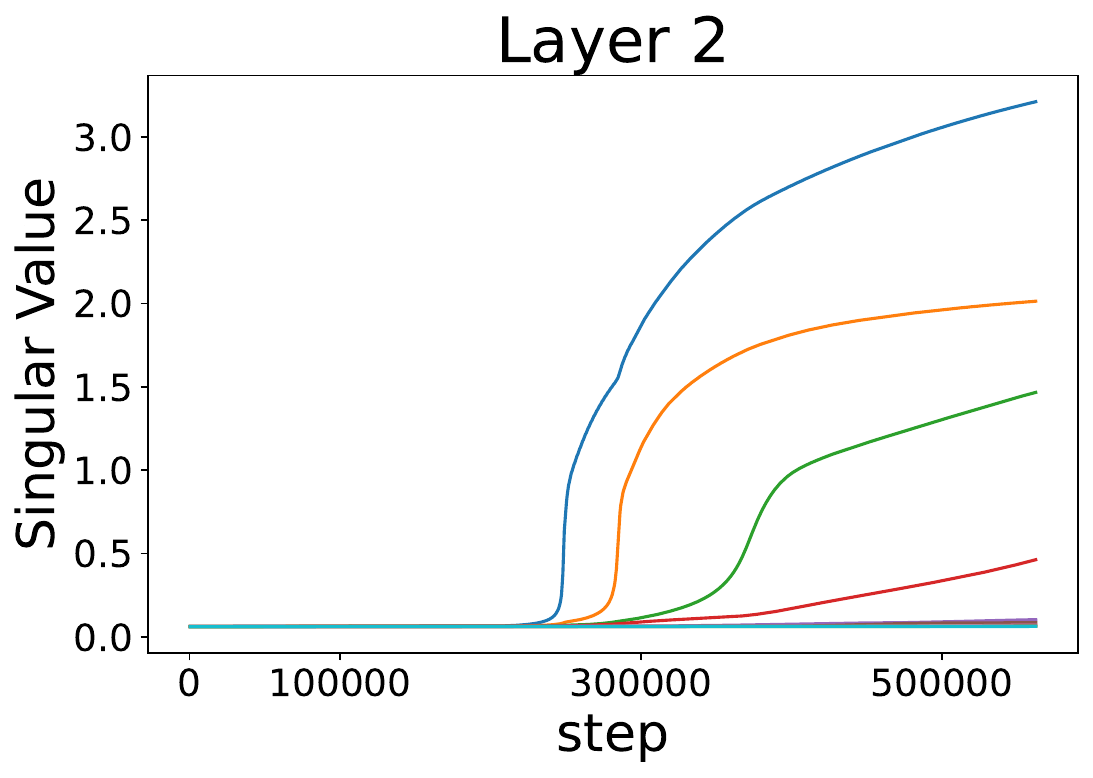}
    \end{subfigure}

    \vspace{2mm} 

    \begin{subfigure}[t]{0.49\textwidth}
        \includegraphics[width=\linewidth]{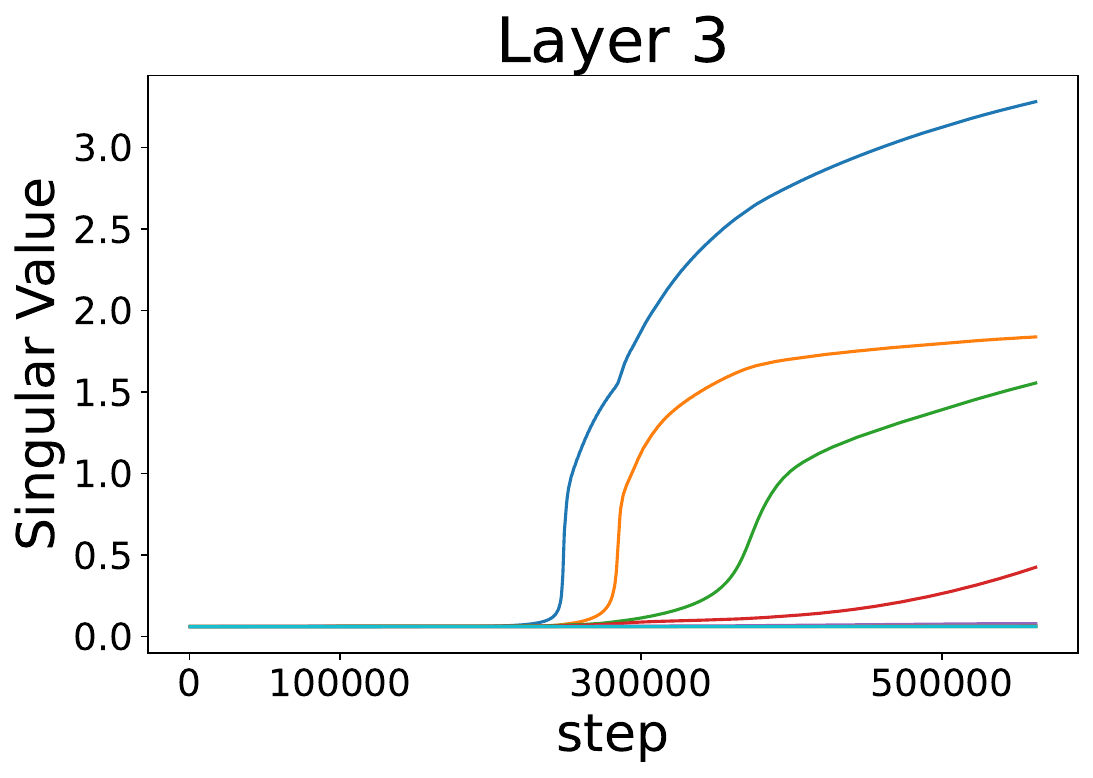}
    \end{subfigure}\hfill
    \begin{subfigure}[t]{0.49\textwidth}
        \includegraphics[width=\linewidth]{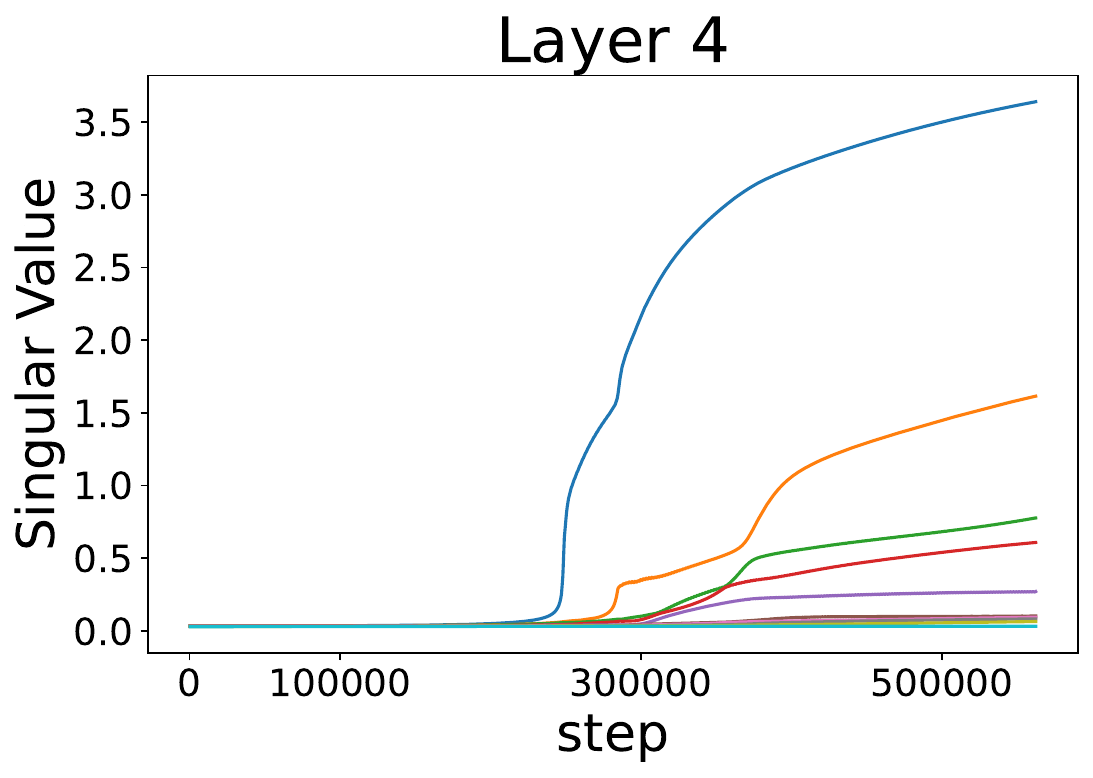}
    \end{subfigure}

    \vspace{2mm}

    \begin{subfigure}[t]{0.49\textwidth}
        \centering
        \includegraphics[width=\linewidth]{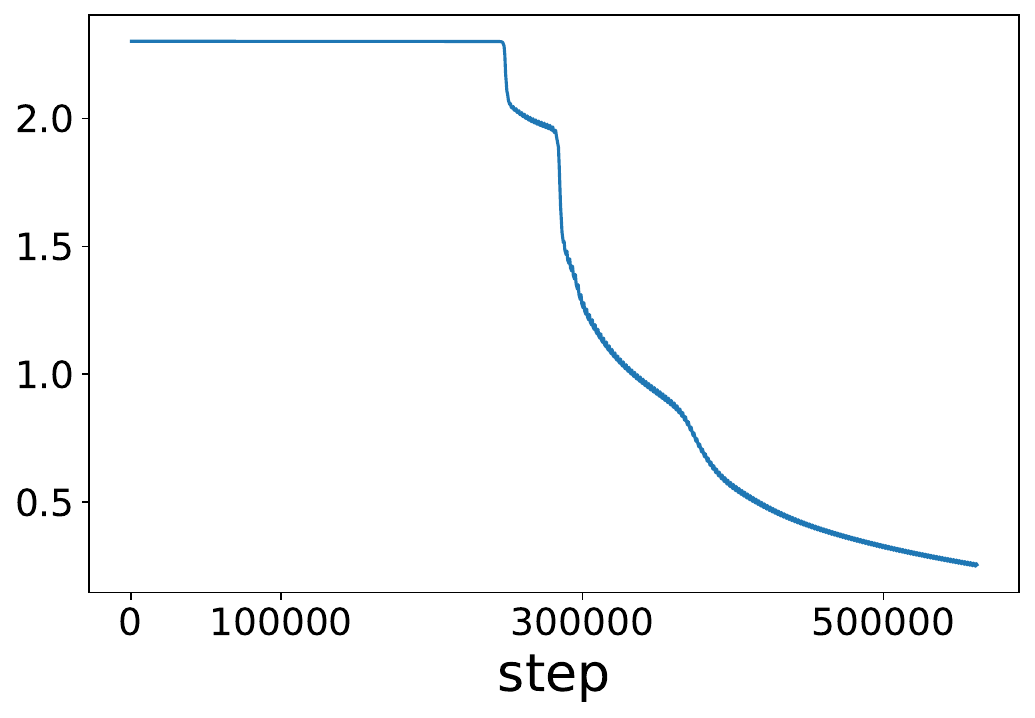}
    \end{subfigure}

    \caption{
    \label{fig:layer_sv_and_loss_MNIST_four_layers}\textbf{Depth-4 MLP with small initialization on MNIST.}
    \textbf{Top two rows:} Top 10 singular values of the weight matrices for layers 1–4 including input and output layer over training time.
    \textbf{Bottom:} Training loss trajectory on MNIST.}
    \vspace{-4mm}
\end{figure}

\newpage
\section{CIFAR-10 Training Details}

We train a 6-layer fully connected neural network without biases on the CIFAR-10 dataset, using the cross-entropy loss. The network comprises one input layer, four hidden layers, and one output layer. Each hidden layer contains 1000 neurons. The weight matrices have the following dimensions:

\begin{itemize}
    \item \textbf{Input layer:} $W_1 \in \mathbb{R}^{1024 \times 1000}$
    \item \textbf{Hidden layers:} $W_i \in \mathbb{R}^{1000 \times 1000}$ for $i = 2, 3, 4, 5$
    \item \textbf{Output layer:} $W_6 \in \mathbb{R}^{1000 \times 10}$
\end{itemize}

The weights are initialized from a normal distribution with mean 0 and standard deviation $1 / 1000$.

We train the model for 5000 epochs using a batch size of 32. The learning rate at each step is adjusted dynamically according to:

$$
\text{lr}(t) = \frac{40}{\|\theta(t)\|^4}
$$

where

$$
\|\theta(t)\|^2 = \sum_{i=1}^{6} \|W_i(t)\|_F^2
$$

and $\|\cdot\|_F$ denotes the Frobenius norm.

Each CIFAR-10 image $x$ is normalized using the dataset-wide mean $\mu$ and standard deviation $\sigma$ of the pixel values:

$$
x \mapsto \frac{x / 255 - \mu}{\sigma}
$$

A more complete picture of how the singular values of the weight matrices evolve during training is presented in \ref{fig:layer_sv_and_loss_CIFAR}.  

We repeated the same experiment with depth-4 fully connected network and we report our findings in figure \ref{fig:layer_sv_and_loss_CIFAR_four_layers}. For the depth-4 network we used the learning rate $\text{lr}(t) = \frac{0.01}{\|\theta(t)\|^4}$ and 10000 epochs to ensure convergence.

\begin{figure}[t]
    \centering
    \begin{subfigure}[t]{0.32\textwidth}
        \includegraphics[width=\linewidth]{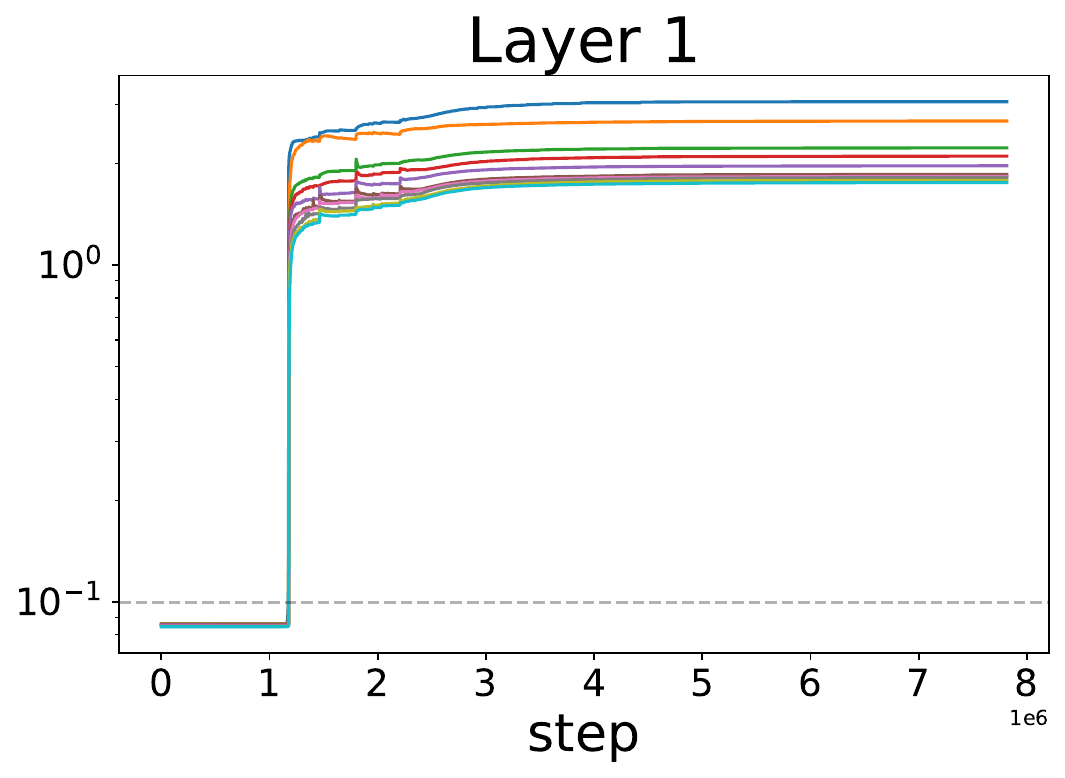}
    \end{subfigure}\hfill
    \begin{subfigure}[t]{0.32\textwidth}
        \includegraphics[width=\linewidth]{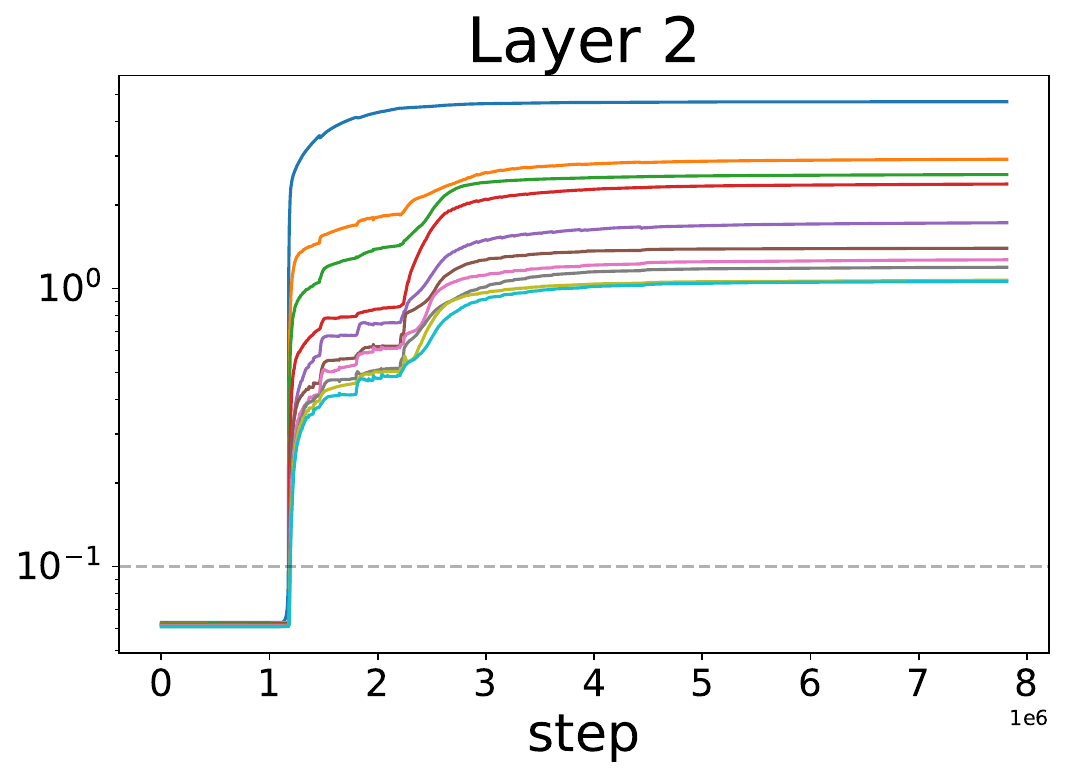}
    \end{subfigure}\hfill
    \begin{subfigure}[t]{0.32\textwidth}
        \includegraphics[width=\linewidth]{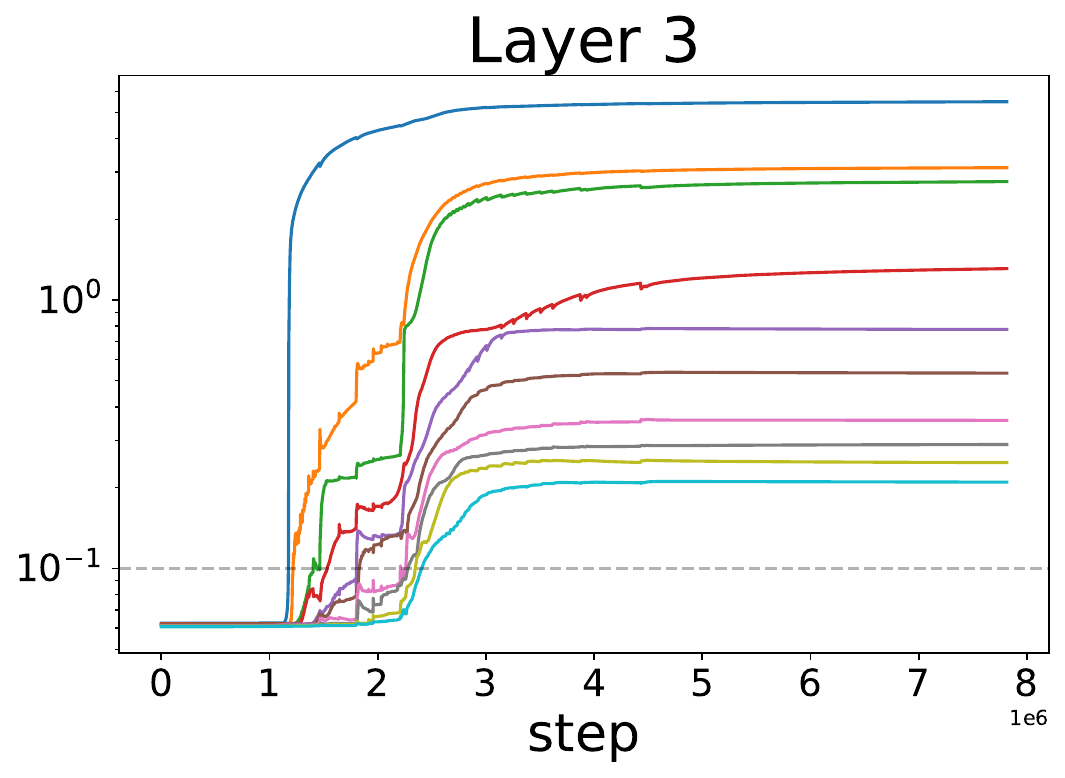}
    \end{subfigure}

    \vspace{2mm} 

    \begin{subfigure}[t]{0.32\textwidth}
        \includegraphics[width=\linewidth]{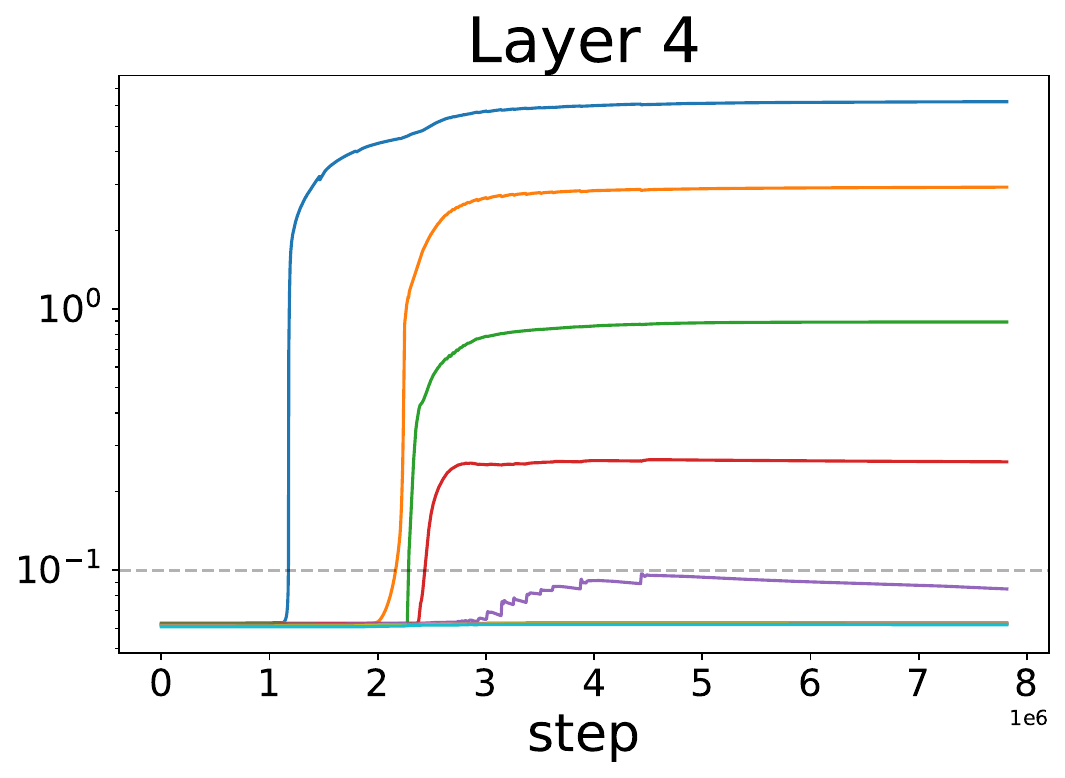}
    \end{subfigure}\hfill
    \begin{subfigure}[t]{0.32\textwidth}
        \includegraphics[width=\linewidth]{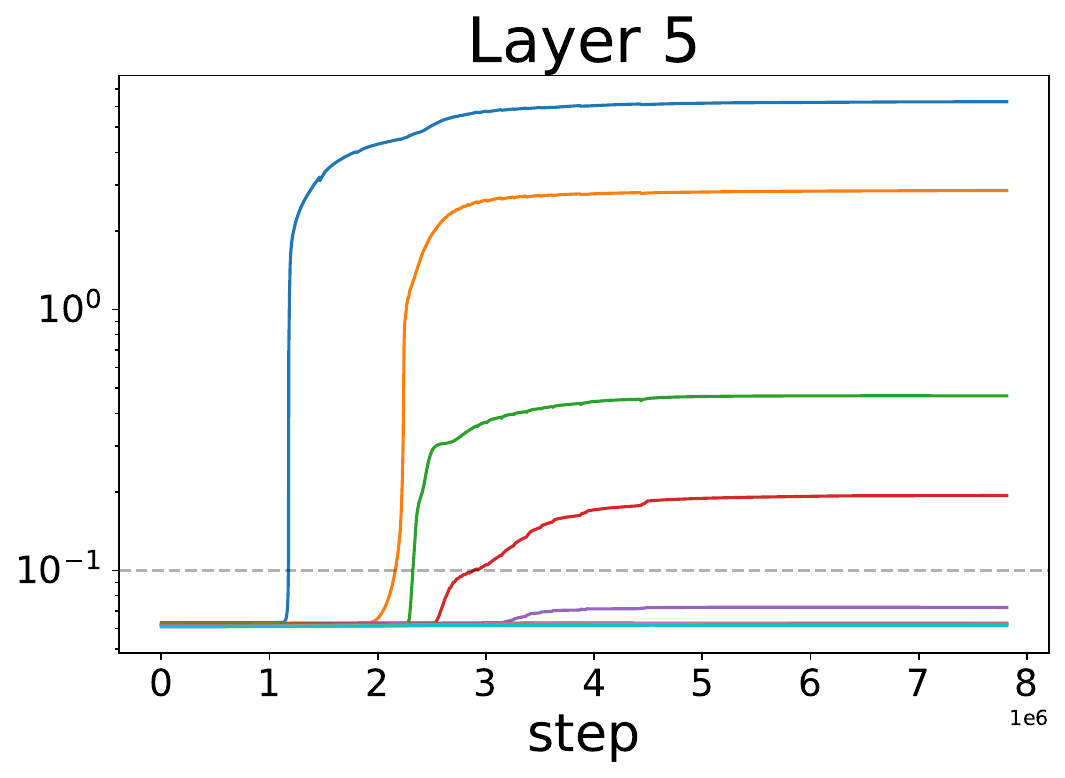}
    \end{subfigure}\hfill
    \begin{subfigure}[t]{0.32\textwidth}
        \includegraphics[width=\linewidth]{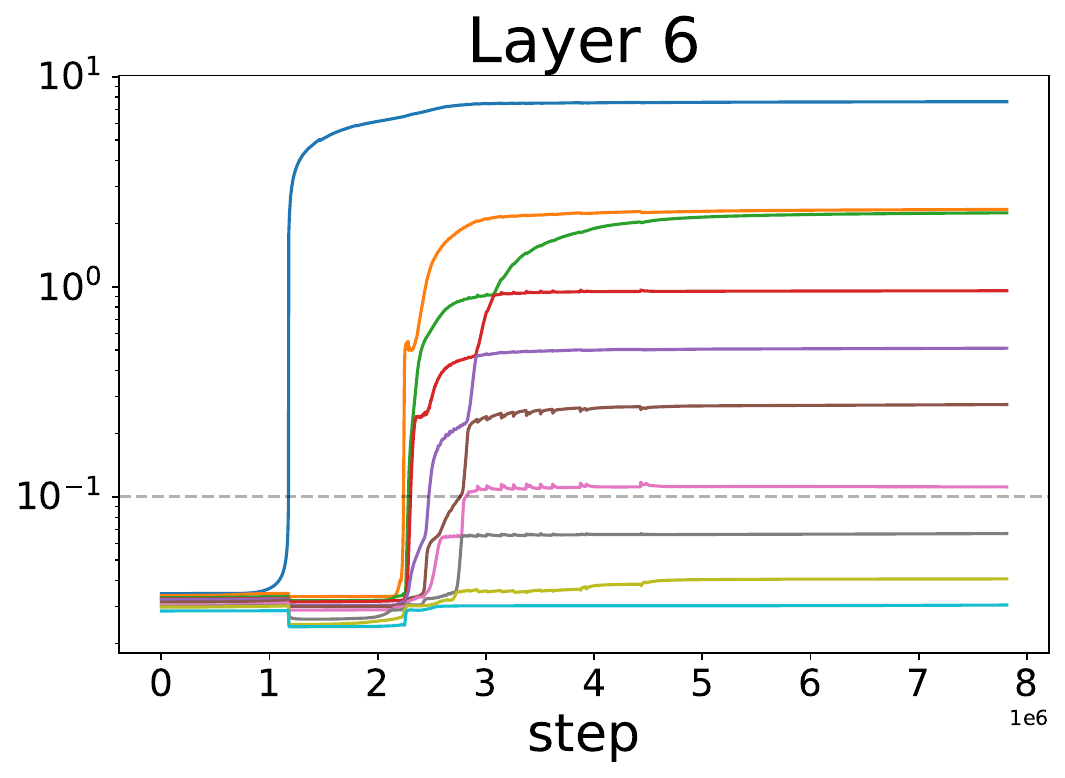}
    \end{subfigure}

    \vspace{2mm}

    \begin{subfigure}[t]{0.49\textwidth}
        \includegraphics[width=\linewidth]{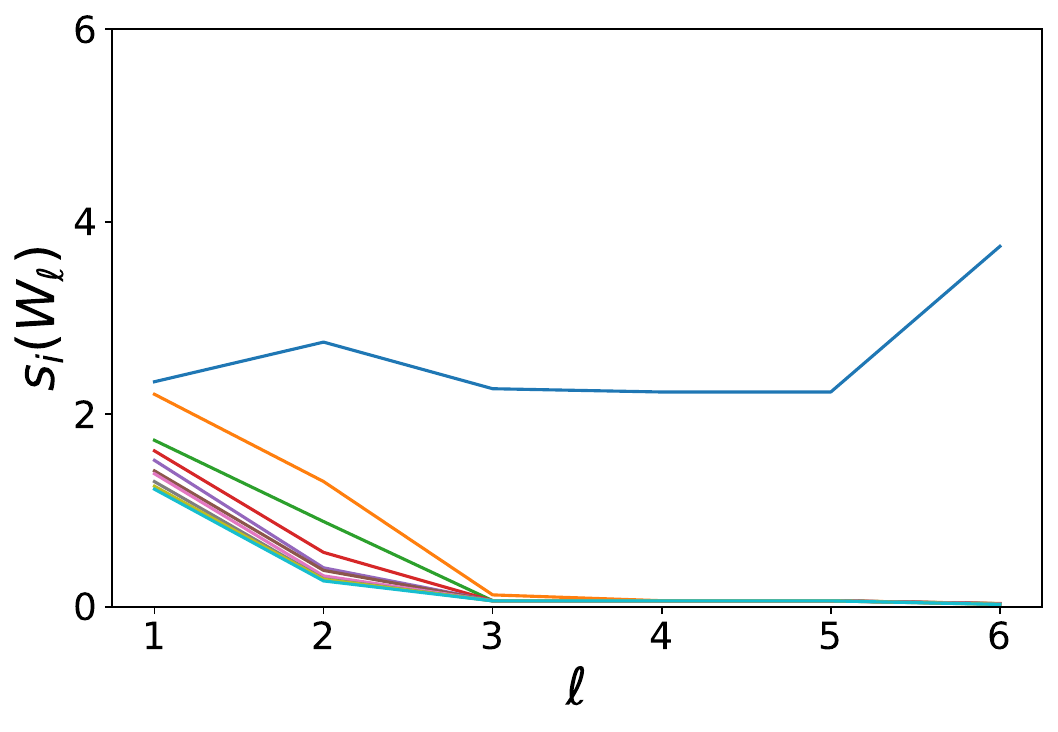}
    \end{subfigure}\hfill
    \begin{subfigure}[t]{0.49\textwidth}
        \includegraphics[width=\linewidth]{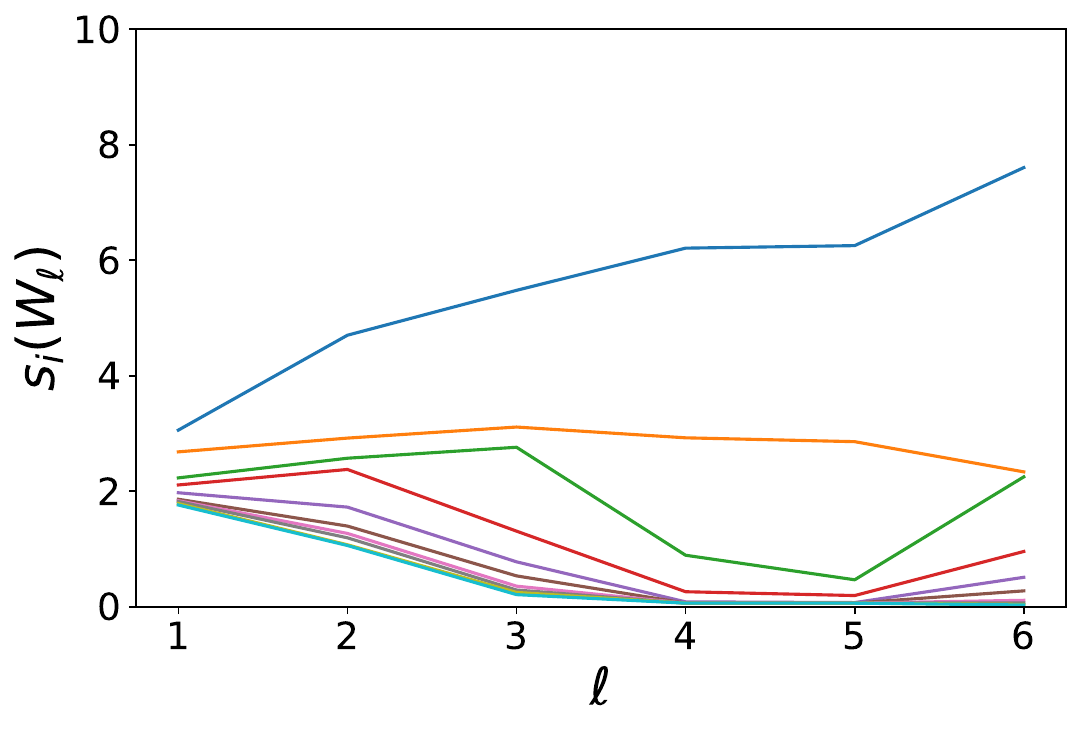}
    \end{subfigure}
    \vspace{2mm}

    \begin{subfigure}[t]{0.49\textwidth}
        \centering
        \includegraphics[width=\linewidth]{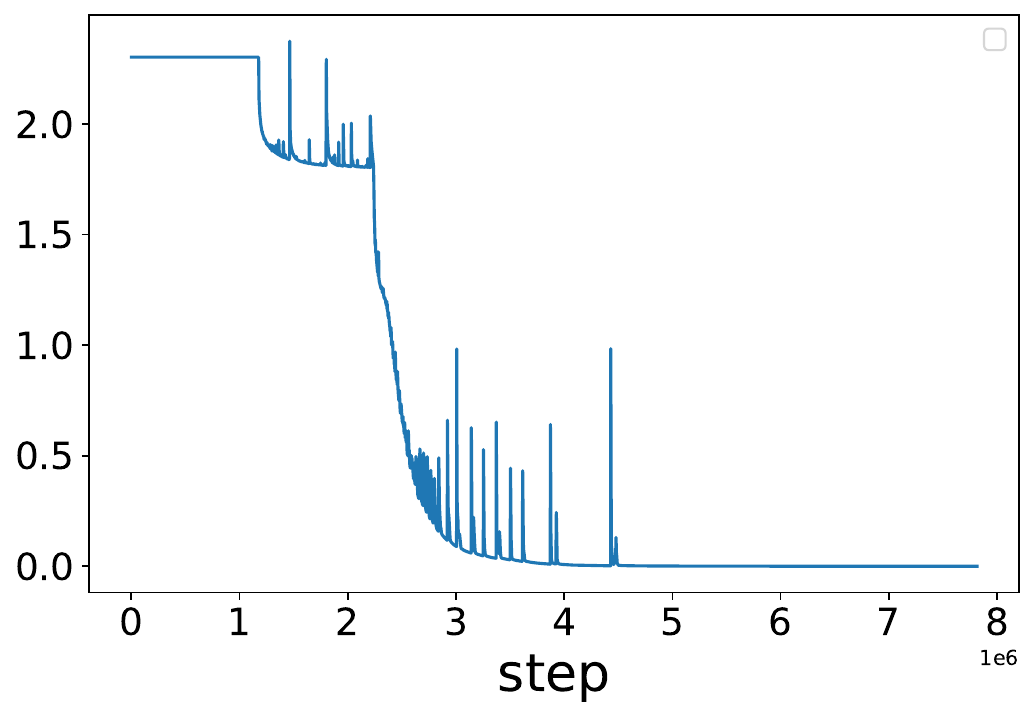}
    \end{subfigure}

    \caption{\textbf{Deeper layers show a stronger bias toward low-rank structure than earlier layers on CIFAR-10.}
    \textbf{Top two rows:} Top 10 singular values of the weight matrices for layers 1–6 including input and output layer over training time in logarithmic scale.
    \textbf{Third Row}: Top 10 singular values of the weight matrices $W_\ell$ across layers 1--6 (including input/output). \textbf{Left}: After the first saddle point escape. \textbf{Right}: At the end of training.
    \textbf{Bottom:} Training loss trajectory on CIFAR-10.}
    \label{fig:layer_sv_and_loss_CIFAR}
    \vspace{-4mm}
\end{figure}

\begin{figure}[t]
    \centering
    \begin{subfigure}[t]{0.49\textwidth}
        \includegraphics[width=\linewidth]{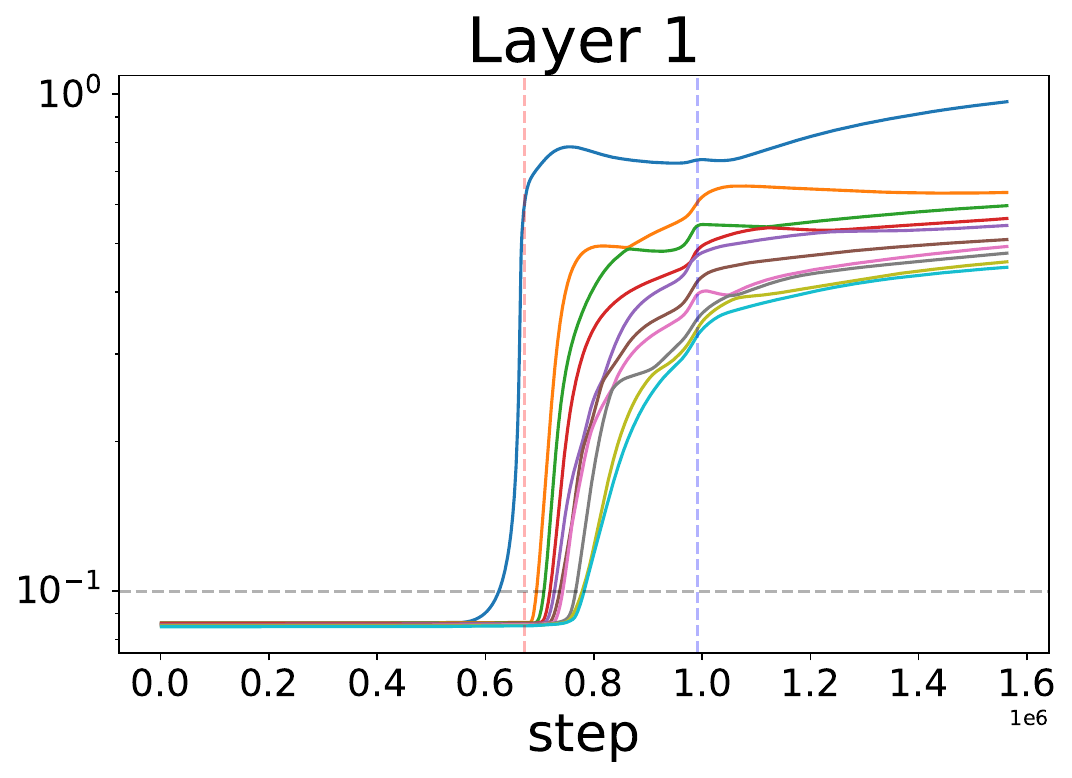}
    \end{subfigure}\hfill
    \begin{subfigure}[t]{0.49\textwidth}
        \includegraphics[width=\linewidth]{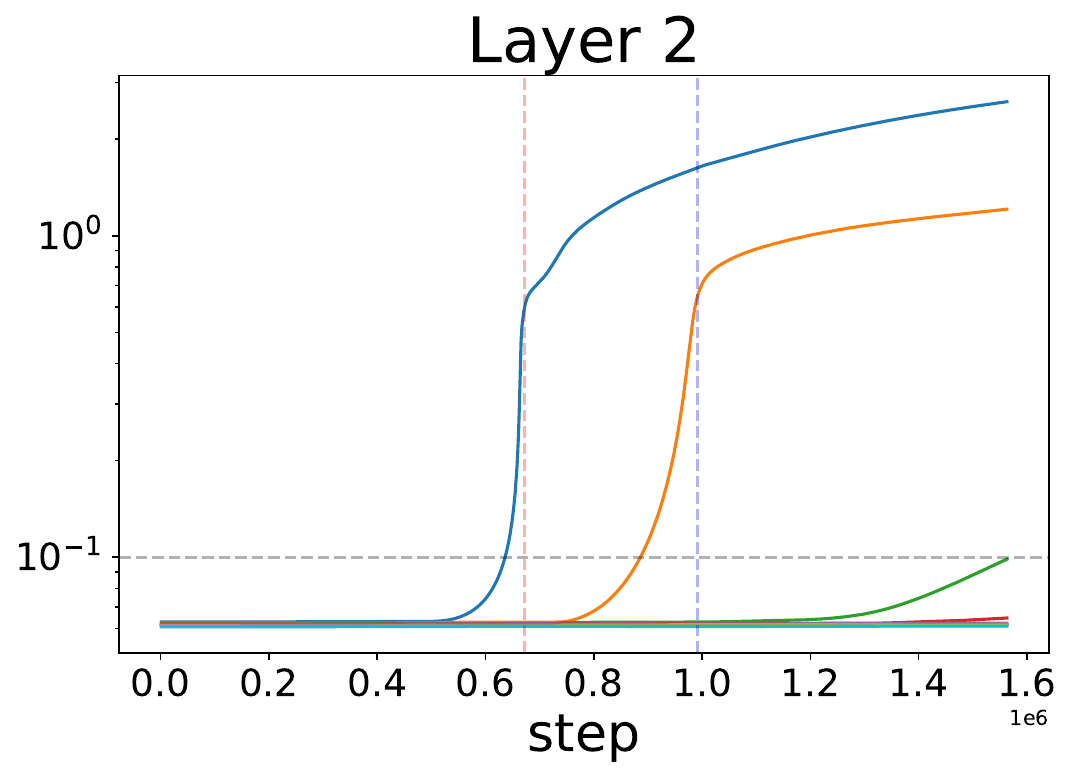}
    \end{subfigure}

    \vspace{2mm} 

    \begin{subfigure}[t]{0.49\textwidth}
        \includegraphics[width=\linewidth]{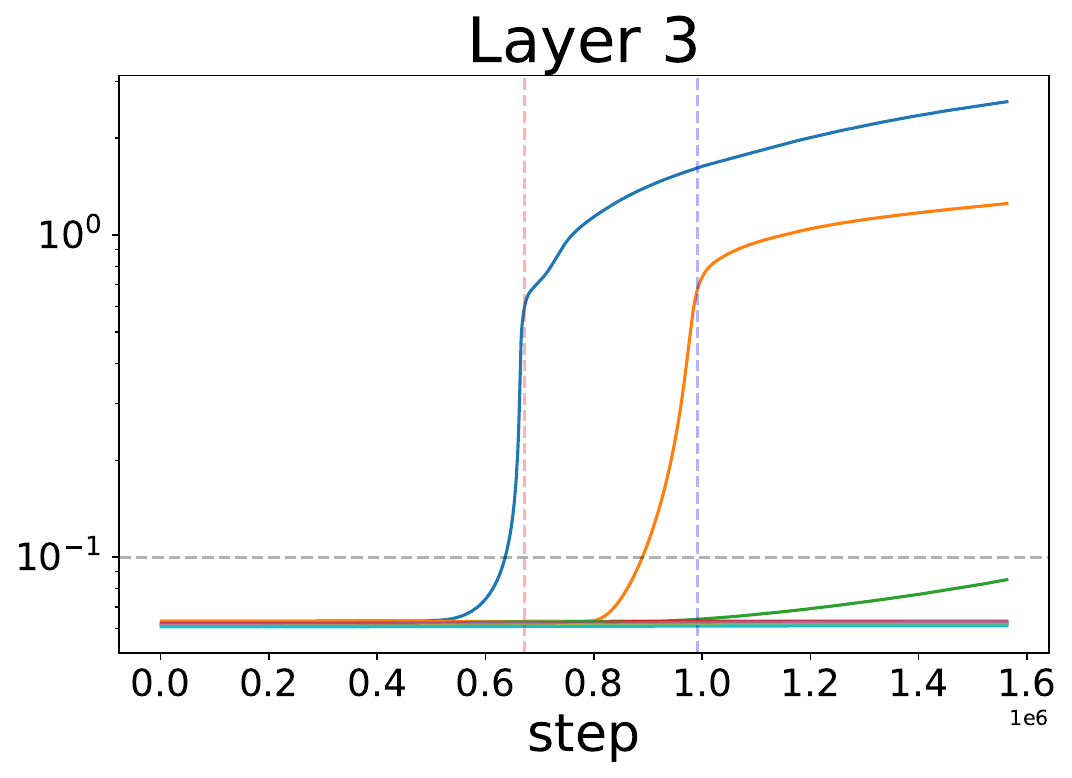}
    \end{subfigure}\hfill
    \begin{subfigure}[t]{0.49\textwidth}
        \includegraphics[width=\linewidth]{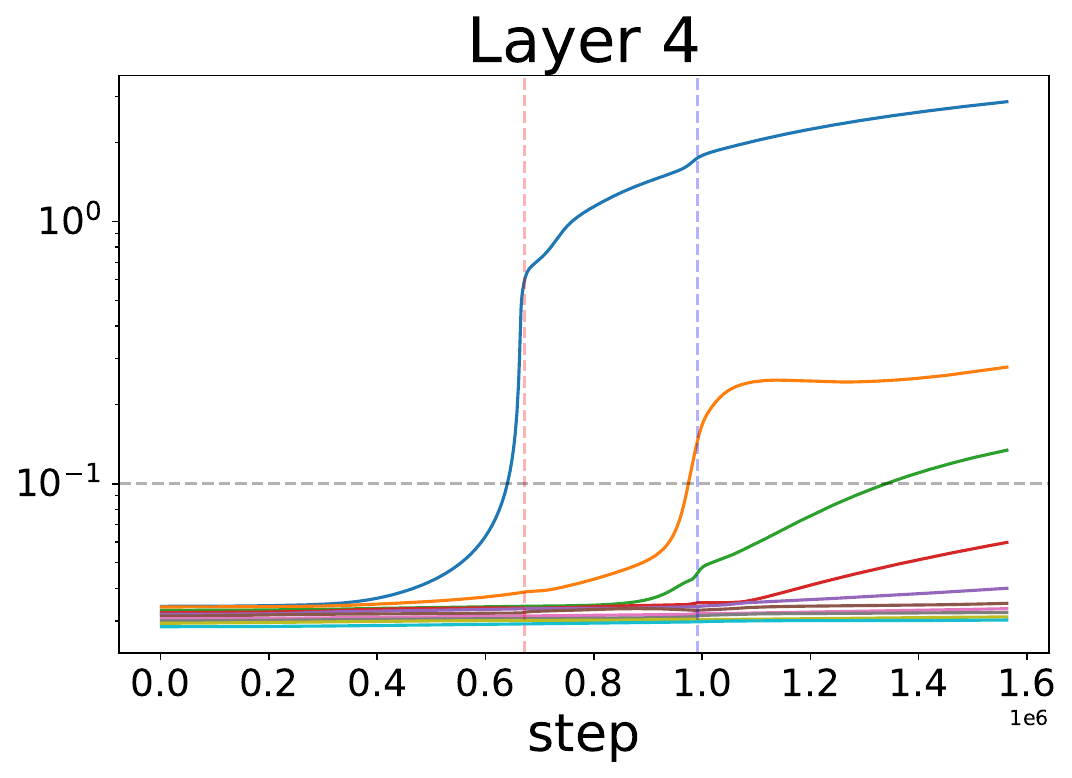}
    \end{subfigure}
    \vspace{2mm}

    \begin{subfigure}[t]{0.49\textwidth}
        \includegraphics[width=\linewidth]{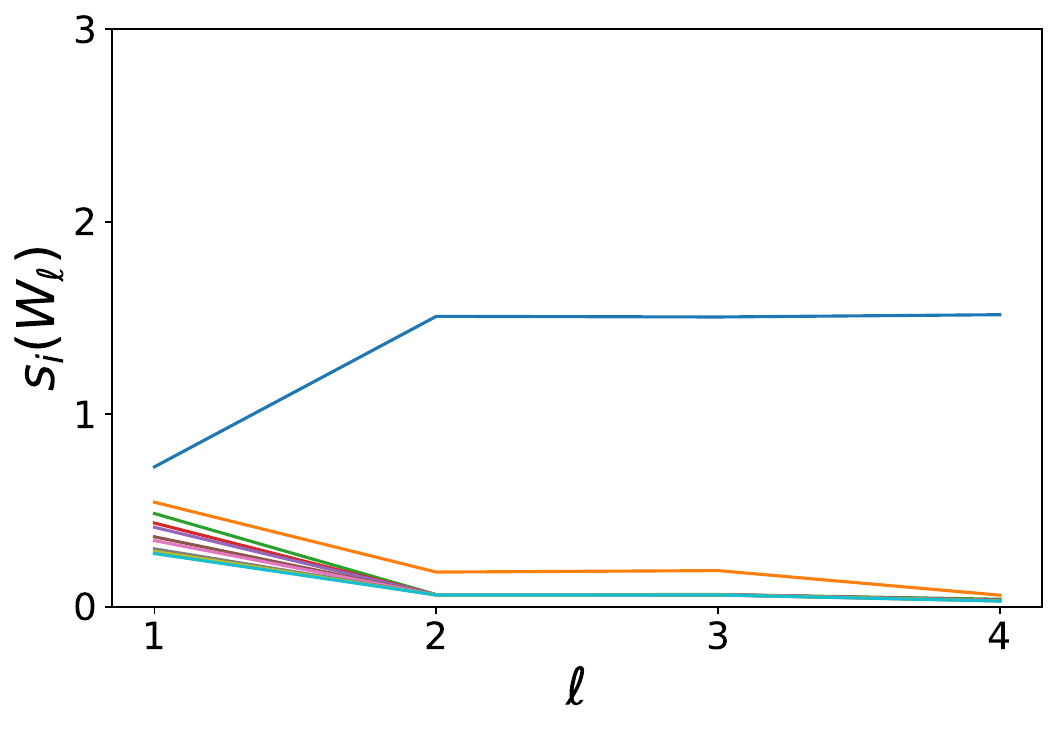}
    \end{subfigure}\hfill
    \begin{subfigure}[t]{0.49\textwidth}
        \includegraphics[width=\linewidth]{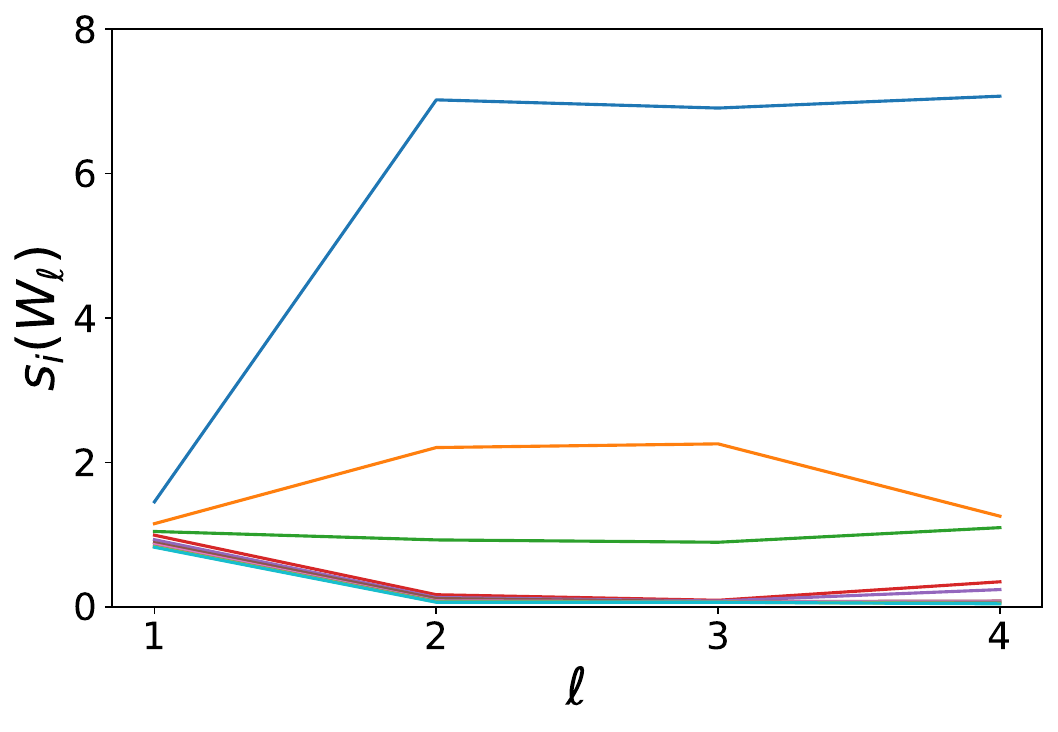}
    \end{subfigure}
    \vspace{2mm}

    \begin{subfigure}[t]{0.49\textwidth}
        \includegraphics[width=\linewidth]{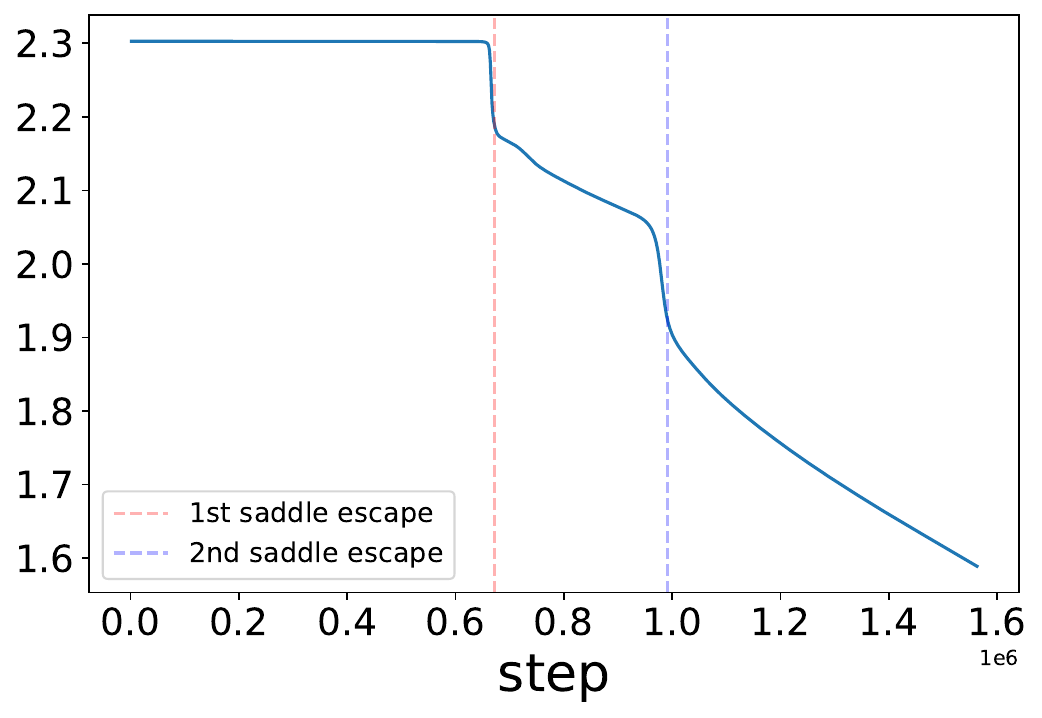}
    \end{subfigure}\hfill
    \begin{subfigure}[t]{0.49\textwidth}
        \includegraphics[width=\linewidth]{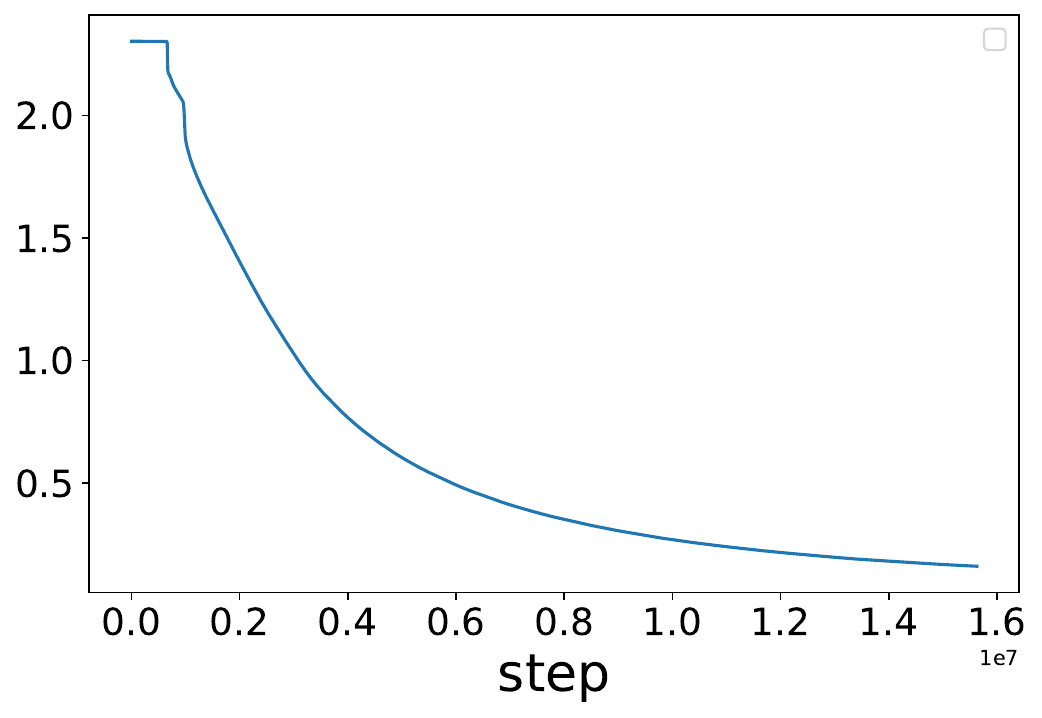}
    \end{subfigure}

    \caption{
    \label{fig:layer_sv_and_loss_CIFAR_four_layers}\textbf{Depth-4 MLP with small initialization on CIFAR-10.}
    \textbf{Top two rows:} Top 10 singular values of the weight matrices for layers 1–4 including input and output layer over training time in logarithmic scale during the initial 1000 epochs.
    \textbf{Third Row}: Top 10 singular values of the weight matrices per layer $\ell$ for layers 1–4 including input and output layer after the first saddle escape and at the end of training.
    \textbf{Bottom:} Training loss curves on CIFAR-10. \textbf{Left:} The initial 1000 epochs. \textbf{Right:} The full training run of 10000 epochs.}
    \vspace{-4mm}
\end{figure}

\newpage
\section{Supporting material for Section \ref{sec:counterexample}}
\label{app:counterexample}

\subsection{Finding the maximal rank-one escape speed}
\label{subsec:1d_opt}

Picking up the argument from the proof sketch of Example \ref{example:counterexample}, we have a network function equal to $f(X) = \pm \sigma(W_1 X)$, where $W_1 = [\cos(\phi), \sin(\phi)]$ and the sign is chosen to give a positive escape speed.
Applied to the dataset of Example \ref{example:counterexample} and noting that at most four points will have nonzero function value at a given time, one finds an escape speed is equal to
\begin{equation} \label{eqn:speed_by_phi}
    s = \left| \cos(\xi + \frac{\pi}{4}) - \cos(\xi) + \cos(\xi - \frac{\pi}{4}) - \cos(\xi - \frac{\pi}{2}) \right|,
\end{equation}
where $\xi = \phi \ \ \mathrm{mod}(\frac{\pi}{4})$.
See Figure \ref{fig:speed_fn_of_angle} for a depiction of this periodic function.
Its maximal value of $s = \sqrt{2} - 1$ falls at multiples of $\frac{\pi}{4}$.

\begin{figure}[ht]
    \centering
    \includegraphics[width=10cm]{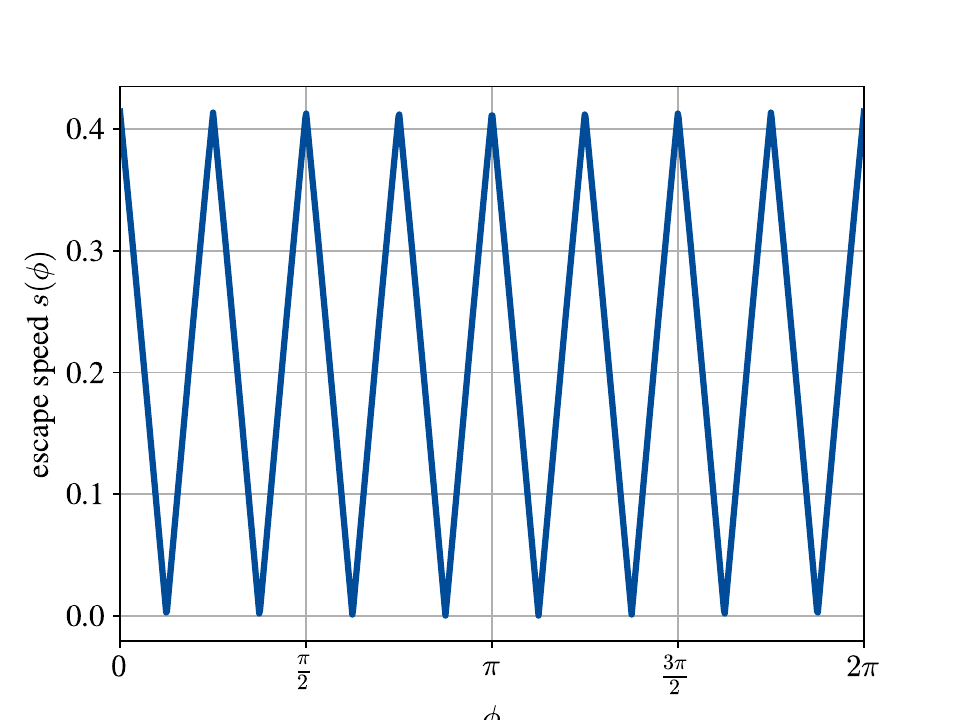}
    \caption{Visualization of Equation \ref{eqn:speed_by_phi}.
    }
    \label{fig:speed_fn_of_angle}
    \vspace{-4mm}
\end{figure}

\begin{figure}[ht]
    \centering
    \includegraphics[width=\textwidth]{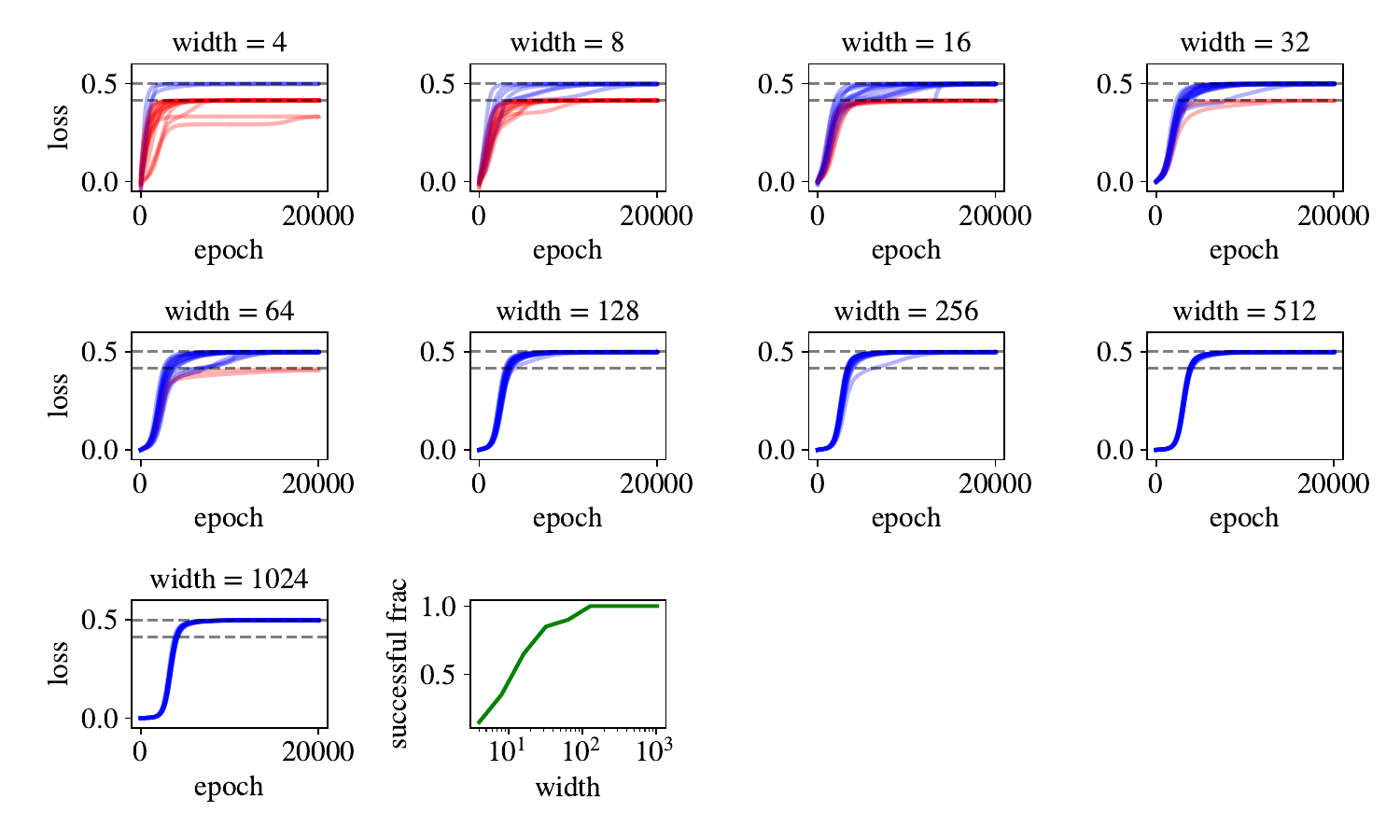}
    \caption{Visualization of all training runs of projected gradient descent on Example \ref{example:counterexample}.
    This plot shows all training runs in the experiment of Figure \ref{fig:counterexample}.
    }
    \label{fig:all_counterexample_runs}
    \vspace{-4mm}
\end{figure}

\end{document}